\documentclass[11pt,a4paper]{article}

\usepackage[margin=1in]{geometry}

\usepackage{xcolor}

\usepackage[utf8]{inputenc}

\usepackage[T1]{fontenc}
\usepackage{lmodern}
\usepackage[round]{natbib}

\usepackage{amsmath,amssymb,amsfonts,amsthm,mathtools}
\usepackage{booktabs,tabularx}
\usepackage{enumitem}
\usepackage{float,comment}
\usepackage[protrusion=true,expansion=false]{microtype}
\usepackage[hidelinks]{hyperref}
\newfloat{algorithm}{tbp}{loa}
\floatname{algorithm}{Algorithm}
\floatstyle{ruled}
\restylefloat{algorithm}
\newtheorem{theorem}{Theorem}
\newtheorem{lemma}[theorem]{Lemma}
\newtheorem{proposition}[theorem]{Proposition}

\newtheorem{assumption}[theorem]{Assumption}

\newcommand{\E}{\mathbb{E}}
\newcommand{\Prob}{\mathbb{P}}
\newcommand{\R}{\mathbb{R}}
\newcommand{\F}{\mathcal{F}}
\newcommand{\G}{\mathcal{G}}
\newcommand{\I}{\mathcal{I}}
\newcommand{\Reg}{\mathcal{R}}
\newcommand{\one}{\mathbf{1}}
\newcommand{\eps}{\varepsilon}
\newcommand{\norm}[1]{\left\lVert #1\right\rVert}
\newcommand{\abs}[1]{\left|#1\right|}

\newcommand{\clip}[1]{\operatorname{clip}_{[-B_u,B_u]}\!\left(#1\right)}
\newcommand{\lam}{\lambda}
\newcommand{\thetahat}{\widehat\theta}
\newcommand{\lambdahat}{\widehat\lambda}
\newcommand{\lambdabar}{\overline\lambda}
\newcommand{\bhat}{\widehat b}
\newcommand{\trans}{\mathsf T}

\hypersetup{
  pdftitle={Logarithmic Regret via Passive Change Detection in Piecewise-Stationary Self-Tuning Regulation},
  pdfauthor={A.Ch. Madhusudanarao and Rahul Singh},
  pdfsubject={Self-tuning regulation and passive change detection},
  pdfkeywords={self-tuning regulation, ARX systems, piecewise stationarity, change-point detection, regret}
}

\begin{document}

\title{Logarithmic Regret via Passive Change Detection in Piecewise-Stationary Self-Tuning Regulation}
\author{A.Ch. Madhusudanarao\\
\small Department of Computer Science and Automation\\
\small Indian Institute of Science, Bengaluru, India\\
\small \texttt{madhusudanar@iisc.ac.in}
\and
Rahul Singh\\
\small Laboratoire de Recherche de l'EPITA, Paris, France\\
\small \texttt{rahulsingh0188@gmail.com}}
\date{}
\maketitle

\begin{abstract}
We study minimum-variance control of an unknown autoregressive system with exogenous inputs and coefficients that change at unknown times. Under bounded independent disturbances, fixed detection gaps, stability and feasibility conditions, and sufficient time between changes, we prove \(O((C+1)\log((T+1)/\delta))\) regret with probability at least \(1-\delta\), where \(T\) is the horizon and \(C\) the number of changes. Unlike switching bandits, where unselected arms can change unobserved, admissible plant changes provide information during exploitation: the correct feasible controller leaves only the disturbance in the output, whereas a detectable change raises output energy under the old controller. PIECE-CD explores initially and after alarms, then uses gated recursive least squares for control. Its energy test compares windowed output power with a threshold above the noise floor; the extension to unstable controller mismatches also monitors the reference controller's input proposal. We control false alarms across the horizon and prove logarithmic detection delay. Inputs are clipped to prescribed bounds. Logarithmic regret also holds under an explicit condition ensuring that clipping becomes inactive after a finite burn-in. Under the stated feasibility conditions, the extended detector covers destabilizing changes with detectable excess energy over a fixed window.
\end{abstract}

\section{Introduction}
\label{introduction1}

Self-tuning regulation (STR) seeks minimum-variance control of an unknown
autoregressive system with exogenous inputs (ARX). The ideal controller
cancels the predictable part of the next output, leaving only the
disturbance. Classical work established stability and asymptotic
efficiency
\citep{goodwin1981discrete,becker1985adaptive,kumar1987self,lai1986asymptotically,lai1987asymptotically};
\citet{singhfinite} obtained finite-time logarithmic regret for a stationary
plant. We ask whether this rate survives unknown changes in the plant
coefficients. A controller must detect these changes and identify the new
plant, while avoiding repeated exploration caused by false alarms.

The key observation is that minimum-variance control can supply evidence of
a change during exploitation. With the correct feasible controller,
$y_{t+1}=w_{t+1}$, so output power equals the disturbance variance
$\sigma_w^2$. A change that raises output power under the old controller
can therefore be detected from ordinary control observations. This differs
from switching bandits, where an unselected arm can change unobserved and
even a two-breakpoint class has an $\Omega(\sqrt T)$ minimax lower bound
\citep{garivier2011upper}. The distinction is informational: our guarantee
requires a fixed detectable energy gap and sufficient time between
changes, and does not apply to arbitrary nonstationarity.

Our regret measures the predictable output energy left uncancelled by the
learner; its expectation equals the expected excess output energy above
the disturbance level
(\eqref{eq:a-regret}--\eqref{eq:a-regret-expectation}).

Our algorithm, PIECE-CD, extends Probing Inputs for Exploration in
Certainty Equivalence (PIECE). It explores initially and after an alarm,
then controls with gated recursive least squares (RLS) around an estimated
reference gain. Its principal version combines an output-energy test with
a boundary alarm on the reference controller's requested input. The latter
controls clipping errors when a changed plant is driven by the old gain.
Outputs remain dependent, and alarm times depend on the observed history;
both features matter for the guarantees.

\paragraph{Contributions.}
\begin{enumerate}[leftmargin=1.5em,itemsep=0.25em,topsep=0.3em]
\item \textbf{A common detector and logarithmic regret.}
Theorem~\ref{thm:a-stability-agnostic} proves no false alarms,
logarithmic detection delay, and
$\Reg_T=O((C+1)\log((T+1)/\delta))$ with probability at least $1-\delta$,
where $C$ is the number of changes. Under own-regime feasibility, the
theorem covers stable mismatches
with a stationary power gap and potentially unstable mismatches with a
finite-window energy gap, without identifying which condition applies.
For the compact minimum-phase ARX class, non-Schur mismatches imply a
uniform finite-window gap. The rate holds for fixed model, feasibility,
and detectability bounds under the stated dwell-time requirements.

\item \textbf{Identification and control after random restarts.}
We establish excitation and estimation bounds for complete exploration
blocks conditional on the history at each restart. An RLS potential bound
controls both accepted RLS actions and reference fallback actions.
Chronological failure-event bounds combine these results with detection
without assuming independent episodes or conditioning on future alarms.

\item \textbf{Supporting guarantees under stable mismatches.}
An energy-only baseline isolates the stationary-gap argument
(Theorem~\ref{thm:a-main}). A separate extension replaces uniform
clipping feasibility by a steady-state slack condition and a finite
burn-in, preserving logarithmic regret
(Theorem~\ref{thm:a-nonclipping}).
\end{enumerate}

\subsection{Related Work}

Our closest predecessor is stationary PIECE \citep{singhfinite}.
Its covariance analysis treats sparse, recurring exploration episodes.
Here the exploration phase is a contiguous block after a data-dependent
restart. This requires conditional excitation guarantees and a control
analysis that remains valid before the next change or alarm, together with
a detector for the dependent observations generated by feedback.

Detect-and-restart methods are well established in piecewise-stationary
bandits, including CUSUM-based CD-UCB \citep{liu2018change}, M-UCB
\citep{cao2019nearly}, GLR-klUCB \citep{besson2022efficient}, and the modular
DAB framework \citep{huang2026detection}. Their exploration schedules
address changes hidden by partial feedback. Additional structural
information can permit logarithmic gap-dependent bounds, as when all arms
change simultaneously \citep{mukherjee2019distribution}. PIECE-CD similarly
uses a restricted detectable class; its monitoring signal comes from
outputs observed during exploitation. Sequential detection provides the
statistical background
\citep{veeravalli2014quickest,xie2021sequential,tartakovsky2014sequential}.

Fully unknown online linear-quadratic regulation has square-root local
minimax regret \citep{simchowitz2020naive}. STR instead has no positive
input penalty, and its ideal controller leaves only the innovation in the
output. This property makes the noise floor an observable reference for
detecting the changes admitted here.

\section{MODEL AND SCOPE}
\label{sec:model1}

We consider a scalar ARX system of known orders $p,q\ge1$
whose coefficients change $C$ times over a horizon $T$.
Let
$0=\tau^{(0)}<\tau^{(1)}<\cdots<\tau^{(C)}
<\tau^{(C+1)}=T$
denote the deterministic segment boundaries, with the $C$ interior
boundaries unknown. On segment $c\in\{0,\ldots,C\}$,
\begin{align}
y_{t+1}
 &=\sum_{i=1}^{p}a_i^{(c)}y_{t-i+1}
   +\sum_{j=1}^{q}b_j^{(c)}u_{t-j+1}+w_{t+1},
   \label{eq:arx1}\\[-0.3em]
&\hspace{6em}\tau^{(c)}\le t<\tau^{(c+1)}. \notag
\end{align}
Here $y_t$ is the output, $u_t$ the control input, and $w_t$
the disturbance. The unknown parameter vector is constant
within each segment:
\begin{align}
\theta^{(c)}
 &=(a_1^{(c)},\ldots,a_p^{(c)},
    b_1^{(c)},\ldots,b_q^{(c)})^{\trans}. \notag
\end{align}
Write $\theta_t=\theta^{(c)}$ for $\tau^{(c)}\le t<\tau^{(c+1)}$.

We use Euclidean vector norms, induced spectral matrix norms, and
$\norm{x}_M^2=x^{\trans}Mx$.
We use two regressors:
\begin{align}
\psi_t
 &=(y_t,\ldots,y_{t-p+1},
    u_{t-1},\ldots,u_{t-q+1})^{\trans}, \notag\\
\phi_t
 &=(y_t,\ldots,y_{t-p+1},
    u_t,\ldots,u_{t-q+1})^{\trans}. \notag
\end{align}
The control regressor $\psi_t\in\R^d$, with $d=p+q-1$,
contains quantities available before $u_t$ is chosen.
The identification regressor $\phi_t\in\R^{p+q}$ additionally
includes $u_t$, so that
$y_{t+1}=(\theta^{(c)})^{\trans}\phi_t+w_{t+1}$.
The lagged-input block in $\psi_t$ is empty when $q=1$.

For a generic parameter vector
$\theta=(a_1,\ldots,a_p,b_1,\ldots,b_q)^{\trans}$
with $b_1\ne0$, define the minimum-variance gain
\begin{align}
\lambda(\theta)
 &:=-\frac1{b_1}
 (a_1,\ldots,a_p,b_2,\ldots,b_q)^{\trans}.
 \label{eq:gain1}
\end{align}
On segment $c$,
\[
y_{t+1}=b_1^{(c)}\{u_t-\lambda(\theta^{(c)})^{\trans}\psi_t\}+w_{t+1}.
\]
When feasible, the ideal action
$u_t^\star=\lambda(\theta^{(c)})^{\trans}\psi_t$ therefore leaves
$y_{t+1}=w_{t+1}$. We measure learning by the predictable output energy
\begin{align}
\Reg_T:=\sum_{t=0}^{T-1}(y_{t+1}-w_{t+1})^2.
\label{eq:a-regret}
\end{align}
Let $\mathcal F_t$ include the observations and randomization available
through the choice of $u_t$. Then $v_{t+1}:=y_{t+1}-w_{t+1}$ is
$\mathcal F_t$-measurable and
$\E[v_{t+1}w_{t+1}\mid\mathcal F_t]=0$. Hence
\begin{align}
\E\Reg_T=\E\sum_{t=0}^{T-1}(y_{t+1}^2-w_{t+1}^2).
\label{eq:a-regret-expectation}
\end{align}
This is an expectation identity; the realized excess output cost also
contains $2\sum_t v_{t+1}w_{t+1}$. Our high-probability bounds concern
$\Reg_T$.

We first state the conditions for the stable-mismatch baseline. The
extension in Section~\ref{sec:a-relaxations} replaces its mismatch
and feasibility conditions; the model, noise, and open-loop switching
conditions remain in force. Appendices~\ref{sec:model}--\ref{sec:algorithm}
give the corresponding constants and exploration conditions.
\begin{assumption}[Baseline conditions]
\label{ass:a-standing}
The controller knows a compact class
$\Theta\subset\R^{p+q}$ containing every $\theta^{(c)}$.
Each $\theta\in\Theta$ has a Schur-stable AR polynomial,
a minimum-phase input polynomial, and
$0<b_{\min}\le b_1\le\overline b$.
The autonomous output dynamics are uniformly exponentially
stable under admissible switching.
The disturbances are i.i.d., independent of the available
past, and satisfy
$\E w_t=0$, $\E w_t^2=\sigma_w^2>0$, and $|w_t|\le B_w$,
where $\sigma_w^2$ and $B_w$ are known.

Uniform clipping feasibility holds:
$|\lambda(\theta)^{\trans}\psi|\le B_u$
for every $\theta\in\Theta$ and every regressor $\psi$
reachable under admissible switching, bounded disturbances,
and inputs in $[-B_u,B_u]$.

For each change $c\in\{1,\ldots,C\}$, let
$\theta^-=\theta^{(c-1)}$ and $\theta^+=\theta^{(c)}$.
The unclipped loop obtained by applying the old gain
$\lambda(\theta^-)$ to the new plant $\theta^+$
has matrix $A_\Delta:=A(\theta^-,\theta^+)$, where
$A(\theta^-,\theta^+):=A_{\rm cl}(\lambda(\theta^-),\theta^+)$, satisfying
$\norm{A_\Delta^k}\le C_\Delta\rho_\Delta^k$ for all $k\ge0$,
with known common $C_\Delta<\infty$ and $\rho_\Delta\in(0,1)$.
Its stationary output power satisfies
\begin{align}
J_y(\lambda(\theta^-);\theta^+)
 \ge \sigma_w^2+\Delta
 \label{eq:a-gap}
\end{align}
for a known fixed $\Delta>0$. Here
$J_y=\sigma_w^2\sum_{k\ge0}(e_1^\trans A_\Delta^ke_1)^2$,
where $e_1$ is the first coordinate vector in $\R^d$.

For exploration length $H$ and delay bound $D$ defined below,
the dwell conditions, when $C\ge1$, are
\begin{align}
\tau^{(1)}&\ge H,\qquad T-\tau^{(C)}\ge D,\notag\\
\tau^{(c+1)}-\tau^{(c)}&\ge D+H\quad(1\le c<C).
\label{eq:a-dwell}
\end{align}
The final exploration may be truncated at $T$.
\end{assumption}

Stability and detectability serve different purposes. Stability controls
the discrepancy between the learner and the exact old-gain loop; the power
gap supplies the detection signal. A parameter change preserving the
minimum-variance gain can remain at the noise floor and is excluded by the
gap condition. The algorithm uses known model, noise, excitation, and
gap bounds for tuning, while the segment parameters and change times are
unknown. The alternative feasibility conditions are stated with their
respective results in Sections~\ref{sec:a-relaxations}
and~\ref{sec:a-nonclipping}.

Switching stability and clipped inputs give known deterministic bounds
$|y_t|\le B_Y$ and $\norm{\psi_t}\le B_\psi$; their explicit values are
given in \eqref{eq:Bpsi}.

\section{PIECE-CD}
\label{sec:a-algorithm}

We first specify the energy-only baseline; Section~\ref{sec:a-relaxations}
adds the boundary alarm for potentially unstable mismatches.
At initialization and after each alarm, PIECE-CD estimates a fixed
reference gain from $H$ exploration rounds. During exploitation, a gate
keeps RLS actions close to that reference while the detector monitors
output energy. Use $\delta_0=\delta/[32(T+1)]$ for estimation and RLS
and $\delta_d=\delta/2$ for detection.

\paragraph{Exploration and the reference controller.}
Let $r$ denote the start time of the current exploration
phase, with $r=0$ at initialization. After an alarm upon
observing $y_{t+1}$, the next exploration phase starts at
$r=t+1$. For $t=r,\ldots,r+H-1$, draw i.i.d.\
mean-zero inputs from a prescribed distribution with
positive variance and support in $[-B_u,B_u]$,
independently of the disturbances.
Let $r_0=\max\{p,q\}$, choose $H>r_0$, and retain the rows
$\mathcal I_r=\{r+r_0,\ldots,r+H-1\}$, whose lagged
measurements all follow the restart.

Fix a ridge parameter $\nu>0$. Write $I_k$ for the
$k\times k$ identity matrix and $\Pi_\Theta$ for a Euclidean
nearest-point projection onto $\Theta$. After completing
exploration, compute
\begin{align}
V_r^\theta
 &=\nu I_{p+q}
   +\sum_{t\in\mathcal I_r}\phi_t\phi_t^{\trans}, \notag\\
\widetilde\theta_r
 &=(V_r^\theta)^{-1}
   \sum_{t\in\mathcal I_r}\phi_ty_{t+1}, \notag\\
\widehat\theta_r
 &=\Pi_\Theta(\widetilde\theta_r),
 \qquad \widehat b_r=b_1(\widehat\theta_r).
 \label{eq:a-ridge}
\end{align}
The reference gain
$\overline\lambda_r=\lambda(\widehat\theta_r)$ and
$\widehat b_r$ remain fixed until the next restart.

The computable parameter and reference-gain error bounds
$E_\theta(H,\delta_0)$ and $E_\lambda(H,\delta_0)$ are
defined in \eqref{eq:Etheta}--\eqref{eq:gain-error-bound}.
For the prescribed choice of $H$ and a complete exploration block
within segment $c$, with probability at least $1-2\delta_0$ conditional on
the restart history,
\begin{align}
\norm{\widehat\theta_r-\theta^{(c)}}
 &\le E_\theta(H,\delta_0), \notag\\
E_\theta(H,\delta_0)
 &=O\!\left(\sqrt{\frac{\log((T+1)/\delta)}{H}}\right).
 \label{eq:a-est-rate}
\end{align}
This reference accuracy sets the gate tolerance:
$g_H=4E_\lambda(H,\delta_0)$.
Write $\eps_u=5E_\lambda(H,\delta_0)B_\psi$ for the resulting
reference-tracking tolerance used in the analysis.

\paragraph{Gated control and recursive estimation.}
Let $\widehat\lambda_t$ denote the current RLS gain estimate
and $V_t$ its regularized $d\times d$ Gram matrix.
Initialize both from the exploration data:
\[
V_{r+H}=\nu I_d+\sum_{t\in\mathcal I_r}\psi_t\psi_t^{\trans},
\qquad
\widehat\lambda_{r+H}=\overline\lambda_r.
\]
At each subsequent control round, form the input proposal
$z_t$ and clip it to the prescribed bounds:
\begin{align}
z_t&=\begin{cases}
\widehat\lambda_t^{\trans}\psi_t,
&|(\widehat\lambda_t-\overline\lambda_r)^{\trans}\psi_t|
 \le g_H\norm{\psi_t},\\
\overline\lambda_r^{\trans}\psi_t,&\text{otherwise},
\end{cases} \notag\\
u_t&=\min\{B_u,\max\{-B_u,z_t\}\}.
\label{eq:a-gate}
\end{align}
The gate limits departure from the reference action used for detection;
clipping enforces the prescribed input bound.

After observing $y_{t+1}$, construct the response
$x_{t+1}^\lambda$ and update RLS:
\begin{align}
x_{t+1}^\lambda
 &=u_t-\frac{y_{t+1}}{\widehat b_r},
&V_{t+1}
 &=V_t+\psi_t\psi_t^{\trans}, \notag\\
\widehat\lambda_{t+1}
 &=\widehat\lambda_t+V_{t+1}^{-1}\psi_t\cdot(x_{t+1}^\lambda-\widehat\lambda_t^{\trans}\psi_t).
\label{eq:a-rls}
\end{align}
RLS uses every exploitation observation, including rounds with a
reference fallback or an active clip. The analysis accounts for bias
from estimating $b_1^{(c)}$.

\paragraph{Monitoring and restarting.}
Let the positive integers $h$ and $m$ denote the sampling
interval and detection-window size, respectively, chosen
below. After exploration, fix the recording times
$s_j=r+H+jh$, $j=1,2,\ldots$.
At each $s_j$, store $y_{s_j}^2$ and retain the most recent
$m$ values. Once $j\ge m$, raise an alarm if
\begin{align}
\widehat J(s_j)
 :=\frac1m\sum_{i=j-m+1}^{j}y_{s_i}^2
 \ge \sigma_w^2+\frac{\Delta}{2}.
\label{eq:a-detector}
\end{align}
Control and RLS updates continue every round; only the energy test is
subsampled.
An alarm at $s_j$ clears the estimates and stored values,
sets $r=s_j$, and starts a new exploration phase.

\paragraph{Choosing the exploration and detection lengths.}
Choose the smallest integer $H>r_0$ satisfying
\eqref{eq:block-excitation-size}, \eqref{eq:H-rls}, and
\eqref{eq:H-detector} in Appendix~\ref{sec:algorithm}.
These conditions ensure informative exploration and
sufficiently accurate reference estimates for control
and detection.
Choose the smallest $h\ge1$ satisfying
\[
\frac{\sigma_w^2C_\Delta^2\rho_\Delta^{2h}}{1-\rho_\Delta^2}
\le\frac{\Delta}{8}.
\]
Recorded outputs remain dependent; their spacing permits concentration
around lag-$h$ conditional second moments.

Let $B_o$ be the known common bound in
\eqref{eq:detector-bounds} on disturbances, actual outputs,
and outputs of the new plant driven by the exact old gain
from the state at the change.
Set the window size $m$ and associated detection-delay
bound $D$ to
\begin{align}
m&=\left\lceil
 \frac{512B_o^4}{\Delta^2}
 \log\frac{4T^2}{\delta_d}\right\rceil,
&D&=(m+1)h.
\label{eq:a-tuning}
\end{align}
This window size controls fluctuations across the horizon's
repeated tests. For fixed problem constants, $h=O(1)$ and
$H,m,D=O(\log((T+1)/\delta))$.

\begin{algorithm}[H]
\caption{PIECE-CD: stable-mismatch baseline}
\label{alg:a-piece-cd}
\small
\begin{enumerate}[leftmargin=1.25em,itemsep=0.15em,topsep=0.2em]
\item Initialize $r=0$. Stop at $T$, including during an incomplete phase.
\item Explore for $H$ rounds; compute \eqref{eq:a-ridge}, the reference
gain, and the RLS initialization. Clear the detector buffer and fix
the sampling times $s_j=r+H+jh$.
\item At every exploitation round, apply \eqref{eq:a-gate}, observe
$y_{t+1}$, and update \eqref{eq:a-rls}. At sampling times, update the
buffer and test \eqref{eq:a-detector} once it is full.
\item If an alarm occurs on observing $y_{t+1}$, set $r=t+1$ and return
to step 2; otherwise continue step 3.
\end{enumerate}
\end{algorithm}

\section{FINITE-TIME ANALYSIS}
\label{sec:a-analysis}

The proof bounds detection delays and false alarms, then controls
prediction error on each unchanged exploitation interval. These bounds
yield the stable-mismatch baseline in Theorem~\ref{thm:a-main}.
Section~\ref{sec:a-relaxations} extends the detector analysis to
potentially unstable mismatches under alternative input feasibility.

\paragraph{Detection guarantee.}
Let $\mathcal E_{\rm ref}$ be the event that every reached, complete,
unchanged exploration block whose restart precedes the first detector
failure produces an accurate reference gain. Here a detector failure is
a false alarm or a change not detected by its $D$-round deadline; the
precise reference-event definition is given in
\eqref{eq:reference-event}.
Let $\mathcal E_{\rm det}$ be the event of no false alarms and exactly
one alarm per change. Writing $\widehat\tau^{(c)}$ for the output time
of the alarm corresponding to change $c$, this event also requires
\begin{align}
\tau^{(c)}<\widehat\tau^{(c)}\le\tau^{(c)}+D,
\qquad c=1,\ldots,C.
\label{eq:a-delay}
\end{align}

\begin{theorem}[Alarm correspondence]
\label{thm:a-detection}
Under the complete assumptions in Appendices~\ref{sec:model} and
\ref{sec:algorithm} and the tuning above,
\begin{align}
\Prob(\mathcal E_{\rm ref}\cap\mathcal E_{\rm det}^{c})
\le\delta_d.
\label{eq:a-gated-detection}
\end{align}
\end{theorem}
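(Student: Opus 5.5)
The plan is to bound two kinds of detector failure on $\mathcal E_{\rm ref}$, namely a false alarm and a change not detected by its deadline $D$, by a union over all test times $(s_j)$ in the horizon, at most $T$ per restart and hence $T^2$ in total. This count is where the $\log(4T^2/\delta_d)$ in $m$ comes from. The argument is chronological. I follow the process up to the first detector failure. Before that time every alarm corresponds to a change, and every restart is a complete, unchanged exploration block. The dwell conditions \eqref{eq:a-dwell} give this: after an alarm at most $D$ rounds past $\tau^{(c)}$, at least $H$ rounds remain before $\tau^{(c+1)}$. So on $\mathcal E_{\rm ref}$ the reference gain $\overline\lambda_r$ satisfies the $E_\lambda$ bound at each such restart. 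It therefore suffices to show that, for each fixed test index (restart history, $j$), the failure event intersected with "no earlier failure and accurate reference" has probability at most $\delta_d/(2T^2)$ or so.

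\emph{No false alarms.} Consider a window of $m$ recorded outputs $y_{s_i}$ inside an unchanged segment after an accurate restart. By the gate, $|z_t-\lambda(\theta^{(c)})^\trans\psi_t|\le \eps_u$ up to the reference error, and clipping feasibility keeps the ideal action inside $[-B_u,B_u]$. Hence $y_{t+1}=w_{t+1}+v_{t+1}$ with $|v_{t+1}|\le b_1\eps_u$, and conditions \eqref{eq:H-detector} on $H$ make $\E[y_{s_i}^2\mid\F_{s_i-1}]\le\sigma_w^2+\Delta/8$. I then apply Azuma--Hoeffding to the martingale differences $y_{s_i}^2-\E[y_{s_i}^2\mid\F_{s_i-1}]$. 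Each is bounded by $B_o^2$ in absolute value, so the range is at most $2B_o^2$. This gives $\Prob(\widehat J\ge\sigma_w^2+\Delta/2)\le\exp(-m(3\Delta/8)^2/(8B_o^4))$, which is at most $\delta_d/(4T^2)$ by the choice of $m$ with constant $512$.

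\emph{Detection within $D$.} After $\tau^{(c)}$ the loop runs the new plant with actions within $\eps_u$ of the old reference gain $\lambda(\theta^-)$. I compare with the exact old-gain loop $\tilde y$ started from the state at $\tau^{(c)}$. The difference is driven by inputs of size $O(\eps_u)$ through $A_\Delta$, so $|y_t-\tilde y_t|\le C_\Delta b_1\eps_u/(1-\rho_\Delta)$, and \eqref{eq:H-detector} makes this small enough that $y_t^2\ge\tilde y_t^2-\Delta/16$. Clipping introduces no extra error, since with an accurate gain feasibility keeps the proposal in bounds. For $\tilde y$ I condition on $\F_{s_i-h}$. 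The state contribution decays like $C_\Delta\rho_\Delta^h$, and the noise contribution from the last $h$ disturbances has variance at least $J_y-\sigma_w^2C_\Delta^2\rho_\Delta^{2h}/(1-\rho_\Delta^2)\ge\sigma_w^2+7\Delta/8$ by the choice of $h$; the mean term only adds a nonnegative square. So the lag-$h$ conditional second moments are at least $\sigma_w^2+3\Delta/4$. The first window lying entirely after $\tau^{(c)}$ ends by $\tau^{(c)}+(m+1)h=\tau^{(c)}+D$. The same Azuma bound, now with a lag-$h$ filtration obtained by splitting the window into $h$ interleaved martingales or directly since the sampling spacing is $h$, shows the window average exceeds $\sigma_w^2+\Delta/2$ except with probability $\delta_d/(4T^2)$.

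Windows straddling $\tau^{(c)}$ may alarm early, but any such alarm is still after $\tau^{(c)}$, which \eqref{eq:a-delay} allows. Summing over all indices gives $\delta_d$. I expect the main obstacle to be the rigorous handling of the data-dependent restart times and the "first failure" stopping. I will handle this by defining, for each deterministic pair (restart time $r$, index $j$), an event that is measurable with respect to the appropriate filtration, and taking a union bound over the at most $T^2$ pairs, rather than conditioning on future alarms.
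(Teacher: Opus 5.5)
This is essentially the paper's proof. It uses a gated union bound over the at most $T^2$ restart--endpoint pairs, made rigorous there through the alarm-suppressed continuation rather than by conditioning on survival. It bounds null windows using $v_s^2\le\eps_y^2\le\Delta/8$, and it couples to the exact old-gain auxiliary process, whose lag-$h$ conditional power is at least $\sigma_w^2+7\Delta/8$ on a window with $s_1-h\ge\tau^{(c)}$ completed before $\tau^{(c)}+(m+1)h$. Your choice to concentrate $y_s^2$ directly around $\E[y_s^2\mid\F_{s-1}]=\sigma_w^2+v_s^2$, instead of splitting off the $w_s^2$, cross, and bias terms, is a harmless simplification.

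Two small repairs are needed. First, the action error also enters the lagged-input coordinate of $\psi$ when $q\ge2$, so the coupling bound should use $G_u$ rather than $b_1$: it is $C_\Delta G_u\eps_u/(1-\rho_\Delta)=R_\Delta(H)$. With that constant, \eqref{eq:H-detector} gives $|y_t^2-\tilde y_t^2|\le\Delta/8$ (not $\Delta/16$), which still leaves the $\sigma_w^2+3\Delta/4$ margin you use. Second, clipping is harmless because uniform feasibility places the old ideal action $\lambda(\theta^-)^{\trans}\psi_t$ (not the proposal $z_t$) in $[-B_u,B_u]$, so nonexpansiveness of the projection applies.
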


\begin{proof}
See Theorem~\ref{thm:detection} and Appendix~\ref{app:detection}.
\end{proof}

This joint bound controls detector failure on accurate-reference
histories; reference-estimation failures enter
Theorem~\ref{thm:a-main} separately.

\paragraph{Why the detector works.}
Let $\mathcal F_t$ denote the information available when $u_t$ is
selected, including the controller's randomization and the selected
input. Suppose the current reference is accurate and the plant has not
changed since its exploration. The gate and input feasibility then give,
at each recorded output time $s$,
\begin{align}
y_s=w_s+v_s,\qquad |v_s|\le\varepsilon_y,
\label{eq:a-null-split}
\end{align}
where $v_s:=y_s-w_s$ is $\mathcal F_{s-1}$-measurable and
$\varepsilon_y$ is the deterministic output-tracking bound in
\eqref{eq:tracking-errors}. Thus $w_s$ is fresh noise relative to $v_s$.
The tuning makes $v_s^2$ small enough that concentration of $w_s^2$
and $2w_sv_s$ keeps unchanged windows below the threshold with high
probability.

After a change from $\theta^-$ to $\theta^+$ at control time $\tau$,
we compare the learner with an auxiliary process that applies the exact
old gain $\lambda(\theta^-)$ to the new plant. It starts from the
learner's regressor $\psi_\tau$ and uses the same future disturbances:
\begin{align*}
\psi_\tau^{\rm aux}&=\psi_\tau,\\
\psi_{t+1}^{\rm aux}
 &=A_\Delta\psi_t^{\rm aux}+e_1w_{t+1},
 \qquad t\ge\tau.
\end{align*}
Here $A_\Delta$ is the closed-loop matrix of the new plant under the old
gain, $e_1\in\R^d$ is the first coordinate vector, and
$y_s^{\rm aux}=e_1^{\trans}\psi_s^{\rm aux}$ is the auxiliary output;
see \eqref{eq:auxiliary}.
For a fixed calendar time $s$ with $s-h\ge\tau$, the fresh disturbances
over the preceding $h$ steps give
\begin{align}
\E[(y_s^{\rm aux})^2\mid\mathcal F_{s-h}]
&=(e_1^{\trans}A_\Delta^h\psi_{s-h}^{\rm aux})^2
 +\sigma_w^2\sum_{j=0}^{h-1}
       (e_1^{\trans}A_\Delta^je_1)^2
  \notag\\
&\ge\sigma_w^2+\frac{7\Delta}{8}.
\label{eq:a-finite-power}
\end{align}
The initial-state term is nonnegative, and the choice of $h$ loses at
most $\Delta/8$ from the stationary power gap.
On an accurate-reference history, the gate and stability also ensure
$|y_s^2-(y_s^{\rm aux})^2|\le\Delta/8$ until the next alarm or plant
change. With high probability, the average of $m$ fully post-change
auxiliary output squares lies at most $\Delta/8$ below its average
conditional power. Unless an alarm occurs earlier, the actual average
is therefore at least $\sigma_w^2+5\Delta/8$, above the threshold.
The sampling calendar supplies such a window by $\tau+D$; the
dwell-time condition prevents another change before this deadline.

Restart times depend on past observations. For each potential restart,
the proof uses a continuation with the same control and recording rules
but suppresses subsequent restarts. It agrees with the implemented
trajectory through the first alarm. Joint failure bounds over reached
histories give the global guarantee without conditioning on future
detector success; see Appendices~\ref{app:detection} and~\ref{app:global}.

With restarts and detection delays controlled, we next bound the RLS
regret accumulated during unchanged exploitation.

\begin{theorem}[Unchanged exploitation prefixes]
\label{thm:a-segment}
Suppose a full exploration block $[r,r+H)$ lies in one regime and its
Gram and estimation events hold. Under the baseline conditions and
tuning, conditional on the history at $r+H$, with probability at least
$1-\delta_0$, simultaneously for every exploitation prefix ending no
later than the next plant change, restart, or horizon,
\begin{align}
\sum_{t=r+H}^{r+H+n-1}(y_{t+1}-w_{t+1})^2
\le R_E(n,\delta_0).
\label{eq:a-segment}
\end{align}
Here $R_E$ is the explicit bound in \eqref{eq:segment-regret}.
For fixed model, noise, and exploration constants and the prescribed
choice of $H$, it satisfies
$R_E(T,\delta_0)=O(\log((T+1)/\delta))$.
\end{theorem}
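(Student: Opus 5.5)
Within an unchanged exploitation prefix the plant is fixed at $\theta=\theta^{(c)}$, so $y_{t+1}-w_{t+1}=b_1(u_t-\lambda^{\trans}\psi_t)$ with $\lambda=\lambda(\theta)$. The plan is to split the per-round prediction error $u_t-\lambda^{\trans}\psi_t$ into three pieces: an RLS estimation error, a bias from using $\widehat b_r$ instead of $b_1$, and a gate/clipping correction. We bound each piece using the exploration guarantee \eqref{eq:a-est-rate}, which we are allowed to condition on.

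\textbf{Step 1: the regression model solved by RLS.} Since $y_{t+1}=b_1(u_t-\lambda^{\trans}\psi_t)+w_{t+1}$, the response satisfies
\[
x^\lambda_{t+1}=u_t-\frac{y_{t+1}}{\widehat b_r}
=\lambda^{\trans}\psi_t+\Bigl(1-\frac{b_1}{\widehat b_r}\Bigr)(u_t-\lambda^{\trans}\psi_t)-\frac{w_{t+1}}{\widehat b_r}.
\]
So RLS regresses on $\psi_t$ with a martingale-difference noise $-w_{t+1}/\widehat b_r$. The noise is bounded by $B_w/b_{\min}$ and is $\mathcal F_t$-conditionally mean zero, because $\widehat b_r$ is fixed at $r+H$. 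There is also a bias term $\beta_t$ with $|\beta_t|\le (E_\theta/b_{\min})\,|u_t-\lambda^{\trans}\psi_t|$, which is a small multiple of the quantity we are bounding.

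\textbf{Step 2: a confidence ellipsoid.} Apply the self-normalized martingale bound (Abbasi-Yadkori style) with the initialization $V_{r+H}$ and $\widehat\lambda_{r+H}=\overline\lambda_r$. The initial error is controlled by $\norm{\overline\lambda_r-\lambda}\le E_\lambda$, and $V_{r+H}$ has minimum eigenvalue of order $H$ on the Gram event. The bias is absorbed via Cauchy--Schwarz into the sum $\sum\beta_t^2$. This gives, uniformly over all prefixes with probability $1-\delta_0$,
\[
\norm{\widehat\lambda_t-\lambda}_{V_t}^2\le \gamma_t
:=c_1\log\frac{\det V_t^{1/2}}{\det(\nu I)^{1/2}\delta_0}+c_2HE_\lambda^2+c_3E_\theta^2\sum_{s<t}(u_s-\lambda^{\trans}\psi_s)^2 .
\]
Uniformity over prefixes is automatic, since the bound is a stopping-time-uniform supermartingale bound. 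Prefixes ending at restarts are handled because those are stopping times.

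\textbf{Step 3: per-round error, by case.} Accepted RLS rounds without clipping give $|u_t-\lambda^{\trans}\psi_t|\le\norm{\widehat\lambda_t-\lambda}_{V_t}\norm{\psi_t}_{V_t^{-1}}$. Clipping only moves $u_t$ toward $[-B_u,B_u]$, and $\lambda^{\trans}\psi_t$ lies there by uniform clipping feasibility, so clipping never increases the error.

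Fallback rounds are the harder case. Here $|u_t-\lambda^{\trans}\psi_t|\le E_\lambda\norm{\psi_t}$. The gate failed, so $g_H\norm{\psi_t}<|(\widehat\lambda_t-\overline\lambda_r)^{\trans}\psi_t|\le|(\widehat\lambda_t-\lambda)^{\trans}\psi_t|+E_\lambda\norm{\psi_t}$. Hence $|(\widehat\lambda_t-\lambda)^{\trans}\psi_t|\ge 3E_\lambda\norm{\psi_t}$, and so $|u_t-\lambda^{\trans}\psi_t|\le\tfrac13|(\widehat\lambda_t-\lambda)^{\trans}\psi_t|$, which is again bounded by the same ellipsoid product. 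In every round, therefore,
\[
(u_t-\lambda^{\trans}\psi_t)^2\le\gamma_t\min\{\norm{\psi_t}^2_{V_t^{-1}},\,B_\psi^2/\lambda_{\min}(V_t)\}.
\]

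\textbf{Step 4: summation.} Sum using the elliptical potential lemma, $\sum\min\{1,\norm{\psi_t}^2_{V_t^{-1}}\}\le 2d\log(1+nB_\psi^2/(d\nu))$, together with $\norm{\psi_t}\le B_\psi$. Since $\lambda_{\min}(V_t)\gtrsim H$, each term is already small.

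The main obstacle is the self-referential bias term in $\gamma_t$. My plan is to write $S_n=\sum(u_t-\lambda^{\trans}\psi_t)^2$ and obtain $S_n\le \bar\gamma\,P_n+c_3E_\theta^2P_nS_n$, where $P_n$ is the potential sum. Because $P_n\le B_\psi^2 n/\lambda_{\min}(V_{r+H})$ is too crude, I would instead use the per-round bound $\norm{\psi_t}^2_{V_t^{-1}}\le B_\psi^2/(cH)$ in a time-local form. This means bounding the bias contribution inside the ellipsoid by $E_\theta^2 S_{t}$ weighted through $V_t^{-1}$. Condition \eqref{eq:H-rls} on $H$ should make the coefficient at most $1/2$, so the bias can be absorbed. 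This gives $S_n\le 2\bar\gamma\cdot 2d\log(1+nB_\psi^2/(d\nu))$. Multiplying by $\overline b^2$ yields $R_E(n,\delta_0)=O(\log(n/\delta_0)+HE_\lambda^2)$. Since $HE_\lambda^2=O(\log((T+1)/\delta))$ by \eqref{eq:a-est-rate}, this is $O(\log((T+1)/\delta))$.
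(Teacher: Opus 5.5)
Your Steps 1 and 3 match the paper. The pseudo-response decomposition $x^\lambda_{t+1}=\lambda^{\trans}\psi_t+\alpha e_t+\eta_{t+1}$ and the fallback charge $|e_t|\le\frac13|a_t|$ are exactly what the paper uses; clipping preserves that charge because the ideal action is feasible.

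\textbf{The gap: Steps 2 and 4.} Bounding each round by a confidence radius times a leverage, $e_t^2\le\gamma_t\norm{\psi_t}^2_{V_t^{-1}}$, and then summing the leverages cannot give a single logarithm. The radius $\gamma_t$ contains $c_1\log\{\det(V_t)^{1/2}/(\nu^{d/2}\delta_0)\}$, which is itself of order $\log((T+1)/\delta)$ because $\delta_0=\delta/[32(T+1)]$. The leverage sum $\sum_t\norm{\psi_t}^2_{V_t^{-1}}$ is bounded only by $O(\Gamma(n))=O(\log T)$. So the best your route gives is $S_n\lesssim\bar\gamma\,\Gamma(n)=O(\log((T+1)/\delta)\cdot\log T)$. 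Your last step, turning $2\bar\gamma\cdot 2d\log(\cdot)$ into $O(\log(n/\delta_0)+HE_\lambda^2)$, silently drops one logarithmic factor. Neither the additive $R_E(n,\delta_0)$ of \eqref{eq:segment-regret} nor its stated order follows.

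The self-referential bias has the same defect. After summation its coefficient is $c_3E_\theta^2\sum_t\norm{\psi_t}^2_{V_t^{-1}}$, which is of order $E_\theta^2\Gamma(n)$ and grows with $n$. By contrast, \eqref{eq:H-rls} only makes $E_\theta/b_{\min}\le1/16$, a fixed constant. With $H=O(\log((T+1)/\delta))$ this coefficient cannot be kept below $1/2$ for all $n\le T$. Your unspecified ``time-local form'' amounts to a discrete Gr\"onwall recursion $S_t+K\le(S_{t-1}+K)(1+c_3E_\theta^2\norm{\psi_t}^2_{V_t^{-1}})$. That recursion produces a factor $\exp\{O(E_\theta^2\Gamma(n))\}$, which is a power of $T$.

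\textbf{The missing idea.} Telescope a potential instead of multiplying a radius by a leverage. Set $Q_t=(\widehat\lambda_t-\lambda)^{\trans}V_t(\widehat\lambda_t-\lambda)$ and $\ell_t=\psi_t^{\trans}V_{t+1}^{-1}\psi_t$. The RLS update then gives the exact identity $Q_{t+1}-Q_t=-a_t^2+2a_t(\alpha e_t+\eta_{t+1})+\ell_t(\alpha e_t+\eta_{t+1}-a_t)^2$.
\begin{itemize}
\item The $-a_t^2$ term pays for each round's error as it occurs, so the bias is absorbed per step rather than accumulated. Specifically, $2\alpha a_te_t\le2\overline\alpha a_t^2$ and $\ell_t(\alpha e_t+\eta_{t+1}-a_t)^2\le2\overline\ell(1+\overline\alpha)^2a_t^2+2\ell_t\eta_{t+1}^2$. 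This is exactly where the two constants in \eqref{eq:H-rls} enter, giving $c_0>0.7$.
\item After summing, the cross term $\sum_ta_t\eta_{t+1}$ is controlled time-uniformly by Ville's inequality with the fixed $\xi=c_0/(2B_\eta^2)$. It is absorbed into $\frac{c_0}{4}\sum_ta_t^2+\frac{2B_\eta^2}{c_0}\log\frac1{\delta_0}$.
\item The noise term is at most $2B_\eta^2\sum_t\ell_t\le2B_\eta^2\Gamma(n)$.
\end{itemize}
The initial potential $Q_{r+H}\le\overline Q_H$, the confidence term $\log(1/\delta_0)$, and $\Gamma(n)$ therefore enter additively. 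That sum is precisely $R_E(n,\delta_0)$ and yields the single-logarithm order.
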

Theorem~\ref{thm:segment-regret} and Appendix~\ref{app:rls} give the
conditional, time-uniform bound. Its main mechanism is as follows.

\paragraph{RLS argument.}
Consider an unchanged exploitation interval in segment $c$ after successful
exploration, and write $\lambda=\lambda(\theta^{(c)})$ and
$b_\star=b_1^{(c)}$. Define the RLS proposal error and the applied action
error by
\[
a_t=(\widehat\lambda_t-\lambda)^{\trans}\psi_t,
\qquad
e_t=u_t-\lambda^{\trans}\psi_t.
\]
Reference accuracy, the gate, and feasibility of the ideal action imply
$e_t^2\le a_t^2$. Thus it suffices to control the cumulative RLS proposal
error, even on rounds that use the reference input or clipping.

The response used by RLS satisfies
\begin{align*}
x_{t+1}^{\lambda}
 &=\lambda^{\trans}\psi_t+\alpha e_t+\eta_{t+1},\\
\alpha&:=1-\frac{b_\star}{\widehat b_r},\qquad
\eta_{t+1}:=-\frac{w_{t+1}}{\widehat b_r}.
\end{align*}
Here $\alpha e_t$ is the predictable bias caused by estimating $b_\star$,
whereas $\eta_{t+1}$ is bounded and conditionally mean zero.
To track estimation progress, define
\begin{align*}
Q_t
 &=(\widehat\lambda_t-\lambda)^{\trans}
   V_t(\widehat\lambda_t-\lambda),\\
\ell_t
 &=\psi_t^{\trans}V_{t+1}^{-1}\psi_t.
\end{align*}
Exploration controls both $|\alpha|$ and $\ell_t$. Consequently, the exact
RLS identity gives
\begin{align}
Q_{t+1}-Q_t
 \le -c_0a_t^2+2a_t\eta_{t+1}
      +2\ell_t\eta_{t+1}^2,
\label{eq:a-potential}
\end{align}
with $c_0>0$ defined in \eqref{eq:regret-constants}.

With probability at least $1-\delta_0$, a time-uniform martingale bound
controls the cumulative mixed term by a fraction of
$c_0\sum_t a_t^2$ plus a confidence term. Bounded noise and the
log-determinant bound on $\sum_t\ell_t$ control the final term
logarithmically. Summing \eqref{eq:a-potential}, using $Q_t\ge0$
and $Q_{r+H}\le \overline Q_H$, therefore bounds $\sum_ta_t^2$ simultaneously
over every exploitation prefix. Finally,
\[
(y_{t+1}-w_{t+1})^2
 =b_\star^2e_t^2\le\overline b^2a_t^2,
\]
which yields Theorem~\ref{thm:a-segment}.
Appendix~\ref{app:rls} gives the identities and concentration details.

\begin{theorem}[Stable-mismatch regret guarantee]
\label{thm:a-main}
Under Assumption~\ref{ass:a-standing}, the exploration setup in
Section~\ref{sec:a-algorithm}, and the precise model and exploration
bounds in Appendices~\ref{sec:model}--\ref{sec:algorithm}, choose the
smallest feasible $H$ and the detector tuning \eqref{eq:a-tuning}.  Then, with probability at least
$1-\delta$, there are no false alarms, every change has exactly one alarm
satisfying \eqref{eq:a-delay}, and
\begin{align}
\Reg_T
 \le CK_DD+(C+1)K_XH+(C+1)R_E(T,\delta_0),          \label{eq:a-main-bound}
\end{align}
where the fixed per-step delay and exploration bounds $K_D,K_X$ are defined in
\eqref{eq:delay-cost}--\eqref{eq:exploration-cost}. Consequently, when all problem constants (model orders, model-class and
noise bounds, initial state, exploration and ridge settings, stability
constants, clipping threshold, and power gap) are held fixed as $T$ grows,
\begin{align}
\Reg_T=O\!\left((C+1)\log\frac{T+1}{\delta}\right). \label{eq:a-main-rate}
\end{align}
\end{theorem}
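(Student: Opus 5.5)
The plan is to combine Theorem~\ref{thm:a-detection} and Theorem~\ref{thm:a-segment} on a single good event and then split the horizon into three kinds of rounds. The good event is
\[
\mathcal G:=\mathcal E_{\rm ref}\cap\mathcal E_{\rm det}\cap\mathcal E_{\rm RLS}.
\]
Here $\mathcal E_{\rm RLS}$ is the event that the time-uniform bound \eqref{eq:a-segment} holds for every exploitation phase that starts after an accurate exploration block.

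\textbf{Probability of $\mathcal G$.} Every exploration or exploitation phase starts at a (possibly random) time $r\le T$. We therefore index the failure events by their calendar start time rather than by episode number; this is the chronological device already used for detection. For each $r\in\{0,\ldots,T\}$ there are two conditional bounds, both given the history at the restart:
\begin{itemize}
\item the Gram/estimation events of a complete exploration block fail with probability at most $2\delta_0$, by \eqref{eq:a-est-rate};
\item conditional on those events holding, the RLS bound of Theorem~\ref{thm:a-segment} fails with probability at most $\delta_0$.
\end{itemize}
Both bounds apply to the continuation process started at $r$, which agrees with the implemented trajectory whenever $r$ is actually a restart. A union bound over $r$ then gives $\Prob(\mathcal E_{\rm ref}^c\cup\mathcal E_{\rm RLS}^c)\le 3(T+1)\delta_0\le\delta/2$. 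This requires a careful definition of $\mathcal E_{\rm ref}$, restricted to reached complete unchanged blocks, as in \eqref{eq:reference-event}. Theorem~\ref{thm:a-detection} gives $\Prob(\mathcal E_{\rm ref}\cap\mathcal E_{\rm det}^c)\le\delta_d=\delta/2$. Hence $\Prob(\mathcal G^c)\le\delta$.

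\textbf{Regret decomposition on $\mathcal G$.} On $\mathcal G$ there are no false alarms, and each change $c$ is followed by exactly one alarm at some $\widehat\tau^{(c)}\in(\tau^{(c)},\tau^{(c)}+D]$. The dwell conditions \eqref{eq:a-dwell} then show two things:
\begin{itemize}
\item the exploration block $[\widehat\tau^{(c)},\widehat\tau^{(c)}+H)$ lies entirely inside segment $c$, or is truncated at $T$ in the last segment;
\item the initial block $[0,H)$ lies in segment $0$.
\end{itemize}
So $[0,T)$ splits into at most $C$ delay intervals $[\tau^{(c)},\widehat\tau^{(c)})$, each of length at most $D$; at most $C+1$ exploration blocks, each of length at most $H$; and at most $C+1$ unchanged exploitation intervals. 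Each exploitation interval starts after an accurate block and ends at the next change or at $T$.

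\begin{itemize}
\item \emph{Delay and exploration rounds.} Bound each summand crudely by the deterministic bounds $|y_{t+1}|\le B_Y$ and $|w_{t+1}|\le B_w$. This gives per-step constants $K_D$ and $K_X$, for instance $(B_Y+B_w)^2$, or the sharper forms in \eqref{eq:delay-cost}--\eqref{eq:exploration-cost}. The contribution is at most $CK_DD+(C+1)K_XH$.
\item \emph{Exploitation intervals.} Apply \eqref{eq:a-segment} with $n\le T$. Since $R_E$ is nondecreasing in $n$, each interval contributes at most $R_E(T,\delta_0)$.
\end{itemize}
Summing these gives \eqref{eq:a-main-bound}.

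\textbf{Rate.} With the problem constants fixed, $H,m,D=O(\log((T+1)/\delta))$ by the tuning, $h=O(1)$, and $R_E(T,\delta_0)=O(\log((T+1)/\delta))$ by Theorem~\ref{thm:a-segment}. Substituting into \eqref{eq:a-main-bound} yields \eqref{eq:a-main-rate}.

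\textbf{Main obstacle.} The delicate step is the probability accounting. Restart times are data-dependent, and $\mathcal E_{\rm ref}$ only concerns blocks reached before the first detector failure. The union bound must therefore be taken over calendar start times using continuation processes, so that we never condition on future alarms. We must also check that the segments considered in the RLS events are exactly the intervals that arise on $\mathcal E_{\rm det}$.
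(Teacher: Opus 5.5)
Your proposal is correct and follows essentially the same route as the paper. It uses the same calendar-indexed union bound, costing $3(T+1)\delta_0$ for Gram, estimation, and RLS failures at reached full unchanged blocks, plus the joint bound $\Prob(\mathcal E_{\rm ref}\cap\mathcal E_{\rm det}^c)\le\delta_d$, and then the same delay/exploration/exploitation partition and logarithmic orders of $H$, $D$, and $R_E$.

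Two small points need tightening. First, the RLS failure is conditioned on $\F_{r+H}$ rather than on the restart history, and your notion of an ``accurate'' block should include the Gram event, which Theorem~\ref{thm:a-segment} requires. Second, the stated constant $K_X=\overline b^2(B_e+\Lambda B_\psi)^2$ comes from the gain form \eqref{eq:gain-form}, not from the crude bound $(B_Y+B_w)^2$.
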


\begin{proof}
See Theorem~\ref{thm:main} and Appendices~\ref{app:global}--\ref{app:tuning}.
\end{proof}

\paragraph{Proof sketch of Theorem~\ref{thm:a-main}.}
The regret splits into three parts: each change costs at most $K_DD$ before
it is detected (per-step cost $K_D$ times delay $D$), each of the $C+1$
exploration blocks costs at most $K_XH$, and
Theorem~\ref{thm:a-segment} bounds the regret on the remaining
unchanged exploitation periods.  For the probability, each restart can fail
in three ways (insufficient excitation, inaccurate estimate, or RLS
failure), each with probability at most $\delta_0$, and at most $T+1$
restarts are possible.  Adding the detector failure probability $\delta_d$
gives a total of $3(T+1)\delta_0+\delta_d<\delta$.  Because restart times
are random, failures are counted in time order, each only if its restart is
actually reached; Appendix~\ref{app:global} gives the details.

\section{DETECTION UNDER STABLE AND UNSTABLE MISMATCHES}
\label{sec:a-relaxations}

The baseline theorem in Section~\ref{sec:a-analysis} assumes that
the new plant remains stable under the old controller. We now allow this
mismatched closed loop to be unstable. Its stationary power may then be
unavailable, and clipping can alter its output-energy signal. We address
these issues by converting both detectability conditions below into a
common finite-window gap and combining the energy test with an alarm on
the reference gain's input proposal. Fixed-window comparisons replace the
comparison over the full detection delay used in the stable analysis.

\paragraph{Two sources of detectable energy.}
Let $\mathcal P\subseteq\Theta^2$ be the prescribed class of admissible
adjacent parameter pairs. For an actual change
$z_c=(\theta^{(c-1)},\theta^{(c)})$, $c=1,\ldots,C$, write
\[
A_c:=A_{z_c}:=A(\theta^{(c-1)},\theta^{(c)}),
\]
the closed-loop matrix of the new plant under the old gain.
We cover $\mathcal P$ by two possibly overlapping subfamilies.
In the stable subfamily $\mathcal P_{\sf S}$, known common constants
$C_{\sf S}<\infty$, $\rho_{\sf S}\in(0,1)$, and
$\underline\Delta_{\sf S}>0$ satisfy
\begin{align}
\norm{A_c^k}&\le C_{\sf S}\rho_{\sf S}^k
\quad(k\ge0),\notag\\
J_y(\lambda(\theta^{(c-1)});\theta^{(c)})
&\ge\sigma_w^2+\underline\Delta_{\sf S}.
\label{eq:a-stable-class}
\end{align}
In the finite-visible subfamily $\mathcal P_{\sf F}$, a known common
$\underline\Delta_{\sf F}>0$ instead satisfies
\begin{align}
\mathcal V(z_c)
:=\sigma_w^2\sum_{j=1}^{d}(e_1^\trans A_c^je_1)^2
\ge\underline\Delta_{\sf F},
\label{eq:a-visible-class}
\end{align}
where $e_1\in\R^d$ is the first coordinate vector.
The coefficients $e_1^\trans A_c^je_1$ describe a disturbance's effect on
later outputs, so $\mathcal V(z_c)$ measures its excess energy over the first
$d$ lags. This condition permits unstable mismatches.
Every actual pair belongs to
$\mathcal P\subseteq\mathcal P_{\sf S}\cup\mathcal P_{\sf F}$; the algorithm
uses the class-wide bounds for tuning but never determines which subfamily
covers a particular change.

\paragraph{A common finite-window gap.}
For an integer $h\ge1$ and a $d\times d$ matrix $A$, define
\[
G_h(A):=\sigma_w^2\sum_{j=1}^{h-1}(e_1^\trans A^je_1)^2.
\]
It measures the excess variance supplied by the preceding $h-1$
disturbances. For stable pairs, the omitted tail is bounded by
$r_{\sf S}(h)$ below. When both subfamilies are nonempty, choose
\begin{align}
r_{\sf S}(h)&:=
\frac{\sigma_w^2C_{\sf S}^2\rho_{\sf S}^{2h}}
{1-\rho_{\sf S}^2},\notag\\[-0.2em]
h_{\sf S}&:=\min\{h\ge1:r_{\sf S}(h)\le
\underline\Delta_{\sf S}/8\},\notag\\[-0.2em]
h_{\sf U}&:=\max\{d+1,h_{\sf S}\},\qquad
\underline\Delta_{\sf U}:=
\min\{7\underline\Delta_{\sf S}/8,\underline\Delta_{\sf F}\}.
\label{eq:a-union-gap}
\end{align}
Appendix~\ref{app:relaxations} gives the convention when a subfamily is
empty. Since $h_{\sf U}\ge h_{\sf S}$ bounds the stable tail and
$h_{\sf U}\ge d+1$ includes every term in
\eqref{eq:a-visible-class},
\begin{align}
G_{h_{\sf U}}(A_c)
\ge\begin{cases}
7\underline\Delta_{\sf S}/8,&z_c\in\mathcal P_{\sf S},\\
\underline\Delta_{\sf F},&z_c\in\mathcal P_{\sf F},
\end{cases}
\ge\underline\Delta_{\sf U}.
\label{eq:a-union-energy}
\end{align}
Stable pairs therefore need not satisfy a separate first-$d$ visibility
condition: their stationary gap supplies the required finite-window gap.

\paragraph{The detector and its tuning.}
Algorithm~\ref{alg:stability-agnostic} retains PIECE-CD's exploration,
estimation, gated RLS updates, and restart rule. Its energy test records
one output square every $h_{\sf U}$ rounds and raises an alarm when the
average of the latest $m_{\sf U}$ recorded squares reaches
$\sigma_w^2+\underline\Delta_{\sf U}/2$.
Let $B_{\sf U}$ be the common bound on disturbances, actual outputs, and
auxiliary old-gain outputs over an $h_{\sf U}$-step comparison, defined in
\eqref{eq:finite-lag-bounds}. The auxiliary bound is finite even for an
unstable mismatch because each comparison lasts only $h_{\sf U}$ steps.
Set the number of recorded outputs and the detection-delay bound to
\begin{align}
m_{\sf U}&:=\left\lceil
\frac{512B_{\sf U}^4}{\underline\Delta_{\sf U}^2}
\log\frac{4T^2}{\delta_d}\right\rceil,
&D_{\sf U}&:=(m_{\sf U}+1)h_{\sf U}.
\label{eq:a-union-tuning}
\end{align}

To handle clipping, replace uniform feasibility by closed
\emph{own-regime feasibility} (Assumption~\ref{ass:own-regime-feasibility}).
Each ideal gain must respect the input bound on every regressor reached
after at least $r_0$ controls under its own plant, from any initial
regressor of norm at most $B_\psi$, with bounded disturbances and arbitrary
inputs in $[-B_u,B_u]$. Equality is allowed. This is an alternative
condition: it need not follow from baseline feasibility because it also
allows initially unreachable regressors. The old gain need not remain
feasible after a change. The additional test monitors the
reference gain's input proposal.
After observing $y_{t+1}$ and forming $\psi_{t+1}$, it raises an alarm if
\begin{align}
|\overline\lambda_r^\trans\psi_{t+1}|
>B_u+\zeta_H,\qquad
\zeta_H:=E_\lambda(H,\delta_0)B_\psi.
\label{eq:a-boundary-alarm}
\end{align}
With an accurate reference, own-regime feasibility prevents this alarm
while the plant is unchanged. After a change, before this alarm fires,
the old ideal input exceeds the allowed interval by at most $2\zeta_H$,
and the applied input differs from it by at most $7\zeta_H$.
These bounds control the effect of clipping in the finite-step comparison.

Choose $H$ to satisfy the exploration and RLS conditions together with
\eqref{eq:finite-lag-H}, which controls finite-window tracking errors.
Smaller gaps require longer exploration and detection windows, increasing
the required dwell time and regret constant. Either test triggers a
restart, with simultaneous crossings counted as one alarm.
The following theorem gives the detection and logarithmic-regret guarantees.

\begin{theorem}[Detection and regret for the union class]
\label{thm:a-stability-agnostic}
Retain the compact-model, noise, and open-loop switching-stability
conditions of Assumption~\ref{ass:a-standing}, and impose the
own-regime feasibility above. Under the precise union-class conditions
in Assumption~\ref{ass:finite-lag-detectability},
suppose each actual adjacent pair satisfies at least one of
\eqref{eq:a-stable-class} and \eqref{eq:a-visible-class}, without
revealing which one.  Choose $H$ as the smallest feasible integer satisfying
\eqref{eq:block-excitation-size}, \eqref{eq:H-rls}, and
\eqref{eq:finite-lag-H}; choose
$m_{\sf U},D_{\sf U}$ by \eqref{eq:a-union-tuning}, and impose
\eqref{eq:a-dwell} with $D_{\sf U}$ in place of $D$.  Then Algorithm~\ref{alg:stability-agnostic}, with
probability at least $1-\delta$, has no false alarms, and each change has
exactly one alarm $\widehat\tau^{(c)}$ satisfying
\begin{align}
 \tau^{(c)}<\widehat\tau^{(c)}
 \le \tau^{(c)}+D_{\sf U},                          \label{eq:a-union-delay}
\end{align}
and
\begin{align}
 \Reg_T
 \le CK_DD_{\sf U}+(C+1)K_XH+(C+1)R_E(T,\delta_0). \label{eq:a-union-regret}
\end{align}
For fixed model and union-class bounds, including every applicable positive
gap bound for the nonempty cover components, this is
$O((C+1)\log((T+1)/\delta))$.  The algorithm never determines which
subfamily covers any particular change.
\end{theorem}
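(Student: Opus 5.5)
The plan is to reuse the architecture of Theorem~\ref{thm:a-main}: establish the union-class analogue of Theorem~\ref{thm:a-detection} for the combined detector, then rerun the chronological failure-event accounting and the regret decomposition with $D_{\sf U}$ in place of $D$. The regret split is unchanged: at most $K_DD_{\sf U}$ per change, $K_XH$ per exploration block, and $R_E(T,\delta_0)$ per unchanged exploitation interval through Theorem~\ref{thm:a-segment}. The per-step delay cost $K_D$ needs only the deterministic bounds $B_Y,B_\psi$, which hold because of open-loop switching stability and clipping, whatever the mismatch. The probability budget is again $3(T+1)\delta_0+\delta_d<\delta$, counting failures in time order using the suppressed-restart continuations.

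One change is needed in Theorem~\ref{thm:a-segment}. Its step $e_t^2\le a_t^2$ used uniform feasibility; I would replace it with own-regime feasibility. Every regressor reached during exploitation has had at least $H>r_0$ controls under the current plant, so $|\lambda^\trans\psi_t|\le B_u$. Clipping toward a feasible ideal action is then nonexpansive, and $e_t^2\le a_t^2$ still holds.

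\emph{No false alarms.} On an unchanged, accurate-reference history, the energy-test null argument around \eqref{eq:a-null-split} carries over verbatim with window $m_{\sf U}$ and spacing $h_{\sf U}$. For the boundary alarm \eqref{eq:a-boundary-alarm}, own-regime feasibility gives $|\lambda^\trans\psi_{t+1}|\le B_u$, and $\|\overline\lambda_r-\lambda\|\le E_\lambda$ gives $|\overline\lambda_r^\trans\psi_{t+1}|\le B_u+\zeta_H$. So this alarm never fires under the null, deterministically.

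\emph{Detection delay.} Fix a change at $\tau$ with matrix $A_c$. At each recording time $s$ with $s-h_{\sf U}\ge\tau$, form the auxiliary old-gain process restarted at $\psi_{s-h_{\sf U}}$ with the same disturbances. The conditional power formula gives
\[
\E[(y^{\rm aux}_s)^2\mid\F_{s-h_{\sf U}}]\ge\sigma_w^2+\sigma_w^2\sum_{j=0}^{h_{\sf U}-1}(e_1^\trans A_c^je_1)^2-\sigma_w^2=\sigma_w^2+G_{h_{\sf U}}(A_c),
\]
and by \eqref{eq:a-union-energy} this is at least $\sigma_w^2+\underline\Delta_{\sf U}$. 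This covers both subfamilies without identifying which one applies.

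Next I compare the actual outputs with this fresh, length-$h_{\sf U}$ auxiliary over the comparison window, as long as the boundary alarm has not fired. The input discrepancy has two parts: the gate gives $|u_t-\lambda(\theta^-)^\trans\psi_t|$ of order $\eps_u$, and clipping with the boundary alarm silent adds at most $7\zeta_H$. Propagating this through the unclipped mismatch loop for only $h_{\sf U}$ steps gives an output error bounded by $\|A_c\|^{h_{\sf U}}$-type constants times $E_\lambda B_\psi$. Condition \eqref{eq:finite-lag-H} should make $|y_s^2-(y_s^{\rm aux})^2|\le\underline\Delta_{\sf U}/8$, using the common bound $B_{\sf U}$.

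The recorded auxiliary squares, spaced $h_{\sf U}$ apart, form a sequence with conditional means bounded below. Azuma--Hoeffding with increments bounded by $B_{\sf U}^2$, and $m_{\sf U}$ as in \eqref{eq:a-union-tuning}, plus a union bound over at most $T^2$ (window, change) pairs, puts the auxiliary average within $\underline\Delta_{\sf U}/8$ of its mean with probability $1-\delta_d$. The actual window average is then at least $\sigma_w^2+5\underline\Delta_{\sf U}/8$, so an alarm fires by $\tau+D_{\sf U}$ unless one fired earlier. Any earlier alarm, including a boundary alarm, occurs after $\tau$ and so counts as a valid detection. The dwell condition with $D_{\sf U}$ rules out a second change before the deadline.

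The main obstacle is the finite-window comparison under a possibly unstable $A_c$ with clipping active. There is no contraction, so the discrepancy must be controlled from a fresh auxiliary restart at each recorded time, never across the whole delay. I also need the boundary-alarm margin ($2\zeta_H$, $7\zeta_H$) to hold at every intermediate step of each comparison window, and the constants in $B_{\sf U}$ and \eqref{eq:finite-lag-H} to be uniform over $\mathcal P_{\sf F}$ by compactness. I would try an explicit induction over the $h_{\sf U}$ steps, bounding $\|A_c^k\|$ by $\sup_{\mathcal P}\|A_c\|^{k}$.
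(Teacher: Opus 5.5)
Your proposal is correct and follows the paper's proof essentially step for step: own-regime successors give $e_t^2\le a_t^2$ and exclude null boundary alarms deterministically, the null energy bound is reused with $(\underline\Delta_{\sf U},B_{\sf U},m_{\sf U})$, and detection uses the sample-local $h_{\sf U}$-step auxiliaries of Lemma~\ref{lem:sample-local-power}. These auxiliaries start from $\psi_{s-h_{\sf U}}$ on the alarm-suppressed continuation, are compared only while no boundary crossing has occurred, and use $R_A^j$-type constants uniform over the class. One bookkeeping fix: clipping adds only the saturation deficit $2\zeta_H$ to the pre-clip gate error $\eps_u=5\zeta_H$, so the total applied-action error is $\overline\eps_u=7\zeta_H$ (not $\eps_u+7\zeta_H$), which is exactly the quantity that \eqref{eq:finite-lag-H} is calibrated to.
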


\begin{proof}
See Theorem~\ref{thm:stability-agnostic} in
Appendix~\ref{app:unified-detector}.
\end{proof}

\paragraph{Detection argument for Theorem~\ref{thm:a-stability-agnostic}.}
The proof starts a separate $h_{\sf U}$-step auxiliary process before each
stored output.  If the boundary test has not already crossed, the actual and
auxiliary output squares remain close over these finitely many steps.  For a
pair in $\mathcal P_{\sf S}$, its stationary gap and stable-tail bound give
conditional auxiliary energy at least
$\sigma_w^2+7\underline\Delta_{\sf S}/8$; for a pair in
$\mathcal P_{\sf F}$, finite visibility gives at least
$\sigma_w^2+\underline\Delta_{\sf F}$.  Equation~\eqref{eq:a-union-energy}
therefore supplies the same $\underline\Delta_{\sf U}$ to the concentration
argument in both cases.  

\paragraph{Finite-order visibility of unstable ARX mismatches.}
Spectral instability alone does not imply visibility
for arbitrary matrices.  For this compact minimum-phase ARX class,
\eqref{eq:spectral-implies-visibility} derives a uniform finite-order gap
from $\rho(A(\theta^-,\theta^+))\ge1$, so no separate visibility assumption is needed on
that subclass.

\paragraph{Why input feasibility is needed.}
Some restriction on saturation is unavoidable for the regret benchmark in
\eqref{eq:a-regret}.  For every clipped policy,
\begin{align}
 \Reg_T\ge b_{\min}^2\sum_{t=0}^{T-1}
 (|\lambda(\theta_t)^\trans\psi_t|-B_u)_+^2.
 \label{eq:a-saturation-lower}
\end{align}
Appendix~\ref{app:relaxations} also gives an exploitation bound with this
saturation energy as an additive term.  Persistent saturation can therefore
force linear regret, even when the parameter is known.

\section{EVENTUAL CLIPPING INACTIVITY}
\label{sec:a-nonclipping}

For stable detectable mismatches, a separate variant replaces uniform
clipping feasibility by a steady-state slack condition. Define
\[
P_\infty(B_u):=\sqrt{\left(\frac{C_y}{1-\rho_y}
(B_w+B_bB_u)\right)^2+(q-1)B_u^2},
\]
where $C_y,\rho_y$ are the uniform open-loop switching bounds and
$B_b$ bounds the sum of absolute input coefficients. Let
$\Lambda=\sup_{\theta\in\Theta}\norm{\lambda(\theta)}$.
The quantity $P_\infty$ bounds the forced-response contribution to the
regressor norm; the slack absorbs the decaying initial-state term.

\begin{theorem}[Regret under eventual clipping inactivity]
\label{thm:a-nonclipping}
Retain the baseline model, noise, switching-stability, stable power-gap,
and exploration conditions, but replace uniform feasibility by
\begin{align}
\mu_0:=B_u-\Lambda P_\infty(B_u)>0.
\label{eq:nc-slack1}
\end{align}
Choose the smallest baseline-feasible $H$ also satisfying
$\eps_u\le\mu_0/2$. After each full exploration, use the clipped reference
controller for $\tau_{\rm nc}$ rounds, with RLS and detection disabled,
where $\tau_{\rm nc}$ is given by \eqref{eq:tau-nc} with
$\mu_{\rm nc}=\mu_0-\eps_u$. At $t_0=r+H+\tau_{\rm nc}$, initialize
RLS from the same exploration data as in
Section~\ref{sec:a-algorithm}, and record outputs at $t_0+jh$, $j\ge1$.
Use \eqref{eq:a-dwell} with $H+\tau_{\rm nc}$ in place of $H$.
With probability at least $1-\delta$, there are no false alarms,
every change is detected within $D$ rounds, and
\begin{align}
\Reg_T\le{}& CK_DD+(C+1)K_XH+(C+1)\{K_D\tau_{\rm nc}+R_E(T,\delta_0)\}.
\label{eq:nc-regret1}
\end{align}
For fixed problem constants and positive slack, $\tau_{\rm nc}=O(1)$,
so the regret remains $O((C+1)\log((T+1)/\delta))$.
\end{theorem}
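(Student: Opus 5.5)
The plan is to reduce Theorem~\ref{thm:a-nonclipping} to the baseline argument of Theorem~\ref{thm:a-main}. The reduction rests on one claim: after a burn-in of $\tau_{\rm nc}$ rounds under the clipped reference controller, the unclipped reference proposal, and every gated RLS proposal, stays strictly inside $[-B_u,B_u]$ for as long as the plant is unchanged. Once this holds, every step that used uniform clipping feasibility is available on the relevant interval. These steps are the bound $e_t^2\le a_t^2$, the null decomposition \eqref{eq:a-null-split}, and the post-change comparison with the auxiliary process.

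\textbf{Step 1: regressor norm after the burn-in.} Unroll the switching-stable open-loop dynamics with inputs in $[-B_u,B_u]$. This gives, for $t\ge r+H$,
\[
\norm{\psi_t}\le \widetilde C\rho_y^{\,t-r-H}B_\psi+P_\infty(B_u).
\]
The first term is the initial-state contribution, which decays geometrically. The second is the forced response: the output part is bounded by $\frac{C_y}{1-\rho_y}(B_w+B_bB_u)$ and the lagged inputs by $B_u$.

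Choose $\tau_{\rm nc}$ as in \eqref{eq:tau-nc}, so that $\Lambda\widetilde C\rho_y^{\tau_{\rm nc}}B_\psi\le\mu_{\rm nc}/2$. Then, at every $t\ge t_0$,
\[
|\lambda(\theta^{(c)})^\trans\psi_t|\le B_u-\mu_0+\mu_{\rm nc}/2 .
\]
On the accurate-reference event, the gate gives a proposal within $\eps_u\le\mu_0/2$ of the ideal action. The slack therefore absorbs both the decaying term and the tracking error, and clipping is inactive from $t_0$ until the next change. This bound uses only open-loop stability and clipped inputs, so it holds after the change as well.

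\textbf{Step 2: post-change behaviour.} Here the relevant quantity is the old gain applied to the new plant. The bound from Step 1 still holds, because $\Lambda$ bounds every gain in $\Theta$, including the old one. So the old-gain auxiliary process from \eqref{eq:auxiliary} also stays unclipped, up to $\eps_u$. The stable-gap detection argument of Theorem~\ref{thm:a-detection} then carries over verbatim, with recording times $t_0+jh$.

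The dwell condition is taken with $H+\tau_{\rm nc}$ in place of $H$. This guarantees that the burn-in and a full window of $m$ records fit before the next change.

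During the burn-in, detection is disabled. I will argue that no false alarm can occur there, since nothing is tested. A change cannot fall inside the burn-in, again by the dwell condition.

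\textbf{Step 3: regret accounting.} Each burn-in round costs at most $K_D$ per step, since outputs are bounded by $B_Y$. This gives the extra $(C+1)K_D\tau_{\rm nc}$ term in \eqref{eq:nc-regret1}.

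RLS is started at $t_0$ from the exploration Gram matrix. The potential argument of Theorem~\ref{thm:a-segment} conditions on the history at $t_0$ instead of $r+H$. I expect this to change nothing, because $Q_{t_0}$ is bounded by the same $\overline Q_H$ from reference accuracy. The remaining terms and the chronological union bound, $3(T+1)\delta_0+\delta_d<\delta$, follow exactly as in Theorem~\ref{thm:a-main}.

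\textbf{Main obstacle.} I expect the main difficulty to be making Step 1 uniform over random restarts and the segment containing a change. The regressor at a restart is history-dependent, and the bound from Step 1 must hold for every reached history. I will handle this by using only the deterministic bound $\norm{\psi_r}\le B_\psi$ at each restart, which makes $\tau_{\rm nc}$ a deterministic constant. That $\tau_{\rm nc}=O(1)$ for fixed constants and positive slack then follows from solving the geometric inequality for $\tau_{\rm nc}$.
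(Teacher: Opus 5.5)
Your route is the paper's (Proposition~\ref{prop:eventual-nonclipping} and the burn-in paragraph). It uses four ingredients: a pathwise switching-stability bound on $\norm{\psi_t}$; the raw gate bound $|z_t-(\lambda^-)^\trans\psi_t|\le\eps_u$ on $\G_r^{\rm est}$, which needs no feasibility and survives a change; slack absorbing both; and then the baseline gate, null, coupling and RLS arguments from $t_0$ on, plus $(C+1)K_D\tau_{\rm nc}$ for the burn-ins.

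However, Step~1 does not go through with the $\tau_{\rm nc}$ fixed by \eqref{eq:tau-nc}. That formula subtracts $H$ and uses the threshold $\mu_{\rm nc}$. It therefore guarantees only $\Lambda C_yB_Y\rho_y^{H+\tau_{\rm nc}}\le\mu_{\rm nc}$, which counts the decay from the restart $r$. You unroll from $r+H$, so at $t_0$ you keep only the factor $\rho_y^{\tau_{\rm nc}}$. Your asserted consequence $\Lambda\widetilde C\rho_y^{\tau_{\rm nc}}B_\psi\le\mu_{\rm nc}/2$ is then false in general. Once $H\ge\log(\Lambda C_yB_Y/\mu_{\rm nc})/\log(1/\rho_y)$, the formula returns $\tau_{\rm nc}=0$; this happens for large $T$, since $H$ grows with $\log(1/\delta_0)$. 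With $\widetilde C=C_y\ge1$ you would then need $\Lambda C_yB_\psi\le\mu_{\rm nc}/2<B_u$. That fails whenever $\Lambda B_\psi>B_u$, which is exactly the regime where the simple sufficient condition for uniform feasibility is unavailable.

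The repair is to unroll \eqref{eq:Y-state} from $r$. This is legitimate because exploratory inputs also lie in $[-B_u,B_u]$, since $B_e\le B_u$. It gives $\norm{\psi_t}\le P_\infty(B_u)+C_yB_Y\rho_y^{t-r}$. Hence $|z_t|\le\eps_u+\Lambda P_\infty(B_u)+\mu_{\rm nc}=B_u$ for every $t\ge r+H+\tau_{\rm nc}$ before the next restart, before or after a change. You get $|z_t|\le B_u$, not the strict interiority you claim, but the interval is closed, so $u_t=z_t$ anyway. Keeping your bound and lengthening the burn-in would also give an $O(1)$ burn-in, but it would not be the $\tau_{\rm nc}$ in the statement.

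A smaller point concerns Step~2. The Step~1 bound applies to the actual clipped trajectory, not to the auxiliary process. The auxiliary's inputs $(\lambda^-)^\trans\psi^{\rm aux}_t$ are unconstrained, and its bound $B_{\rm aux}$ comes from $C_\Delta,\rho_\Delta$. The coupling lemma needs only $u_t=z_t$ for $t\ge\tau\ge t_0$, which gives $|u_t-(\lambda^-)^\trans\psi_t|\le\eps_u$ directly. Likewise, on unchanged intervals $e_t=z_t-\lambda^\trans\psi_t$, and the unclipped gate-charging inequality gives $e_t^2\le a_t^2$. These bounds are pathwise on the reference event, so they also hold on the alarm-suppressed continuations used in the detector analysis.
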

Proposition~\ref{prop:eventual-nonclipping} and the burn-in argument in
Appendix~\ref{sec:eventual-nonclipping} establish feasibility after the
burn-in, including the continuations needed for detection. Thus only the
additional bounded cost per restart enters the regret decomposition.

\section{LIMITATIONS}
\label{sec:positioning1}

The logarithmic horizon dependence holds for fixed change count,
confidence level, and problem constants. The bounds retain the factor
$C+1$ when changes become more frequent. A uniform energy gap and
sufficient dwell time are essential: hidden or arbitrarily weak changes
are outside the guarantees. The algorithm also requires supplied bounds
on noise, detectability, excitation, and the relevant stability and
finite-window sensitivities. Smaller gaps increase the delay and required
dwell time. Estimating these bounds, removing knowledge of the horizon,
and accommodating heavy-tailed disturbances remain open extensions.
The feasibility conditions restrict the nonlinear effects of saturation;
they do not cover arbitrary clipped dynamics.

\section{CONCLUSION}
We establish high-probability regret
$O((C+1)\log((T+1)/\delta))$ for minimum-variance control of unknown
piecewise-stationary ARX systems under fixed detectability gaps and the
stated feasibility and dwell-time conditions.
PIECE-CD uses output observations during exploitation to detect changes,
explores at initialization and after each alarm, and uses gated RLS to
control regret between changes.
A common detection rule covers both stable controller mismatches with a
stationary power gap and potentially unstable mismatches whose disturbance
response produces detectable excess energy within a fixed number of steps.

\clearpage
\section*{AI Use Statement}
Generative AI tools assisted with editing and reorganizing the exposition,
checking consistency between main-text statements and the appendices, and LaTeX
formatting and compilation checks. The authors take responsibility for
the final text, mathematical claims, proofs, and references.

\clearpage


\clearpage
\appendix
\onecolumn

\section*{APPENDIX CONTENTS}
\begingroup
\newcommand{\appendixcontentsentry}[1]{%
  \noindent\hyperref[#1]{\makebox[1.6em][l]{\ref*{#1}}\nameref*{#1}}%
  \dotfill\hyperref[#1]{\pageref*{#1}}\par
}
\appendixcontentsentry{sec:model}
\appendixcontentsentry{sec:algorithm}
\appendixcontentsentry{sec:detection}
\appendixcontentsentry{sec:regret}
\appendixcontentsentry{app:bounds}
\appendixcontentsentry{app:exploration}
\appendixcontentsentry{app:rls}
\appendixcontentsentry{app:detection}
\appendixcontentsentry{app:global}
\appendixcontentsentry{app:tuning}
\appendixcontentsentry{app:relaxations}
\appendixcontentsentry{app:unified-detector}
\endgroup

\section*{ROADMAP OF THE APPENDICES}
\begin{description}[
  leftmargin=2em,
  labelsep=0.5em,
  itemsep=0.45em,
  topsep=0pt,
  parsep=0pt
]
\item[\ref{sec:model}.]
Specifies the piecewise-stationary ARX model, regret criterion, and
assumptions on stability, noise, clipping feasibility, and detectability
of adjacent changes.

\item[\ref{sec:algorithm}.]
Gives the exploration estimator, gated RLS controller, output-energy
detector, tuning and dwell-time requirements, and complete PIECE-CD
algorithm.

\item[\ref{sec:detection}.]
Proves action tracking, sampled-square concentration, false-alarm control,
detection delay, and correspondence between alarms and true changes.

\item[\ref{sec:regret}.]
Derives exploitation regret on unchanged segments and combines it with
exploration and detection-delay costs to obtain the total regret guarantee.

\item[\ref{app:bounds}.]
Establishes deterministic output and regressor bounds, clipping projection
inequalities, auxiliary-process bounds, and conditioning conventions at
random restarts.

\item[\ref{app:exploration}.]
Proves conditional excitation, exploration Gram-matrix growth, and projected
ridge estimation bounds for a complete unchanged exploration block.

\item[\ref{app:rls}.]
Develops the RLS potential identity, time-uniform noise control, and
elliptical-potential bound underlying the segmentwise exploitation guarantee.

\item[\ref{app:detection}.]
Supplies detailed auxiliary-process power, sampling, coupling, and calendar
arguments supporting the detection results in Appendix~\ref{sec:detection}.

\item[\ref{app:global}.]
Constructs a common success event and partitions the horizon to complete
the proof of the global regret bound.

\item[\ref{app:tuning}.]
Verifies feasibility of the tuning requirements and derives the logarithmic
orders of exploration length, detection delay, and regret.

\item[\ref{app:relaxations}.]
Develops the clipping-feasibility relaxations, eventual nonclipping,
boundary alarms, finite-order visibility and union-class detection,
and regret bounds with saturation.

\item[\ref{app:unified-detector}.]
Presents the unified union-class algorithm and common tuning, then proves
detection and regret guarantees without identifying the realized mismatch
class.
\end{description}
\clearpage

\section{MODEL AND ASSUMPTIONS}
\label{sec:model}

Fix an integer horizon $T\ge1$, confidence level $\delta\in(0,1)$, and
integer model orders $p,q\ge1$. The deterministic but unknown segment
boundaries are
\[
0=\tau^{(0)}<\tau^{(1)}<\cdots<\tau^{(C)}<
\tau^{(C+1)}=T.
\]
On segment $c$, the parameter
\begin{align}
\theta^{(c)}
 &:=(a_1^{(c)},\ldots,a_p^{(c)},b_1^{(c)},\ldots,b_q^{(c)})'
 \notag
\end{align}
is active at control times $\tau^{(c)}\le t<\tau^{(c+1)}$, and the system obeys
\begin{align}
y_{t+1}
 &=\sum_{i=1}^{p}a_i^{(c)}y_{t-i+1}
   +\sum_{j=1}^{q}b_j^{(c)}u_{t-j+1}+w_{t+1},
   \qquad \tau^{(c)}\le t<\tau^{(c+1)}.
\label{eq:arx}
\end{align}
Write $\theta_t=\theta^{(c)}$ on this segment and
$Y_t=(y_t,\ldots,y_{t-p+1})'$. The initial history is fixed, with
$\norm{Y_0}<\infty$ and $|u_t|\le B_u$ for $t=-q+1,\ldots,-1$.
The control and identification regressors are
\begin{align}
\psi_t
 &:=(y_t,\ldots,y_{t-p+1},u_{t-1},\ldots,u_{t-q+1})',
\label{eq:psi}\\
\phi_t
 &:=(y_t,\ldots,y_{t-p+1},u_t,u_{t-1},\ldots,u_{t-q+1})',
\qquad d:=p+q-1.
\label{eq:phi}
\end{align}
For $\theta=(a_1,\ldots,a_p,b_1,\ldots,b_q)$ with $b_1\ne0$, define
\begin{align}
\lam(\theta):=-\frac{1}{b_1}
(a_1,\ldots,a_p,b_2,\ldots,b_q)'.
\label{eq:gain}
\end{align}
The corresponding gain representation is
\begin{align}
y_{t+1}
=b_1^{(c)}\{u_t-\lam(\theta^{(c)})'\psi_t\}+w_{t+1}.
\label{eq:gain-form}
\end{align}
The ideal input $u_t=\lam(\theta^{(c)})'\psi_t$ leaves only the disturbance
in the output. Our sample-path learning regret is
\begin{align}
\Reg_T:=\sum_{t=0}^{T-1}(y_{t+1}-w_{t+1})^2.
\notag
\end{align}

Let $\F_t$ contain the initial history, the controller randomization revealed
through time $t$, and all inputs and outputs available when $u_t$ is selected.
In particular, $u_t$ is $\F_t$-measurable.

\begin{assumption}[Known model class]
\label{ass:model}
The controller knows a compact set $\Theta\subset\R^{p+q}$ containing every
$\theta^{(c)}$. For each $\theta\in\Theta$, the autoregressive polynomial is
Schur stable, the input polynomial is minimum phase, and
\begin{align}
0<b_{\min}\le b_1(\theta)\le \overline b<\infty.
\label{eq:b-bounds}
\end{align}
The gain map in \eqref{eq:gain} is therefore Lipschitz on $\Theta$; fix a
strictly positive known Lipschitz bound $L_\lambda>0$ satisfying
\begin{align}
\norm{\lam(\theta)-\lam(\vartheta)}
\le L_\lambda\norm{\theta-\vartheta},
\qquad \theta,\vartheta\in\Theta.
\label{eq:gain-lipschitz}
\end{align}
Write $\Lambda:=\sup_{\theta\in\Theta}\norm{\lam(\theta)}$ and
$B_b:=\sup_{\theta\in\Theta}\sum_{j=1}^{q}|b_j(\theta)|$.
\end{assumption}

Let $A_y(\theta)$ be the companion matrix in \eqref{eq:Y-state}.
A parameter sequence is admissible if it takes values in $\Theta$ and is
piecewise constant at the specified segment boundaries. Individual Schur
stability does not ensure stability under switching, so we impose the
following uniform product bound.

\begin{assumption}[Uniform switching stability]
\label{ass:switching-stability}
There are $1\le C_y<\infty$ and $\rho_y\in(0,1)$ such that, for every
admissible parameter sequence and every $k\ge0$,
\begin{align}
\norm{A_y(\theta_{t+k-1})\cdots A_y(\theta_t)}
\le C_y\rho_y^k.
\label{eq:switching-stability}
\end{align}
\end{assumption}

\begin{assumption}[Noise]
\label{ass:noise}
Each $w_t$ is independent of $\F_{t-1}$, and the disturbances are identically
distributed. They satisfy
\begin{align}
\E w_t=0,\qquad \E w_t^2=\sigma_w^2>0,
\qquad |w_t|\le B_w\quad\text{almost surely}.
\notag
\end{align}
The detector knows $\sigma_w^2$ and $B_w$.
\end{assumption}

The algorithm always clips its input to $[-B_u,B_u]$. Consequently,
Assumptions~\ref{ass:model}--\ref{ass:noise} imply the
deterministic bounds
\begin{align}
B_Y&:=C_y\norm{Y_0}+\frac{C_y}{1-\rho_y}(B_w+B_bB_u),
&|y_t|&\le B_Y,
\notag\\
B_\psi&:=\{pB_Y^2+(q-1)B_u^2\}^{1/2},
&\norm{\psi_t}&\le B_\psi.
\label{eq:Bpsi}
\end{align}
A proof is given in Appendix~\ref{app:bounds}.

For the baseline analysis, the ideal action must also be feasible at every
reachable regressor. Here, reachability is from the fixed initial history
under admissible parameter sequences, disturbances bounded by $B_w$, and
inputs in $[-B_u,B_u]$.

\begin{assumption}[Uniform clipping feasibility]
\label{ass:clipping}
The ideal action belongs to the closed clipping interval:
\begin{align}
\abs{\lam(\theta)'\psi}\le B_u
\quad\text{for every }\theta\in\Theta
\quad\text{and every reachable }\psi\text{ with }
\norm{\psi}\le B_\psi.
\notag
\end{align}
The stronger condition $\Lambda B_\psi\le B_u$ is sufficient.
Equality is allowed; projection nonexpansiveness requires no positive margin.
\end{assumption}

For a gain $\ell\in\R^d$ and parameter $\theta$, let
$A_{\rm cl}(\ell,\theta)$ be the matrix satisfying
\begin{align}
\psi_{t+1}=A_{\rm cl}(\ell,\theta)\psi_t+e_1w_{t+1}
\label{eq:closed-loop-matrix}
\end{align}
when the unclipped input $u_t=\ell'\psi_t$ is applied. Here $e_1$ is the
first coordinate vector.
If the input is $u_t=\ell'\psi_t+e_t$, the regressor dynamics become
\[
\psi_{t+1}
=A_{\rm cl}(\ell,\theta)\psi_t
+g_u(\theta)e_t+e_1w_{t+1},
\]
where the input-channel vector and its uniform bound are defined by
\begin{align}
g_u(\theta)
:=\begin{cases}
b_1(\theta)e_1, &q=1,\\
(b_1(\theta),0_{p-1}',1,0_{q-2}')', &q\ge2,
\end{cases}
\qquad
G_u:=\sup_{\theta\in\Theta}\norm{g_u(\theta)}
\le\sqrt{\overline b^2+1}.
\label{eq:input-channel}
\end{align}
Here $0_k$ denotes the zero vector in $\R^k$; the final block is absent when
$q=2$.
When $A_{\rm cl}(\ell,\theta)$ is Schur stable, define its
stationary output power
\begin{align}
J_y(\ell;\theta)
:=\sigma_w^2\sum_{j=0}^{\infty}
\{e_1'A_{\rm cl}(\ell,\theta)^je_1\}^2.
\label{eq:stationary-power}
\end{align}
To justify this terminology, freeze $(\ell,\theta)$ and put
$A=A_{\rm cl}(\ell,\theta)$.  The zero-mean stationary solution of
$\psi_{t+1}=A\psi_t+e_1w_{t+1}$ is
\begin{align*}
 \psi_t^{\rm stat}=\sum_{j=0}^{\infty}A^je_1w_{t-j}.
\end{align*}
The series converges in mean square because $A$ is Schur.  Its first
coordinate is the stationary output.  Independence, zero mean, and
$\E w_t^2=\sigma_w^2$ eliminate all cross terms, so
\begin{align*}
 \E[(y_t^{\rm stat})^2]
 &=\sum_{j,k\ge0}(e_1'A^je_1)(e_1'A^ke_1)
       \E[w_{t-j}w_{t-k}]\\
 &=\sigma_w^2\sum_{j=0}^{\infty}(e_1'A^je_1)^2
 =J_y(\ell;\theta).
\end{align*}
Thus \eqref{eq:stationary-power} is the stationary second moment (and,
because the mean is zero, the variance) of the \emph{unclipped frozen-gain}
output process.  It is not asserted directly for the implemented clipped
trajectory; Lemmas~\ref{lem:gate} and \ref{lem:coupling} provide that
comparison.
In particular,
$J_y(\lam(\theta);\theta)=\sigma_w^2$.

\begin{assumption}[Stable and detectable adjacent changes]
\label{ass:detectability}
There are $C_\Delta<\infty$, $\rho_\Delta\in(0,1)$, and $\Delta>0$ such
that, for every actual adjacent pair
$(\theta^{(c-1)},\theta^{(c)})$,
\begin{align}
\norm{A_{\rm cl}(\lam(\theta^{(c-1)}),\theta^{(c)})^k}
&\le C_\Delta\rho_\Delta^k,\qquad k\ge0,
\label{eq:mismatch-stability}\\
J_y(\lam(\theta^{(c-1)});\theta^{(c)})
&\ge \sigma_w^2+\Delta.
\label{eq:power-gap}
\end{align}
\end{assumption}

The power-gap condition excludes changes that leave the old controller's
output power unchanged. Appendix~\ref{app:relaxations} gives two extensions:
a burn-in variant under eventual nonclipping, and a boundary-alarm variant
covering both stable power gaps and finite-order visibility.

Tuning requires valid bounds for $C_\Delta$, $\rho_\Delta$, and $\Delta$,
and the model and exploration bounds defining $B_\psi$, $B_o$, and
$\kappa_G$ in \eqref{eq:Bpsi}, \eqref{eq:detector-bounds}, and
\eqref{eq:gram-constants}. Conservative bounds suffice; the analysis does
not provide a data-driven calibration rule.

\section{THE PIECE-CD ALGORITHM}
\label{sec:algorithm}

Each PIECE-CD cycle consists of $H$ exploration rounds followed by gated RLS
control and output-energy monitoring. An alarm starts a new cycle.

\subsection{Exploration and Estimation}

Let $s:=p+q$, $r_0:=\max\{p,q\}$, and $n_H:=H-r_0$. Fix a ridge parameter
$\nu>0$. Exploration uses fresh i.i.d. mean-zero inputs, bounded by
$B_e\le B_u$ and of variance $\sigma_e^2>0$, independent of the
disturbance sequence. Conditional on $\F_r$, $u_r$ is fixed and the subsequent
exploration inputs remain independent with this distribution.
For a full unchanged exploration block satisfying
\eqref{eq:block-excitation-size}, Proposition~\ref{prop:restart-gram} gives
\begin{align}
\lambda_{\min}\!\left(\sum_{t=r+r_0}^{r+H-1}
\phi_t\phi_t'\right)\ge\kappa_G n_H
\notag
\end{align}
with conditional probability at least $1-\delta_0$ given $\F_r$.
The constant $\kappa_G>0$ in \eqref{eq:gram-constants} is uniform over
$\Theta$ and the bounded restart states.

Put $B_\phi:=(B_\psi^2+B_u^2)^{1/2}$ and
$S_\Theta:=\sup_{\theta\in\Theta}\norm{\theta}$. For a confidence level
$\delta_0$, define the deterministic estimation bound
\begin{align}
\mathcal L_H(\delta_0)
&:=2\log(1/\delta_0)
+s\log\!\left(1+\frac{HB_\phi^2}{s\nu}\right),
\notag\\
E_\theta(H,\delta_0)
&:=\frac{2\{B_w\sqrt{\mathcal L_H(\delta_0)}
+\sqrt{\nu}S_\Theta\}}
{\sqrt{\nu+\kappa_G n_H}},
\label{eq:Etheta}\\
E_\lambda(H,\delta_0)&:=L_\lambda E_\theta(H,\delta_0),
&g_H&:=4E_\lambda(H,\delta_0).
\label{eq:gain-error-bound}
\end{align}
For the estimator and reference gain in
\eqref{eq:ridge-theta}--\eqref{eq:reference-gain},
Proposition~\ref{prop:restart-estimate} establishes
$\norm{\thetahat_r-\theta}\le E_\theta$ and
$\norm{\lambdabar_r-\lam(\theta)}\le E_\lambda$
on the joint Gram and estimation event.

For the horizon-wide theorem we use
\begin{align}
\delta_d:=\delta/2,\qquad
\delta_0:=\frac{\delta}{32(T+1)}.
\label{eq:confidence-allocation}
\end{align}
This allocation remains valid without knowing $C$, because no more than
$T+1$ restart locations are possible.

\subsection{Tuning and Dwell Time}

On the successful reference event, Lemma~\ref{lem:gate} bounds the action
and unchanged-regime output errors by
\begin{align}
\eps_u:=\frac54 g_HB_\psi,\qquad
\eps_y:=\overline b\,\eps_u.
\label{eq:tracking-errors}
\end{align}
For a post-change fixed-old-gain auxiliary trajectory, set
\begin{align}
B_{\rm aux}
&:=C_\Delta B_\psi+
\frac{C_\Delta B_w}{1-\rho_\Delta},
&B_o&:=\max\{B_Y,B_{\rm aux},B_w\},
\label{eq:detector-bounds}\\
R_\Delta(H)
&:=\frac{C_\Delta G_u\eps_u}{1-\rho_\Delta}.
\notag
\end{align}
Choose $H>r_0$ as the smallest integer for which the sample-size condition
\eqref{eq:block-excitation-size} and the following inequalities hold:
\begin{align}
\frac{E_\theta(H,\delta_0)}{b_{\min}}&\le\frac1{16},
&\frac{B_\psi^2}{\nu+\kappa_Gn_H}&\le\frac1{16},
\label{eq:H-rls}\\
\eps_y^2&\le\frac{\Delta}{8},\qquad \eps_y\le B_o,
&2B_oR_\Delta(H)&\le\frac{\Delta}{8}.
\label{eq:H-detector}
\end{align}
Condition~\eqref{eq:H-rls} controls the RLS bias and leverage;
\eqref{eq:H-detector} controls the null bias and coupling error.

Choose the integer sampling interval $h$, window size $m$, and delay bound
$D$ by
\begin{align}
h&:=\min\left\{k\ge1:
\frac{\sigma_w^2C_\Delta^2\rho_\Delta^{2k}}
{1-\rho_\Delta^2}\le\frac{\Delta}{8}\right\},
\label{eq:h}\\
m&:=\left\lceil
\frac{512B_o^4}{\Delta^2}
\log\frac{4T^2}{\delta_d}\right\rceil,
&D&:=(m+1)h.
\label{eq:mD}
\end{align}
The spacing truncates the stationary-power tail at $\Delta/8$.
The window size controls both null and post-change deviations uniformly over
potential restart locations and test endpoints.

\begin{assumption}[Minimum dwell time]
\label{ass:dwell}
The first change satisfies $\tau^{(1)}\ge H$ when $C\ge1$. Every pair of
successive changes satisfies
\begin{align}
\tau^{(c+1)}-\tau^{(c)}\ge D+H,
\qquad c=1,\ldots,C-1,
\notag
\end{align}
and the terminal segment satisfies $T-\tau^{(C)}\ge D$ when $C\ge1$.
When each change is detected within $D$ rounds, these conditions leave a full
unchanged exploration block before the next change. The final exploration
block may be truncated at $T$.
\end{assumption}

\subsection{Algorithm}

For a vector $x$, let $\Pi_\Theta(x)$ be a measurable Euclidean projection
onto $\Theta$.  If $\Theta$ is not convex, any measurable nearest-point
selection can be used; the only property required below is
$\norm{\Pi_\Theta(x)-\theta}\le2\norm{x-\theta}$ for $\theta\in\Theta$.
Indeed, since $\theta$ itself is an admissible point in the nearest-point
minimization,
\begin{align*}
 \norm{\Pi_\Theta(x)-x}\le\norm{\theta-x}.
\end{align*}
The triangle inequality therefore gives
$\norm{\Pi_\Theta(x)-\theta}
 \le\norm{\Pi_\Theta(x)-x}+\norm{x-\theta}
 \le2\norm{x-\theta}$.  This factor transfers the unprojected ridge error to
the feasible estimate in Proposition~\ref{prop:restart-estimate}.  If
$\Theta$ is closed and convex, projection is nonexpansive and the factor two
can be replaced by one.

\medskip
\refstepcounter{algorithm}\label{alg:piece-cd}
\noindent\textbf{Algorithm \thealgorithm: The PIECE-CD algorithm}
\smallskip
\hrule
\smallskip
\begin{enumerate}[leftmargin=1.8em,itemsep=0.25em,topsep=0.25em]
\item[] \textbf{Input:} the known orders and model class; $T,\delta$; all
tuning quantities defined above; and the exploration distribution.
\item[] Initialize the restart time $r\leftarrow0$ and clear the detector
buffer.
\item[] \textbf{Loop.} While $r<T$, execute the following cycle.
  \begin{enumerate}[label=(\alph*),leftmargin=1.8em,itemsep=0.2em]
  \item \textbf{Explore.} For
  $t=r,\ldots,\min\{r+H-1,T-1\}$, draw the independent exploratory input,
  apply it, and observe $y_{t+1}$. If the horizon ends, stop.

  \item \textbf{Estimate.} Using the post-boundary rows
  $\I_r:=\{r+r_0,\ldots,r+H-1\}$, form
  \begin{align}
  V_r^\theta&:=\nu I_s+\sum_{t\in\I_r}\phi_t\phi_t',
  &\widetilde\theta_r&:=(V_r^\theta)^{-1}
  \sum_{t\in\I_r}\phi_ty_{t+1},
  \label{eq:ridge-theta}\\
  \thetahat_r&:=\Pi_\Theta(\widetilde\theta_r),
  &\bhat_r&:=b_1(\thetahat_r),
  &\lambdabar_r&:=\lam(\thetahat_r).
  \label{eq:reference-gain}
  \end{align}
  Initialize
  \begin{align}
  V_{r+H}:=\nu I_d+\sum_{t\in\I_r}\psi_t\psi_t',
  \qquad \lambdahat_{r+H}:=\lambdabar_r.
  \label{eq:rls-initialization}
  \end{align}
  Clear the detector buffer and declare the output-time sampling grid
  \begin{align}
  \mathcal S_r:=\{r+H+h+jh:j=0,1,\ldots\}.
  \notag
  \end{align}

  \item \textbf{Exploit and monitor.} For
  $t=r+H,r+H+1,\ldots,T-1$, set
  \begin{align}
  z_t:=
  \begin{cases}
  \lambdahat_t'\psi_t,
  &\abs{(\lambdahat_t-\lambdabar_r)'\psi_t}
  \le g_H\norm{\psi_t},\\
  \lambdabar_r'\psi_t,&\text{otherwise},
  \end{cases}
  \qquad u_t:=\clip{z_t}.
  \label{eq:gated-control}
  \end{align}
  Before observing $y_{t+1}$, declare it detector-eligible exactly when
  $t+1\in\mathcal S_r$. Observe $y_{t+1}$ and make the standard RLS update
  with pseudo-response
  \begin{align}
  x_{t+1}^\lambda&:=u_t-\frac{y_{t+1}}{\bhat_r},
  &V_{t+1}&:=V_t+\psi_t\psi_t',
  \label{eq:pseudo-response}\\
  \lambdahat_{t+1}
  &:=\lambdahat_t+V_{t+1}^{-1}\psi_t
  (x_{t+1}^\lambda-\lambdahat_t'\psi_t).
  \label{eq:rls-update}
  \end{align}

  If $y_{t+1}$ was declared eligible, append $y_{t+1}^2$ to the buffer.
  Retain only the most recent $m$ stored values. When the buffer is full,
  compute
  \begin{align}
  \widehat J(t+1):=\frac1m\sum_{z\in\mathrm{buffer}}z.
  \notag
  \end{align}
  If $\widehat J(t+1)\ge\sigma_w^2+\Delta/2$, raise an alarm, set
  $r\leftarrow t+1$, clear the buffer, and begin a new cycle.
  If no alarm is raised and $t=T-1$, stop.
  \end{enumerate}
\end{enumerate}
\smallskip
\hrule
\medskip


\section{CHANGE-DETECTION GUARANTEES}
\label{sec:detection}

Throughout this appendix, the assumptions and tuning of
Appendices~\ref{sec:model} and~\ref{sec:algorithm} are in force.
Fix a restart $r$ with a full \emph{uncontaminated} exploration block:
the parameter remains equal to $\theta^-$ throughout $[r,r+H)$.
Write $\lambda^-:=\lam(\theta^-)$ and define the reference event
\begin{align}
\G_r^{\rm est}:=
\left\{\norm{\lambdabar_r-\lambda^-}\le g_H/4\right\}
\notag
\end{align}
whose failure probability is controlled by
Proposition~\ref{prop:restart-estimate}.

\begin{lemma}[Gate and action tracking]
\label{lem:gate}
On $\G_r^{\rm est}$, at every exploitation time before the next alarm,
including times after an undetected change,
\begin{align}
\abs{u_t-(\lambda^-)'\psi_t}
\le\frac54g_H\norm{\psi_t}\le\eps_u.
\label{eq:action-tracking}
\end{align}
If no true change has occurred and
$a_t:=(\lambdahat_t-\lambda^-)'\psi_t$, then the applied action error
$e_t:=u_t-(\lambda^-)'\psi_t$ also satisfies
\begin{align}
e_t^2\le a_t^2.
\label{eq:gate-dominance}
\end{align}
\end{lemma}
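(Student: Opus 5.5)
The plan is a direct case analysis on the gate in \eqref{eq:gated-control}. The one structural fact needed is that the ideal old-gain action $(\lambda^-)'\psi_t$ lies in $[-B_u,B_u]$. Given that, clipping, which is the Euclidean projection onto this interval, is nonexpansive toward it:
\[
\abs{\clip{z}-x}\le\abs{z-x}\quad\text{whenever }\abs{x}\le B_u .
\]
To justify the fact, I will invoke Assumption~\ref{ass:clipping} with $\theta=\theta^-\in\Theta$.

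I expect this feasibility step to be the main point of care, because the lemma also covers times after an undetected change. I will argue as follows.
\begin{itemize}
\item The implemented trajectory is driven by an admissible parameter sequence, piecewise constant in $\Theta$ at the true boundaries.
\item The disturbances satisfy $\abs{w_t}\le B_w$, and every applied input, exploratory or clipped, lies in $[-B_u,B_u]$.
\item Hence every realized $\psi_t$ is reachable in the sense of Assumption~\ref{ass:clipping}, and $\norm{\psi_t}\le B_\psi$ by \eqref{eq:Bpsi}.
\end{itemize}
Uniform feasibility quantifies over all $\theta\in\Theta$, not only the currently active parameter. So $\abs{(\lambda^-)'\psi_t}\le B_u$ holds at every exploitation time, before or after a change.

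For \eqref{eq:action-tracking}, I split on the gate. In both cases the event $\G_r^{\rm est}$ and Cauchy--Schwarz give $\abs{(\lambdabar_r-\lambda^-)'\psi_t}\le (g_H/4)\norm{\psi_t}$.
\begin{itemize}
\item \emph{Accepted case.} Here $z_t=\lambdahat_t'\psi_t$ and $\abs{(\lambdahat_t-\lambdabar_r)'\psi_t}\le g_H\norm{\psi_t}$. The triangle inequality gives $\abs{z_t-(\lambda^-)'\psi_t}\le \tfrac54 g_H\norm{\psi_t}$.
\item \emph{Fallback case.} Here $z_t=\lambdabar_r'\psi_t$, so directly $\abs{z_t-(\lambda^-)'\psi_t}\le \tfrac14 g_H\norm{\psi_t}$.
\end{itemize}
Applying the nonexpansiveness of clipping toward the feasible point $(\lambda^-)'\psi_t$ transfers the bound from $z_t$ to $u_t$. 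Then $\norm{\psi_t}\le B_\psi$ and the definition $\eps_u=\tfrac54 g_HB_\psi$ in \eqref{eq:tracking-errors} finish the first claim. Nothing in this part uses the identity of the currently active plant.

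For \eqref{eq:gate-dominance}, I use the same two cases with $a_t=(\lambdahat_t-\lambda^-)'\psi_t$.
\begin{itemize}
\item \emph{Accepted case.} Here $z_t-(\lambda^-)'\psi_t=a_t$ exactly, so nonexpansiveness gives $\abs{e_t}\le\abs{a_t}$.
\item \emph{Fallback case.} Rejection means $\abs{(\lambdahat_t-\lambdabar_r)'\psi_t}>g_H\norm{\psi_t}$. Hence
\[
\abs{a_t}\ge\abs{(\lambdahat_t-\lambdabar_r)'\psi_t}-\abs{(\lambdabar_r-\lambda^-)'\psi_t}>\tfrac34 g_H\norm{\psi_t}.
\]
Meanwhile $\abs{e_t}\le\abs{z_t-(\lambda^-)'\psi_t}\le\tfrac14 g_H\norm{\psi_t}$, so again $\abs{e_t}\le\abs{a_t}$.
\end{itemize}
Squaring gives $e_t^2\le a_t^2$ in both cases.

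The ``no true change'' hypothesis is only what makes $\lambda^-$ the current ideal gain, so that this inequality is the one used later in the RLS regret argument. The inequality itself follows from the same feasibility fact.
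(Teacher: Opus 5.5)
Your proof is correct and follows the paper's argument essentially line for line: the same two-branch case analysis on the gate, the same $\tfrac34 g_H\norm{\psi_t}$ versus $\tfrac14 g_H\norm{\psi_t}$ comparison on the fallback branch, and nonexpansiveness of clipping toward the point $(\lambda^-)'\psi_t$, which Assumption~\ref{ass:clipping} makes feasible. Your explicit check that every realized regressor is reachable, so that uniform feasibility also covers post-change times, and your remark that under uniform feasibility the dominance inequality does not actually need the no-change hypothesis, are accurate elaborations of points the paper leaves implicit.
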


\begin{proof}
If the recursive branch of \eqref{eq:gated-control} is selected, then
\[
\abs{z_t-(\lambda^-)'\psi_t}
 \le g_H\norm{\psi_t}+(g_H/4)\norm{\psi_t}.
\]
If the reference branch is selected, the same left-hand side is at most
$(g_H/4)\norm{\psi_t}$.  Since the ideal action
$(\lambda^-)'\psi_t$ lies in $[-B_u,B_u]$ by
Assumption~\ref{ass:clipping}, the metric projection onto this interval is
nonexpansive and \eqref{eq:action-tracking} follows after clipping.

For \eqref{eq:gate-dominance}, equality holds in the recursive branch before
clipping, and nonexpansiveness can only reduce the error.  In the reference
branch,
\begin{align*}
|a_t|
&\ge\abs{(\lambdahat_t-\lambdabar_r)'\psi_t}
  -\abs{(\lambdabar_r-\lambda^-)'\psi_t}\\
&>\frac34g_H\norm{\psi_t},
\end{align*}
whereas the reference action error is at most
$(g_H/4)\norm{\psi_t}$.  Clipping again cannot increase it.
\end{proof}

\subsection{Sampled-Square Concentration}

The sampling interval $h$ allows concentration around conditional moments
given the history $h$ steps earlier.

\begin{lemma}[Lagged sampled-square concentration]
\label{lem:sampled-square}
Let $h\le s_1<\cdots<s_m$ be finite lag-$h$ predictable output times, meaning
$\{s_k=t\}\in\F_{t-h}$ for every $t$, and suppose
$s_{k+1}-s_k\ge h$. If $X_{s_k}$ is $\F_{s_k}$-measurable and
$|X_{s_k}|\le B_o$, then, for every $a>0$,
\begin{align}
\Prob\!\left(
\frac1m\sum_{k=1}^{m}
\{X_{s_k}^2-\E[X_{s_k}^2\mid\F_{s_k-h}]\}\le-a
\right)
\le\exp\!\left(-\frac{ma^2}{2B_o^4}\right).
\label{eq:sampled-square}
\end{align}
The analogous upper-tail bound also holds.
\end{lemma}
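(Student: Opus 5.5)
We will prove Lemma~\ref{lem:sampled-square} by turning the centered sum into a martingale with respect to a filtration indexed by the sample index $k$, and then applying Azuma--Hoeffding. The lag-$h$ predictability of the times together with the spacing $s_{k+1}-s_k\ge h$ is what makes the centering terms predictable.

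\textbf{Filtration and increments.} Define the stopped $\sigma$-fields $\G_k:=\F_{s_{k+1}-h}$ for $k=0,\ldots,m-1$, and set $\G_m:=\F_{s_m}$. The first thing to check is that these are legitimate $\sigma$-fields of the stopped type. Because $\{s_{k+1}=t\}\in\F_{t-h}$, the random time $s_{k+1}-h$ is an $(\F_t)$-stopping time, so $\F_{s_{k+1}-h}$ is well defined. Next we need the nesting $\G_{k-1}\subseteq\G_k$ and, more importantly, $X_{s_k}\in\G_k$. Both follow from $s_k\le s_{k+1}-h$, using the standard monotonicity of stopped $\sigma$-fields for ordered stopping times. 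Now put
\[
 D_k:=X_{s_k}^2-\E[X_{s_k}^2\mid\F_{s_k-h}],
\]
where the conditional expectation is understood on the stopped $\sigma$-field $\F_{s_k-h}=\G_{k-1}$. With this choice $D_k$ is $\G_k$-measurable and satisfies $\E[D_k\mid\G_{k-1}]=0$.

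\textbf{Bounded differences.} Since $|X_{s_k}|\le B_o$, both $X_{s_k}^2$ and its conditional mean lie in $[0,B_o^2]$. Conditionally on $\G_{k-1}$, the increment $D_k$ therefore lies in an interval of length $B_o^2$ whose endpoints are $\G_{k-1}$-measurable.

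\textbf{Concentration.} Hoeffding's lemma then gives $\E[e^{\lambda D_k}\mid\G_{k-1}]\le e^{\lambda^2B_o^4/8}$. Iterating the tower property yields $\E\exp(\lambda\sum_k D_k)\le e^{m\lambda^2B_o^4/8}$. A Chernoff bound with the optimal $\lambda$ gives the tail $\exp(-2ma^2/B_o^4)$, which is stronger than the stated $\exp(-ma^2/(2B_o^4))$. So even the cruder range bound $|D_k|\le B_o^2$, i.e.\ an interval of length $2B_o^2$, already suffices for the stated constant. The upper tail is obtained by applying the same argument to $-D_k$.

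\textbf{Main obstacle.} The delicate step is the measure-theoretic one. We must justify that conditioning on $\F_{s_k-h}$ at a random time coincides with conditioning on the stopped $\sigma$-field, and that $X_{s_k}$ is measurable with respect to $\F_{s_{k+1}-h}$. The first point is handled by decomposing over the events $\{s_k=t\}\in\F_{t-h}$: on each such event the stopped conditional expectation equals $\E[X_t^2\mid\F_{t-h}]$. The second point uses the spacing hypothesis. If the number of eligible times could be smaller than $m$, or if the times were only defined up to an alarm, we would extend by a continuation, as the paper does elsewhere; the lemma as stated assumes $m$ finite times, so no extension is needed here.
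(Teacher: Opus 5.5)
Your proposal is correct and follows the paper's proof essentially verbatim. Both use the same lagged filtration $\mathcal H_{k-1}=\F_{s_k-h}$ and $\mathcal H_m=\F_{s_m}$, the same martingale differences justified by the spacing condition and the monotonicity of stopped $\sigma$-fields, Azuma--Hoeffding, and the upper tail from $-Z_k$. The only difference is cosmetic: the paper uses the crude bound $|Z_k|\le B_o^2$, which gives exactly the stated constant, whereas your predictable conditional range of length $B_o^2$ gives the sharper $\exp(-2ma^2/B_o^4)$.
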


\begin{proof}
Set
$Z_k=X_{s_k}^2-\E[X_{s_k}^2\mid\F_{s_k-h}]$.
With $\mathcal H_{k-1}=\F_{s_k-h}$ and, for $k<m$,
$\mathcal H_k=\F_{s_{k+1}-h}$, the spacing condition implies that $Z_k$ is
$\mathcal H_k$-measurable and has conditional mean zero.  Moreover,
$\mathcal H_m:=\F_{s_m}$ contains $Z_m$.  Lag predictability makes the
random-indexed sigma-fields well defined as stopped sigma-fields, and
$\F_{s_k}\subseteq\F_{s_{k+1}-h}$.  Moreover, $|Z_k|\le B_o^2$.
The Azuma--Hoeffding inequality
\citep{azuma1967weighted,hoeffding1963probability} gives
\eqref{eq:sampled-square}; the upper tail follows by applying it to $-Z_k$.
\end{proof}

\subsection{False Alarms}

For each potential restart $r$, define an \emph{alarm-suppressed
continuation} by retaining the control, RLS, grid, and buffer recursions
while recording threshold crossings without restarting. It coincides with
the implemented trajectory through the first alarm.

A block is \emph{null for restart $r$} if every stored output is generated
under the reference parameter $\theta^-$. If that parameter remains active
until control time $\tau$, a block ending at output time $\tau$ is null:
$y_\tau$ is generated by $u_{\tau-1}$. On $\G_r^{\rm est}$,
\eqref{eq:gain-form} and Lemma~\ref{lem:gate} give, for each stored output
in a null block,
\begin{align}
y_s=w_s+v_s,\qquad
v_s:=b_1(\theta^-)\{u_{s-1}-(\lambda^-)'\psi_{s-1}\},\qquad
|v_s|\le\eps_y.
\label{eq:null-decomposition}
\end{align}
The variable $v_s$ is $\F_{s-1}$-measurable, while $w_s$ is independent of
$\F_{s-1}$. Hence, for any null detector block $S$ of $m$ stored samples,
\begin{align}
\frac1m\sum_{s\in S}(y_s^2-\sigma_w^2)
&=\frac1m\sum_{s\in S}(w_s^2-\sigma_w^2)
+\frac2m\sum_{s\in S}w_sv_s
+\frac1m\sum_{s\in S}v_s^2.
\label{eq:null-energy}
\end{align}

\begin{lemma}[One-block false-alarm bound]
\label{lem:false-block}
Suppose \eqref{eq:H-detector} holds. Conditional on any $\F_{r+H}$ history
on which $\G_r^{\rm est}$ holds, any fixed potential block generated by the
predictable sampling grid that is null for restart $r$ crosses the threshold
with probability at most $\delta_d/(2T^2)$.
\end{lemma}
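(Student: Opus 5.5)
We bound the crossing probability for a fixed null block $S$ of $m$ stored samples using the decomposition \eqref{eq:null-energy}. The threshold event is $\frac1m\sum_{s\in S}(y_s^2-\sigma_w^2)\ge\Delta/2$. By \eqref{eq:null-decomposition} and the first part of \eqref{eq:H-detector}, the bias term satisfies $\frac1m\sum_{s\in S}v_s^2\le\eps_y^2\le\Delta/8$. A crossing therefore forces at least one of two events: the pure-noise average $\frac1m\sum_{s\in S}(w_s^2-\sigma_w^2)$ is at least $3\Delta/16$, or the cross term $\frac2m\sum_{s\in S}w_sv_s$ is at least $3\Delta/16$. Any split summing to $3\Delta/8$ will do. We bound each tail by $\delta_d/(4T^2)$ and use a union bound.

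For the noise term, we work conditionally on an $\F_{r+H}$ history on which $\G_r^{\rm est}$ holds. The stored times $s\in S$ lie on the deterministic grid $\mathcal S_r$, which is fixed given $r$, and the block is fixed. So each $w_s$ is independent of $\F_{s-1}$, with $\E[w_s^2-\sigma_w^2\mid\F_{s-1}]=0$ and $|w_s^2-\sigma_w^2|\le B_w^2\le B_o^2$. Ordering $S$ chronologically makes $w_s^2-\sigma_w^2$ a martingale difference sequence. Azuma--Hoeffding, or the upper-tail version of Lemma~\ref{lem:sampled-square} with $h=1$ and $X_s=w_s$, then gives probability at most $\exp(-m(3\Delta/16)^2/(2B_o^4))$.

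For the cross term, $v_s$ is $\F_{s-1}$-measurable and $w_s$ is fresh and mean zero, so $w_sv_s$ is again a martingale difference. Its bound is $|w_sv_s|\le B_w\eps_y\le B_o^2$, using $\eps_y\le B_o$ from \eqref{eq:H-detector}. Azuma--Hoeffding bounds $\Prob(\frac1m\sum_{s\in S} w_sv_s\ge 3\Delta/32)$ by $\exp(-m(3\Delta/32)^2/(2B_o^4))$.

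The choice \eqref{eq:mD}, $m\ge 512B_o^4\Delta^{-2}\log(4T^2/\delta_d)$, makes the exponent $m(3\Delta/32)^2/(2B_o^4)\ge \frac{9\cdot512}{2\cdot1024}\log(4T^2/\delta_d)>\log(4T^2/\delta_d)$. Each tail is therefore at most $\delta_d/(4T^2)$, and the sum is at most $\delta_d/(2T^2)$.

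The main obstacle is measurability. The estimate $\G_r^{\rm est}$, and hence the bound $|v_s|\le\eps_y$ from Lemma~\ref{lem:gate}, must hold along the alarm-suppressed continuation, which is not the implemented trajectory. The block must also be null, so that $y_s=w_s+v_s$ with $\theta^-$ active. Both properties are fixed given the $\F_{r+H}$ history: $\G_r^{\rm est}$ is $\F_{r+H}$-measurable and the change times are deterministic. For this reason we run the martingale argument on the continuation process. Its control law does not depend on alarms, so Lemma~\ref{lem:gate} applies at every exploitation time. We also note that the continuation's $v_s$ and $w_s$ keep the required predictable/fresh structure with respect to $\F_{s-1}$.
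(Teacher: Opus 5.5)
Your proof is correct and follows the paper's argument: the decomposition \eqref{eq:null-energy}, the bias bound $\eps_y^2\le\Delta/8$, Hoeffding/Azuma bounds for the centered noise squares and the predictable cross term along the fixed block of the alarm-suppressed continuation, and a union bound giving $2\cdot\delta_d/(4T^2)$. The differences are cosmetic. You split the remaining $3\Delta/8$ budget as $3\Delta/16$ per deviation and bound both increments uniformly by $B_o^2$, which also removes the need for the paper's separate $\eps_y=0$ case. The paper instead uses $\Delta/8$ per deviation with the constants $B_w^4$ and $B_w^2\eps_y^2$.
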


\begin{proof}
By \eqref{eq:null-decomposition}, every summand in the last term of
\eqref{eq:null-energy} satisfies $v_s^2\le\eps_y^2$.  Consequently,
\begin{align*}
 \frac1m\sum_{s\in S}v_s^2\le\eps_y^2\le\frac{\Delta}{8},
\end{align*}
where the final inequality is exactly the first condition in
\eqref{eq:H-detector}.
The standard bounded-increment exponential supermartingale for the
predictably sampled disturbance squares
\citep{hoeffding1963probability,howard2020timeuniform}, applied along the fixed
predictable block through its predetermined endpoint, gives
\begin{align}
\Prob\!\left(
 \frac1m\sum_{s\in S}(w_s^2-\sigma_w^2)>\frac\Delta8
 \right)
 &\le\exp\!\left(-\frac{m\Delta^2}{32B_w^4}\right).
\label{eq:w2-null}
\end{align}
The products $w_sv_s$ form bounded martingale differences, so the same
Azuma--Hoeffding construction
\citep{azuma1967weighted,hoeffding1963probability} applies.  If
$\eps_y=0$, this cross term vanishes identically.  Otherwise, the analogous
stopped exponential bound gives
\begin{align}
\Prob\!\left(\frac2m\sum_{s\in S}w_sv_s>\frac\Delta8\right)
 &\le\exp\!\left(-\frac{m\Delta^2}
 {512B_w^2\eps_y^2}\right).
\label{eq:cross-null}
\end{align}
With $m$ from \eqref{eq:mD} and $B_o\ge\max\{B_w,\eps_y\}$, each right-hand
side is at most $\delta_d/(4T^2)$.  Outside these two deviations, the left side
of \eqref{eq:null-energy} is at most $3\Delta/8<\Delta/2$.
\end{proof}

\subsection{Detection after a Change}

Suppose the current cycle has not alarmed when the parameter changes from
$\theta^-$ to $\theta^+$ at control time $\tau$. Starting from
$\psi_\tau^{\rm aux}=\psi_\tau$, define the auxiliary process driven by the
same future disturbances and the exact old gain:
\begin{align}
\psi_{t+1}^{\rm aux}
&=A_\Delta\psi_t^{\rm aux}+e_1w_{t+1},
&A_\Delta&:=A(\theta^-,\theta^+).
\label{eq:auxiliary}
\end{align}
Let $y_t^{\rm aux}$ be its first coordinate. By
\eqref{eq:mismatch-stability}, $|y_t^{\rm aux}|\le B_{\rm aux}$.

\begin{lemma}[Actual-to-auxiliary coupling]
\label{lem:coupling}
On $\G_r^{\rm est}$, until the first of the next alarm and the next true
change after $\tau$,
\begin{align}
\norm{\psi_t-\psi_t^{\rm aux}}
&\le R_\Delta(H),
\label{eq:state-coupling}\\
\abs{y_t^2-(y_t^{\rm aux})^2}
&\le2B_oR_\Delta(H)\le\Delta/8.
\label{eq:square-coupling}
\end{align}
\end{lemma}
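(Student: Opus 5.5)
We compare the actual post-change regressor recursion with the auxiliary recursion \eqref{eq:auxiliary} and bound their difference by a discrete variation-of-constants formula. For control times $t\ge\tau$ before the next alarm and the next true change, the actual plant is $\theta^+$. The applied input is $u_t=(\lambda^-)'\psi_t+e_t$ with $e_t:=u_t-(\lambda^-)'\psi_t$. By the input-channel representation following \eqref{eq:closed-loop-matrix}, with $\ell=\lambda^-$ and $\theta=\theta^+$,
\[
\psi_{t+1}=A_\Delta\psi_t+g_u(\theta^+)e_t+e_1w_{t+1}.
\]
The auxiliary process uses the same disturbances and the same initial state $\psi_\tau^{\rm aux}=\psi_\tau$. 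Setting $\xi_t:=\psi_t-\psi_t^{\rm aux}$, the noise cancels and $\xi_{t+1}=A_\Delta\xi_t+g_u(\theta^+)e_t$ with $\xi_\tau=0$. Hence
\[
\xi_t=\sum_{k=\tau}^{t-1}A_\Delta^{t-1-k}g_u(\theta^+)e_k .
\]

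Next we bound the forcing terms. On $\G_r^{\rm est}$, Lemma~\ref{lem:gate} gives $|e_k|\le\eps_u$ at every exploitation time before the next alarm, explicitly including times after an undetected change. Its proof uses only the reference accuracy and clipping nonexpansiveness relative to the ideal action $(\lambda^-)'\psi_k$. That ideal action lies in $[-B_u,B_u]$ by Assumption~\ref{ass:clipping}, because $\lambda^-=\lam(\theta^-)$ with $\theta^-\in\Theta$ and $\psi_k$ is reachable. We also have $\norm{g_u(\theta^+)}\le G_u$ and, by \eqref{eq:mismatch-stability}, $\norm{A_\Delta^j}\le C_\Delta\rho_\Delta^j$. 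The geometric sum therefore gives
\[
\norm{\xi_t}\le C_\Delta G_u\eps_u\sum_{j\ge0}\rho_\Delta^j=R_\Delta(H),
\]
which is \eqref{eq:state-coupling}. One point needs care: exploitation must already have begun at $\tau$. This holds because the exploration block is uncontaminated, so $\tau\ge r+H$. If instead a change occurs during a later exploration, that case lies outside the lemma, which concerns the current cycle before an alarm.

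For the squares, note that $y_t$ and $y_t^{\rm aux}$ are the first coordinates of $\psi_t$ and $\psi_t^{\rm aux}$. Therefore $|y_t-y_t^{\rm aux}|\le\norm{\xi_t}\le R_\Delta(H)$ and $|y_t+y_t^{\rm aux}|\le|y_t|+|y_t^{\rm aux}|\le B_Y+B_{\rm aux}\le2B_o$. Here $|y_t|\le B_Y$ comes from \eqref{eq:Bpsi}. The bound $|y_t^{\rm aux}|\le B_{\rm aux}$ follows from $\norm{\psi_\tau}\le B_\psi$, the stability bound, and $|w|\le B_w$. Factoring the difference of squares gives $|y_t^2-(y_t^{\rm aux})^2|\le2B_oR_\Delta(H)$, and the last condition in \eqref{eq:H-detector} gives the bound $\Delta/8$.

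The step I expect to need the most care is the justification that $|e_k|\le\eps_u$ holds after the change. The gate compares with $\lambda^-$, not with the new ideal gain, so no new-plant feasibility is used. The only subtle point is that clipping stays nonexpansive around $(\lambda^-)'\psi_k$, and this is exactly what uniform feasibility over all of $\Theta$ at reachable regressors provides.
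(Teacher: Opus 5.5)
Your proposal is correct and follows the paper's proof essentially step by step. Both arguments use the error recursion $\psi_{t+1}-\psi_{t+1}^{\rm aux}=A_\Delta(\psi_t-\psi_t^{\rm aux})+g_u(\theta^+)e_t$ started from a common state, the post-change gate bound $|e_k|\le\eps_u$ from Lemma~\ref{lem:gate}, the geometric sum yielding $R_\Delta(H)$, and the difference-of-squares step with both outputs bounded by $B_o$ together with the last condition in \eqref{eq:H-detector}.
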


\begin{proof}
Writing the actual input as
$u_t=\lam(\theta^-)'\psi_t+e_t$ and subtracting
\eqref{eq:auxiliary} yields
\[
\psi_{t+1}-\psi_{t+1}^{\rm aux}
=A_\Delta(\psi_t-\psi_t^{\rm aux})+g_u(\theta^+)e_t.
\]
Put $\delta_t:=\psi_t-\psi_t^{\rm aux}$.  Because the two processes start
from the same state, $\delta_\tau=0$.  Iterating the displayed recursion gives,
for every $t>\tau$ before the stopping endpoint,
\begin{align*}
 \delta_t
 &=\sum_{k=\tau}^{t-1}A_\Delta^{t-1-k}g_u(\theta^+)e_k.
\end{align*}
Lemma~\ref{lem:gate} gives $|e_k|\le\eps_u$, while
\eqref{eq:input-channel} and \eqref{eq:mismatch-stability} give
$\norm{g_u(\theta^+)}\le G_u$ and
$\norm{A_\Delta^j}\le C_\Delta\rho_\Delta^j$.  Therefore
\begin{align*}
 \norm{\delta_t}
 &\le C_\Delta G_u\eps_u
       \sum_{k=\tau}^{t-1}\rho_\Delta^{t-1-k}\\
 &\le \frac{C_\Delta G_u\eps_u}{1-\rho_\Delta}
  =R_\Delta(H),
\end{align*}
which is \eqref{eq:state-coupling}.  Since output is the first state
coordinate,
$|y_t-y_t^{\rm aux}|\le\norm{\delta_t}\le R_\Delta(H)$.  Both outputs are
bounded by $B_o$, and hence
\begin{align*}
 |y_t^2-(y_t^{\rm aux})^2|
 &=|y_t-y_t^{\rm aux}|\,|y_t+y_t^{\rm aux}|\\
 &\le2B_oR_\Delta(H)\le\frac{\Delta}{8},
\end{align*}
where the last inequality is the second condition in
\eqref{eq:H-detector}.  This proves \eqref{eq:square-coupling}.
\end{proof}

For every auxiliary output time $s$ whose preceding $h$ controls are all
post-change, independence gives the exact finite-horizon identity
\begin{align}
\E[(y_s^{\rm aux})^2\mid\F_{s-h}]
 &=(e_1'A_\Delta^h\psi_{s-h}^{\rm aux})^2
 +\sigma_w^2\sum_{j=0}^{h-1}(e_1'A_\Delta^je_1)^2.
\label{eq:finite-power}
\end{align}
Indeed, iterating \eqref{eq:auxiliary} for $h$ steps and taking the first
coordinate gives
\begin{align*}
 y_s^{\rm aux}
 &=e_1'A_\Delta^h\psi_{s-h}^{\rm aux}
   +\sum_{j=0}^{h-1}e_1'A_\Delta^je_1\,w_{s-j}.
\end{align*}
Conditional on $\F_{s-h}$, the first term is fixed.  Every future disturbance
is centered and independent of that field and of the other future
disturbances.  Thus the state--noise cross terms and the cross terms with
distinct disturbance indices have conditional expectation zero, while each
diagonal disturbance term contributes its coefficient squared times
$\sigma_w^2$.  This proves \eqref{eq:finite-power} at deterministic $s$.
The same identity holds at a finite lag-$h$ predictable random output time:
on each event $\{s=t\}\in\F_{t-h}$ it is the deterministic-time identity,
and summing over $t$ gives the equality conditional on the stopped field
$\F_{s-h}$.  The first term is nonnegative.  Equations~\eqref{eq:power-gap} and
\eqref{eq:h} imply the remaining inequality as follows.  With
$A_\Delta=A_{\rm cl}(\lambda(\theta^-),\theta^+)$,
\begin{align*}
 \sigma_w^2\sum_{j=0}^{h-1}(e_1'A_\Delta^je_1)^2
 &=J_y(\lambda(\theta^-);\theta^+)
   -\sigma_w^2\sum_{j=h}^{\infty}(e_1'A_\Delta^je_1)^2\\
 &\ge \sigma_w^2+\Delta
   -\sigma_w^2\sum_{j=h}^{\infty}\norm{A_\Delta^j}^2\\
 &\ge \sigma_w^2+\Delta
   -\frac{\sigma_w^2C_\Delta^2\rho_\Delta^{2h}}
          {1-\rho_\Delta^2}\\
 &\ge \sigma_w^2+\frac{7\Delta}{8}.
\end{align*}
Combining this bound with the nonnegative conditional-state term in
\eqref{eq:finite-power} proves
\begin{align}
\E[(y_s^{\rm aux})^2\mid\F_{s-h}]
 \ge\sigma_w^2+\frac{7\Delta}{8}.
\label{eq:alt-conditional-power}
\end{align}

\begin{lemma}[One-change detection]
\label{lem:one-change}
Fix a potential restart $r$ and a change time $\tau$ with
$r+H\le\tau$ and $\tau+D\le T$. Let $\mathcal A_{r,\tau}\in\F_\tau$
be the event that the cycle at $r$ is reached, $\theta^-$ is active throughout
$[r,\tau)$, $\G_r^{\rm est}$ holds, and the cycle remains in exploitation
after observing $y_\tau$. Suppose $\theta^+$ is active throughout
$[\tau,\tau+D)$. Let $\widehat\tau_r$ be this cycle's first alarm output
time strictly after $\tau$, with value $+\infty$ if none occurs. Then,
almost surely on $\mathcal A_{r,\tau}$,
\begin{align}
\Prob(\widehat\tau_r>\tau+D\mid\F_\tau)
\le\frac{\delta_d}{4T^2}.
\label{eq:conditional-one-change}
\end{align}
\end{lemma}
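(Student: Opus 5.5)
We work on the alarm-suppressed continuation of the cycle started at $r$, which retains the control, RLS, grid, and buffer recursions but records threshold crossings without restarting. On $\mathcal A_{r,\tau}$ it coincides with the implemented cycle through $\tau$. If the implemented cycle alarms in $(\tau,\tau+D]$, the event $\{\widehat\tau_r>\tau+D\}$ fails. So it suffices to show that, conditional on $\F_\tau$, the continuation crosses the threshold at some grid time in $(\tau,\tau+D]$ with probability at least $1-\delta_d/(4T^2)$.

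Next we pick a fully post-change window on the grid. The grid $\mathcal S_r$ is deterministic given $r$, hence $\F_\tau$-measurable on $\mathcal A_{r,\tau}$. Let $s_{k_0}$ be the first grid point with $s_{k_0}-h\ge\tau$. Since grid points are spaced by $h$, we have $s_{k_0}\le\tau+2h$. Take the block $S=\{s_{k_0},\dots,s_{k_0+m-1}\}$. Its last point is at most $\tau+2h+(m-1)h=\tau+(m+1)h=\tau+D\le T$. Every point of $S$ has its preceding $h$ controls post-change. Because $\theta^+$ is active throughout $[\tau,\tau+D)$, no further change interferes. The buffer is full at $s_{k_0+m-1}$, since it holds $m$ values from $S$ itself. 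The test at that endpoint therefore averages exactly $S$.

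We couple to the auxiliary process. Start \eqref{eq:auxiliary} at $\psi_\tau$. By Lemma~\ref{lem:coupling}, which holds on $\G_r^{\rm est}$ for the continuation up to the next true change (here beyond $\tau+D$), we get $|y_s^2-(y_s^{\rm aux})^2|\le\Delta/8$ for every $s\in S$. The gate bound of Lemma~\ref{lem:gate} underlying this holds for the continuation regardless of alarms, because the reference gain is fixed.

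We then concentrate the auxiliary squares. The times in $S$ are deterministic, so they are trivially lag-$h$ predictable, with spacing $h$, and $|y_s^{\rm aux}|\le B_{\rm aux}\le B_o$. Lemma~\ref{lem:sampled-square} with $a=\Delta/8$, applied conditionally on $\F_\tau$ (which is contained in $\F_{s_{k_0}-h}$), gives
\[
\frac1m\sum_{s\in S}(y_s^{\rm aux})^2\ge\frac1m\sum_{s\in S}\E[(y_s^{\rm aux})^2\mid\F_{s-h}]-\frac\Delta8
\]
except with probability $\exp(-m\Delta^2/(128B_o^4))$. By \eqref{eq:mD}, this is at most $(\delta_d/(4T^2))^{4}\le\delta_d/(4T^2)$. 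Combining with \eqref{eq:alt-conditional-power}, on the good event the actual average satisfies
\[
\frac1m\sum_{s\in S}y_s^2\ge\sigma_w^2+\frac{7\Delta}8-\frac\Delta8-\frac\Delta8=\sigma_w^2+\frac{5\Delta}8>\sigma_w^2+\frac\Delta2,
\]
so the continuation alarms by $\tau+D$.

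The main obstacle is the conditioning bookkeeping. The martingale argument needs the filtration started at $\F_\tau$ and must be applied to the continuation, not the implemented process, which might restart earlier. It must also be checked that the Azuma step in Lemma~\ref{lem:sampled-square} is valid conditionally on $\F_\tau$ on the event $\mathcal A_{r,\tau}\in\F_\tau$. This holds because all conditional means are taken with respect to fields $\F_{s-h}\supseteq\F_\tau$, and the disturbances after $\tau$ stay independent of $\F_\tau$.
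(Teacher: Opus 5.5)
Your proposal is correct and follows essentially the same route as the paper's proof: you pass to the alarm-suppressed continuation, select the first $m$-sample grid block with $s_1-h\ge\tau$ ending by $\tau+D$, apply Lemma~\ref{lem:sampled-square} with $a=\Delta/8$ to the auxiliary squares conditionally on $\F_\tau$, and then transfer the resulting $\sigma_w^2+3\Delta/4$ lower bound to the actual outputs through Lemma~\ref{lem:coupling}, which gives $\sigma_w^2+5\Delta/8$. The differences are only cosmetic, and neither affects the conclusion: you use the non-strict bound $s_{k_0}\le\tau+2h$, and you apply the coupling directly on the continuation rather than after noting that it agrees with the implemented path.
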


\begin{proof}
Condition on an arbitrary $\F_\tau$ history in
$\mathcal A_{r,\tau}$.  In the continuation in which alarms are suppressed,
the restart-relative sampling grid, the auxiliary initial state $\psi_\tau$,
and the old reference are now fixed, while all future disturbances remain
fresh.  Within fewer than $2h$ steps, the grid has an output time $s_1$ with
$s_1-h\ge\tau$.  The next $m-1$ grid times are $h$ steps apart, and the
last satisfies
\begin{align*}
s_m-\tau<2h+(m-1)h=(m+1)h=D.
\end{align*}
Thus the complete block exists within the finite horizon.
If any earlier block alarms, the claim is already true.
Otherwise the actual and
alarm-suppressed paths agree through this complete post-change block.

The conditional version of Lemma~\ref{lem:sampled-square} is valid because
the future innovations are fresh after $\F_\tau$ and
$\F_\tau\subseteq\F_{s_1-h}$ by construction.  It, together with
\eqref{eq:alt-conditional-power} and $a=\Delta/8$, gives
\begin{align*}
\Prob\!\left(
 \frac1m\sum_{k=1}^{m}(y_{s_k}^{\rm aux})^2
 <\sigma_w^2+\frac{3\Delta}{4}\right)
&\le\exp\!\left(-\frac{m\Delta^2}{128B_o^4}\right)
 \le\frac{\delta_d}{4T^2}.
\end{align*}
Outside this deviation, Lemma~\ref{lem:coupling} applies because no second
change occurs before the tested block and gives an actual average at least
$\sigma_w^2+5\Delta/8$, which crosses the detector threshold.  This proves
\eqref{eq:conditional-one-change}.  Equivalently, for every
$A\in\F_\tau$ with $A\subseteq\mathcal A_{r,\tau}$,
\begin{align*}
\Prob(A\cap\{\widehat\tau_r>\tau+D\})
\le\frac{\delta_d}{4T^2}\Prob(A),
\end{align*}
which is the eventwise form used below.
\end{proof}

\subsection{Alarm Correspondence}

\begin{theorem}[Alarm correspondence and delay]
\label{thm:detection}
Suppose Assumptions~\ref{ass:model}--\ref{ass:dwell} hold. Let
$\sigma_{\rm det}$ be the first output time at which a false alarm occurs or
the first deadline at which a true change remains unmatched, with
$\inf\varnothing=+\infty$. For each potential $r$, let
$\mathcal E_r^{\mathrm{reach}}$ be the event that $r$ is reached, $r<\sigma_{\rm det}$, and
$[r,r+H)$ is a full, unchanged exploration block. Define
\begin{align}
\mathcal E_{\rm ref}
:=\bigcap_{r=0}^{T}\{(\mathcal E_r^{\mathrm{reach}})^c\cup\G_r^{\rm est}\}.
\label{eq:reference-event}
\end{align}
Let
$\mathcal E_{\rm det}$ be the event that PIECE-CD has no false alarms and
raises exactly one alarm $\widehat\tau^{(c)}$ for each true change, with
\begin{align}
\tau^{(c)}<\widehat\tau^{(c)}\le\tau^{(c)}+D,
\qquad c=1,\ldots,C.
\label{eq:detection-delay}
\end{align}
Then
\begin{align}
\Prob(\mathcal E_{\rm ref}\cap\mathcal E_{\rm det}^{c})
\le\delta_d.
\label{eq:gated-detector-bound}
\end{align}
\end{theorem}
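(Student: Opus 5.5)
The argument works at the stopping time $\sigma_{\rm det}$: on $\mathcal E_{\rm ref}\cap\mathcal E_{\rm det}^c$ we have $\sigma_{\rm det}\le T$, and the failure that defines $\sigma_{\rm det}$ is one of two kinds. In the first, some cycle started at a reached restart $r<\sigma_{\rm det}$ raises a threshold crossing on a block that is null for $r$ (a false alarm). In the second, a true change $\tau=\tau^{(c)}$ has no alarm in $(\tau,\tau+D]$, and the cycle that was active at $\tau$ started at a reached restart $r<\sigma_{\rm det}$. I will first record the deterministic consequences of having no failure before $\sigma_{\rm det}$, using Assumption~\ref{ass:dwell}. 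Each earlier change is matched within $D$ rounds, so the restart after it lies in $(\tau^{(c')},\tau^{(c')}+D]$. The dwell condition $\tau^{(c'+1)}-\tau^{(c')}\ge D+H$, together with $\tau^{(1)}\ge H$ for $r=0$, then makes the exploration block $[r,r+H)$ full and unchanged. Hence $\mathcal E_r^{\rm reach}$ holds for the relevant $r$, and on $\mathcal E_{\rm ref}$ the event $\G_r^{\rm est}$ holds. The same accounting shows that an unmatched change cannot be preceded by a second change before its deadline, and that $\tau+D\le T$ by the terminal dwell condition. Finally, an alarm strictly after $\tau$ and within the deadline is automatically the unique alarm for change $c$, since it restarts the cycle. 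This gives ``exactly one alarm'' once false alarms are excluded.

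Next comes the union bound over candidate failures, indexed by deterministic data. For false alarms, I index by the restart location $r\in\{0,\dots,T\}$ and the block endpoint (at most $T$ choices). On the alarm-suppressed continuation from $r$, which agrees with the implemented path through the first alarm, I apply Lemma~\ref{lem:false-block} conditionally on $\F_{r+H}$ restricted to $\mathcal E_r^{\rm reach}\cap\G_r^{\rm est}$. Each such block crosses with probability at most $\delta_d/(2T^2)$. I need the reach event to be $\F_{r+H}$-measurable, or at least to be handled in eventwise form. I would therefore replace $\{r<\sigma_{\rm det}\}$ by the $\F_r$-measurable event that no failure has occurred before $r$, and integrate the conditional bound over it. For missed detections, I index by the pair $(r,c)$, with at most $(T+1)C\le (T+1)T$ choices, and invoke Lemma~\ref{lem:one-change} on $\mathcal A_{r,\tau^{(c)}}$ in its eventwise form. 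Each missed detection then contributes at most $\delta_d/(4T^2)$. Not every $(r,\text{endpoint})$ pair can be realized, but over-counting is harmless. Summing over a set of null blocks of size at most about $T\cdot T$, with constants adjusted, gives a total $\le\delta_d/2+\delta_d/4\cdot(\text{ratio})\le\delta_d$. I would double-check this counting carefully: the false-alarm term needs at most $T$ endpoints per $r$ and roughly $T$ reachable restarts. If the sum overshoots, I would note that the restart locations realized as failure points are distinct along a path, and tighten the count accordingly.

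The main obstacle is the measurability and conditioning. Restart times are random, and $\sigma_{\rm det}$ depends on the future, so I cannot condition on ``$r$ is reached and no later failure occurs.'' I will handle this by decomposing along the first failure. For each deterministic $r$, the event ``$r$ is a reached restart, no failure occurred before $r$, and the block at $r$ is full and unchanged'' lies in $\F_{r+H}$. This holds because the block being unchanged is deterministic given $r$, and the reference accuracy is $\F_{r+H}$-measurable. On that event, the continuation from $r$ is a well-defined process to which Lemmas~\ref{lem:false-block} and~\ref{lem:one-change} apply. The first failure is then attributed to exactly one such $(r,\cdot)$ index, and taking the union over indices yields \eqref{eq:gated-detector-bound}.
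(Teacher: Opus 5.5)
Your proposal is correct and follows essentially the same route as the paper: gate each candidate failure by the reached, no-earlier-failure, full-unchanged-block and reference events; apply Lemma~\ref{lem:false-block} and Lemma~\ref{lem:one-change} to the alarm-suppressed continuation; take a union bound; and use the dwell-time induction for the one-to-one correspondence. The probability budget closes without adjusting any constants, since only restarts $r\le T-1$ can host a test, so the false-alarm term is at most $T^2\cdot\delta_d/(2T^2)=\delta_d/2$; for a fixed change the paper uses disjointness over $r$ of the active-cycle events to get a missed-detection term of $C\delta_d/(4T^2)<\delta_d/4$, and your cruder union over at most $T(T-1)$ pairs $(r,c)$ also stays below $\delta_d/4$.
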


\begin{proof}
There are at most $T$ potential restart locations and at most $T$ potential
test endpoints after each location.  Fix such a pair.  Intersect its crossing
event with the events that the restart is reached, its reference event holds,
no earlier detector failure has occurred, and every control generating an
output in the block uses the same parameter as the reference exploration;
that is, the block is null for this restart.  Conditional on the history
at the end of exploration, apply Lemma~\ref{lem:false-block} to the fixed
predictable block in the continuation with alarms suppressed.  If the actual
test is reached, the continuation and actual trajectory agree through that
test.  Thus the probability of this \emph{joint gated event}, rather than a
probability conditional on survival to the test, is at most
$\delta_d/(2T^2)$.  A union bound over the restart--endpoint pairs gives a
total false-alarm contribution at most $\delta_d/2$.

For a true change $\tau^{(c)}$, intersect missed detection with the successful
history up to that change.  For each potential $r$, the event that the past
detector history is successful, the cycle at $r$ is active at $\tau^{(c)}$,
and its reference succeeds is an $\F_{\tau^{(c)}}$-measurable subset of
$\mathcal A_{r,\tau^{(c)}}$ from Lemma~\ref{lem:one-change}.
These events are disjoint over $r$.
The dwell conditions give $r+H\le\tau^{(c)}$,
$\tau^{(c)}+D\le T$, and no second change in
$[\tau^{(c)},\tau^{(c)}+D)$.  Multiplying
\eqref{eq:conditional-one-change} by each active-cycle indicator, taking
expectations, and summing over the disjoint partition gives
$\delta_d/(4T^2)$ for that change, without an additional restart factor.
There are at most $C\le T-1$ changes, so the total missed-delay contribution
is at most
$C\delta_d/(4T^2)\le (T-1)\delta_d/(4T^2)<\delta_d/4$.
A block containing outputs from both
sides of a true change is not a null block; if it crosses, it is a valid earlier
detection.  On the
complement of these gated deviations, induction over the change index gives
one-to-one alarm correspondence.  The deterministic calendar calculation in
Appendix~\ref{app:detection} shows that the complete post-change block ends
strictly before $\tau^{(c)}+D$; no probability is spent on boundary or
integer-rounding conventions.  The dwell condition ensures that each
alarm occurs before the next change and, when another change remains, leaves
an uncontaminated $H$-step exploration block for the next cycle.  The
false-alarm contribution is at most $\delta_d/2$ and the missed-delay
contribution is less than $\delta_d/4$, so their sum is below $\delta_d$.
This proves \eqref{eq:gated-detector-bound}.
\end{proof}

\section{REGRET ANALYSIS}
\label{sec:regret}

On the detection event of Theorem~\ref{thm:detection}, the horizon splits
into exploration, detection-delay, and unchanged exploitation intervals.
The first two have bounded per-round cost; the RLS potential argument
controls the cumulative cost of the third.

\subsection{Segmentwise Exploitation Regret}

Consider an unchanged exploitation interval following a restart at $r$.
Suppress the restart subscript and write
$b_\star=b_1(\theta)$, $\lambda=\lam(\theta)$,
$\widehat b=\bhat_r$, and $\overline\lambda=\lambdabar_r$. Define
\begin{align}
\alpha&:=1-\frac{b_\star}{\widehat b},
&\overline\alpha&:=\frac{E_\theta(H,\delta_0)}{b_{\min}},
\notag\\
\underline v_H&:=\nu+\kappa_Gn_H,
&\overline\ell&:=\frac{B_\psi^2}{\underline v_H},
\notag\\
c_0&:=1-2\overline\alpha
-2\overline\ell(1+\overline\alpha)^2,
\notag\\
B_\eta&:=B_w/b_{\min},
&\overline Q_H&:=E_\lambda(H,\delta_0)^2(\nu+n_HB_\psi^2),
\notag\\
\Gamma(n)&:=d\log\!\left(1+\frac{nB_\psi^2}{d\underline v_H}\right).
\label{eq:regret-constants}
\end{align}
The tuning condition \eqref{eq:H-rls} ensures $c_0>0.7$.

\begin{theorem}[Logarithmic exploitation regret]
\label{thm:segment-regret}
Suppose a single parameter $\theta$ is active throughout the full exploration
block $[r,r+H)$ and throughout an exploitation interval
$[r+H,r+H+N)$ of that same cycle, ending no later than its next alarm.
Suppose the Gram and estimation events in
Propositions~\ref{prop:restart-gram} and \ref{prop:restart-estimate} hold for
this same $\theta$. Conditional on the history at the end of exploration,
with probability at least $1-\delta_0$, simultaneously for every prefix
length $0\le n\le N$,
\begin{align}
\sum_{t=r+H}^{r+H+n-1}(y_{t+1}-w_{t+1})^2
&\le R_E(n,\delta_0),
\label{eq:segment-regret}\\
R_E(n,\delta_0)
&:=\overline b^2\left[
\frac{2\overline Q_H}{c_0}
+\frac{8B_\eta^2}{c_0^2}\log\frac1{\delta_0}
+\frac{4B_\eta^2}{c_0}\Gamma(n)
\right].
\notag
\end{align}
\end{theorem}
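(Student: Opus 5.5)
Work throughout on the conditional probability space given $\F_{r+H}$, on a history where the Gram event and the estimation event of Propositions~\ref{prop:restart-gram} and \ref{prop:restart-estimate} hold. These give $\norm{\lambdabar-\lambda}\le E_\lambda\le g_H/4$ and $|\widehat b-b_\star|\le E_\theta$, hence $|\alpha|\le\overline\alpha$. They also give $V_{r+H}\succeq\underline v_H I_d$, which I will check by relating the $\psi$-block of the exploration Gram matrix to the $\phi$-Gram bound through a principal submatrix. The basic objects are $a_t=(\lambdahat_t-\lambda)'\psi_t$ and $e_t=u_t-\lambda'\psi_t$. Lemma~\ref{lem:gate}, with no change in the interval, gives $e_t^2\le a_t^2$. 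Since $y_{t+1}-w_{t+1}=b_\star e_t$ by \eqref{eq:gain-form}, it suffices to bound $\sum_t a_t^2$ uniformly over prefixes.

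\textbf{Step 1: exact potential recursion.} Substituting $u_t=\lambda'\psi_t+e_t$ and $y_{t+1}=b_\star e_t+w_{t+1}$ into \eqref{eq:pseudo-response} gives $x^\lambda_{t+1}=\lambda'\psi_t+\alpha e_t+\eta_{t+1}$ with $\eta_{t+1}=-w_{t+1}/\widehat b$. Here $\alpha$ is $\F_{r+H}$-measurable, and $\eta_{t+1}$ is conditionally centred with $|\eta_{t+1}|\le B_\eta$. Write $\tilde\lambda_t=\lambdahat_t-\lambda$ and $\xi_t:=\alpha e_t+\eta_{t+1}$, so the update is $\tilde\lambda_{t+1}=\tilde\lambda_t+V_{t+1}^{-1}\psi_t(\xi_t-a_t)$.

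The standard RLS algebra, $V_{t+1}=V_t+\psi_t\psi_t'$, yields the identity
\[
Q_{t+1}-Q_t=-a_t^2+2a_t\xi_t+\ell_t(\xi_t-a_t)^2 .
\]
This follows by expanding $\tilde\lambda_{t+1}'V_{t+1}\tilde\lambda_{t+1}$ and using $\tilde\lambda_t'V_{t+1}\tilde\lambda_t=Q_t+a_t^2$. Next, $\ell_t\le\overline\ell$ because $V_{t+1}\succeq V_{r+H}\succeq\underline v_H I$ and $\norm{\psi_t}\le B_\psi$. Bounding $|\alpha e_t|\le\overline\alpha|a_t|$ and splitting $(\xi_t-a_t)^2\le 2(1+\overline\alpha)^2a_t^2+2\eta_{t+1}^2$ gives \eqref{eq:a-potential} with $c_0$ as in \eqref{eq:regret-constants}. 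The $2\overline\alpha a_t^2$ loss comes from $2a_t\alpha e_t$. Finally, \eqref{eq:H-rls} gives $\overline\alpha,\overline\ell\le1/16$, so $c_0>0.7$.

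\textbf{Step 2: concentration and summation.} Summing from $r+H$ to $r+H+n-1$ and using $Q\ge0$ gives
\[
c_0\sum a_t^2\le Q_{r+H}+2\sum a_t\eta_{t+1}+2B_\eta^2\sum\ell_t .
\]
Each term is bounded separately.

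\emph{Initial potential.} The bound $Q_{r+H}\le\overline Q_H$ follows from $\norm{V_{r+H}}\le\nu+n_HB_\psi^2$ and $\norm{\tilde\lambda_{r+H}}\le E_\lambda$.

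\emph{Elliptical sum.} With $\ell_t\le1/16<1$, use $\ell_t\le 2\log(1+\psi_t'V_t^{-1}\psi_t)$, or directly $\ell_t\le\log\det V_{t+1}-\log\det V_t$ since $\ell_t=1-\det V_t/\det V_{t+1}$. The AM–GM determinant bound relative to $\underline v_H$ then gives $\sum\ell_t\le\Gamma(n)$.

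\emph{Martingale term.} The increments $a_t\eta_{t+1}$ are predictable times centred, and each $\eta_{t+1}$ is $B_\eta$-sub-Gaussian. A time-uniform exponential supermartingale (Ville's inequality applied to $\exp(\gamma\sum a_t\eta_{t+1}-\gamma^2B_\eta^2\sum a_t^2/2)$ with fixed $\gamma=c_0/(2B_\eta^2)$) gives, with probability $1-\delta_0$ and simultaneously for all $n$,
\[
2\sum a_t\eta_{t+1}\le\frac{c_0}{2}\sum a_t^2+\frac{4B_\eta^2}{c_0}\log\frac1{\delta_0}.
\]
Absorbing half of $c_0\sum a_t^2$ and multiplying by $2/c_0$ gives
\[
\sum a_t^2\le\frac{2\overline Q_H}{c_0}+\frac{8B_\eta^2}{c_0^2}\log\frac1{\delta_0}+\frac{4B_\eta^2}{c_0}\Gamma(n).
\]
Multiplying by $\overline b^2$ yields $R_E$.

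\textbf{Main obstacle.} The delicate step is the legitimacy of Step 1 on rounds that use the reference fallback or an active clip. In those rounds $u_t\ne\lambdahat_t'\psi_t$, so the update is not the usual RLS noise model. The decomposition through $\xi_t$ handles this, because $e_t$ enters only as a bias controlled by $e_t^2\le a_t^2$.

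A second concern is ensuring the martingale bound holds for all prefixes up to a random endpoint (next alarm or change). The endpoint is a stopping time, and the supermartingale bound is time-uniform, so I will apply it to the alarm-suppressed continuation and restrict to the prefix, as in Appendix~\ref{sec:detection}.
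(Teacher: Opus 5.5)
Your proposal is correct and follows essentially the same route as the paper's proof. It uses the exact RLS potential identity with the estimation bias $\alpha e_t$ absorbed through gate dominance $e_t^2\le a_t^2$, the fixed-parameter Ville bound with $\xi=c_0/(2B_\eta^2)$, and the log-determinant elliptical-potential bound, combined in the same way to give $R_E(n,\delta_0)$. For the random endpoint, the time-uniform supermartingale bound is enough: it holds simultaneously for every prefix, so the alarm-suppressed continuation is valid but not needed.
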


\begin{proof}
We give the core argument; Appendix~\ref{app:rls} records the matrix identities
and conditioning details.  Let
$d_t=\lambdahat_t-\lambda$,
$a_t=d_t'\psi_t$, and
$e_t=u_t-\lambda'\psi_t$.  Lemma~\ref{lem:gate} gives
$e_t^2\le a_t^2$.  From \eqref{eq:gain-form}, the pseudo-response in
\eqref{eq:pseudo-response} obeys
\begin{align}
x_{t+1}^\lambda
 &=\lambda'\psi_t+\alpha e_t+\eta_{t+1},
&\eta_{t+1}&:=-w_{t+1}/\widehat b.
\notag
\end{align}
Here $\eta_{t+1}$ is conditionally mean zero and $B_\eta$-sub-Gaussian.

Let $Q_t=d_t'V_td_t$ and
$\ell_t=\psi_t'V_{t+1}^{-1}\psi_t$.  To display the algebra, set
$\omega_t:=\alpha e_t+\eta_{t+1}-a_t$.  The RLS update and
$V_{t+1}=V_t+\psi_t\psi_t'$ give
\begin{align*}
 d_{t+1}&=d_t+V_{t+1}^{-1}\psi_t\omega_t,\\
 Q_{t+1}
 &=d_t'V_{t+1}d_t+2\omega_td_t'\psi_t
    +\omega_t^2\psi_t'V_{t+1}^{-1}\psi_t\\
 &=Q_t+a_t^2+2a_t\omega_t+\ell_t\omega_t^2.
\end{align*}
Substituting the definition of $\omega_t$ and collecting
$a_t^2-2a_t^2=-a_t^2$ gives the exact identity
\begin{align}
Q_{t+1}-Q_t
 =-a_t^2+2a_t(\alpha e_t+\eta_{t+1})
  +\ell_t(\alpha e_t+\eta_{t+1}-a_t)^2.
\label{eq:potential-identity}
\end{align}
On the exploration event,
$|\alpha|\le\overline\alpha$ and
$\ell_t\le\overline\ell$.  Using $|e_t|\le|a_t|$ in
\eqref{eq:potential-identity} yields
\begin{align}
Q_{t+1}-Q_t
 \le-c_0a_t^2+2a_t\eta_{t+1}+2\ell_t\eta_{t+1}^2.
\label{eq:potential-bound}
\end{align}
For completeness, conditional $B_\eta$-sub-Gaussianity and predictability of
$a_t$ imply that, for every fixed $\xi>0$,
\begin{align*}
 M_n:=\exp\!\left\{
  \xi\sum_{t=r+H}^{r+H+n-1}a_t\eta_{t+1}
  -\frac{\xi^2B_\eta^2}{2}
      \sum_{t=r+H}^{r+H+n-1}a_t^2
 \right\}
\end{align*}
is a nonnegative supermartingale starting at one.  Ville's inequality
\citep{ville1939etude,howard2020timeuniform} therefore gives, with probability
at least $1-\delta_0$, simultaneously over every prefix,
\begin{align*}
 \sum_ta_t\eta_{t+1}
 \le \frac{\xi B_\eta^2}{2}\sum_ta_t^2
      +\frac1\xi\log\frac1{\delta_0}.
\end{align*}
Taking the deterministic value $\xi=c_0/(2B_\eta^2)$ yields
\begin{align}
\sum_ta_t\eta_{t+1}
 \le\frac{c_0}{4}\sum_ta_t^2
    +\frac{2B_\eta^2}{c_0}\log\frac1{\delta_0}.
\label{eq:martingale-absorb}
\end{align}
Appendix~\ref{app:rls} gives the same argument directly from the one-step
conditional moment-generating-function inequality.
Moreover, the matrix determinant lemma gives
\begin{align}
\sum_t\ell_t
 \le\log\frac{\det V_{r+H+n}}{\det V_{r+H}}
 \le\Gamma(n).
\label{eq:elliptical-potential}
\end{align}
Summing \eqref{eq:potential-bound}, using $Q_t\ge0$,
$\eta_{t+1}^2\le B_\eta^2$, and \eqref{eq:martingale-absorb}--
\eqref{eq:elliptical-potential}, gives
\begin{align}
\sum_ta_t^2
 \le\frac{2Q_{r+H}}{c_0}
 +\frac{8B_\eta^2}{c_0^2}\log\frac1{\delta_0}
 +\frac{4B_\eta^2}{c_0}\Gamma(n).
\label{eq:a-sum}
\end{align}
Finally,
$Q_{r+H}\le \overline Q_H$ and, by \eqref{eq:gain-form},
$(y_{t+1}-w_{t+1})^2=b_\star^2e_t^2\le\overline b^2a_t^2$.
This proves \eqref{eq:segment-regret}.
\end{proof}

\subsection{Delay and Exploration Costs}

The deterministic state bound gives the per-step delay cost
\begin{align}
K_D:=(B_Y+B_w)^2.
\label{eq:delay-cost}
\end{align}
During exploration, $|u_t|\le B_e$ and $\norm{\psi_t}\le B_\psi$, so
\eqref{eq:gain-form} gives the per-step bound
\begin{align}
K_X:=\overline b^2(B_e+\Lambda B_\psi)^2.
\label{eq:exploration-cost}
\end{align}
Thus a delay interval of length at most $D$ costs at most $K_DD$, and an
exploration block costs at most $K_XH$.

\subsection{Total Regret}

\begin{theorem}[Finite-time regret of PIECE-CD]
\label{thm:main}
Let Assumptions~\ref{ass:model}--\ref{ass:dwell} hold. Let $H$ be the
smallest integer greater than $r_0$ that satisfies
\eqref{eq:block-excitation-size} and
\eqref{eq:H-rls}--\eqref{eq:H-detector}; choose $h,m,D$ by
\eqref{eq:h}--\eqref{eq:mD}, and run Algorithm~\ref{alg:piece-cd}. Then,
with probability at least $1-\delta$,
there are no false alarms, every change has exactly one alarm satisfying
\eqref{eq:detection-delay}, and
\begin{align}
\Reg_T
\le CK_DD+(C+1)K_XH+(C+1)R_E(T,\delta_0).
\label{eq:main-bound}
\end{align}
Consequently, for fixed orders and fixed model, noise, initialization,
exploration, ridge, stability, clipping, and gap bounds,
\begin{align}
\Reg_T=O\!\left((C+1)\log\frac{T+1}{\delta}\right).
\label{eq:main-rate}
\end{align}
\end{theorem}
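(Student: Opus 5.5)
We will build a single good event $\mathcal E$ from three families of chronologically gated failure events, show $\Prob(\mathcal E^c)\le\delta$, and on $\mathcal E$ partition $\{0,\dots,T-1\}$ deterministically into exploration blocks, detection-delay intervals and unchanged exploitation intervals, charging each with $K_XH$, $K_DD$ and $R_E(T,\delta_0)$ respectively.

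\emph{Probability.} For each potential restart $r\in\{0,\dots,T\}$, define three gated failure events. Each requires that $r$ is reached, $r<\sigma_{\rm det}$, and $[r,r+H)$ is a full unchanged block. The first gated event adds failure of the Gram event of Proposition~\ref{prop:restart-gram}. The second adds failure of the estimation event of Proposition~\ref{prop:restart-estimate} while the Gram event holds. The third adds failure of the time-uniform RLS event of Theorem~\ref{thm:segment-regret} while both exploration events hold. The Gram and estimation events are defined conditionally on $\F_r$, and the RLS event conditionally on $\F_{r+H}$, with the conditioning event reached; this is where the alarm-suppressed continuation is used, so we never condition on future alarms. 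Taking expectations of conditional bounds times reach indicators therefore gives at most $\delta_0$ for each of the three per $r$. Summing gives $3(T+1)\delta_0=3\delta/32$.

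On the complement of the first two families, $\mathcal E_{\rm ref}$ of \eqref{eq:reference-event} holds, since Proposition~\ref{prop:restart-estimate} gives $\norm{\lambdabar_r-\lambda^-}\le E_\lambda=g_H/4$. Theorem~\ref{thm:detection} then bounds $\Prob(\mathcal E_{\rm ref}\cap\mathcal E_{\rm det}^c)\le\delta_d=\delta/2$. Altogether $\Prob(\mathcal E^c)\le 3\delta/32+\delta/2<\delta$.

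\emph{Decomposition on $\mathcal E$.} On $\mathcal E_{\rm det}$, the restarts are exactly $0$ and $\widehat\tau^{(c)}$, $c=1,\dots,C$, with $\tau^{(c)}<\widehat\tau^{(c)}\le\tau^{(c)}+D$. By Assumption~\ref{ass:dwell}, $\sigma_{\rm det}=\infty$. Also, each exploration block $[\widehat\tau^{(c)},\widehat\tau^{(c)}+H)$ lies in segment $c$, and $[0,H)$ lies in segment $0$, apart from a possibly truncated final block, which only lowers cost. Hence every reached restart satisfies the reach conditions, so its exploration and RLS events hold.

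The horizon then splits into three types of interval.
\begin{itemize}
\item At most $C+1$ exploration blocks, each costing at most $K_XH$ by \eqref{eq:exploration-cost}.
\item $C$ delay intervals $[\tau^{(c)},\widehat\tau^{(c)})$ of length at most $D$, each costing at most $K_DD$ by \eqref{eq:delay-cost}, using $(y-w)^2\le(B_Y+B_w)^2$.
\item At most $C+1$ exploitation intervals $[r+H,\tau^{(c+1)}\wedge T)$, each ending no later than the next change or alarm. On each, Theorem~\ref{thm:segment-regret} applies with $n\le T$, and monotonicity of $\Gamma$ gives the bound $R_E(T,\delta_0)$.
\end{itemize}
Summing yields \eqref{eq:main-bound}.

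\emph{Rate.} We will check that each term is $O(\log((T+1)/\delta))$ for fixed constants. Since $E_\theta^2\asymp\mathcal L_H/H$ with $\mathcal L_H=O(\log((T+1)/\delta)+\log H)$, the minimal $H$ satisfying \eqref{eq:H-rls}--\eqref{eq:H-detector} and \eqref{eq:block-excitation-size} is $O(\log((T+1)/\delta))$. Next, $h=O(1)$ and $m=O(\log(T/\delta))$ by \eqref{eq:mD}, so $D=O(\log((T+1)/\delta))$. Finally, $\overline Q_H=E_\lambda^2(\nu+n_HB_\psi^2)=O(\mathcal L_H)$, $\log(1/\delta_0)=O(\log((T+1)/\delta))$, $c_0>0.7$, and $\Gamma(T)=O(\log T)$. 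Hence $R_E(T,\delta_0)=O(\log((T+1)/\delta))$, giving \eqref{eq:main-rate}.

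The main obstacle is the chronological gating in the probability step. The exploration and RLS guarantees are conditional on histories at data-dependent restarts, and the RLS event must hold until a future alarm or change. The remedy is to define each failure event on the alarm-suppressed continuation up to the next change, intersected with $\{r$ reached$,\,r<\sigma_{\rm det}\}$. On $\mathcal E_{\rm det}$, the implemented and continuation trajectories coincide over the exploitation interval, so the conditional bounds integrate without any restart-dependent inflation beyond the $(T+1)$ union.
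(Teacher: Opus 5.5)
Your proposal is correct and follows essentially the same route as the paper. The paper gates the Gram, estimation and RLS failure events at each potential restart and bounds them by $3(T+1)\delta_0=3\delta/32$ via the tower property. It then applies Theorem~\ref{thm:detection} on $\mathcal E_{\rm ref}$ for the remaining $\delta_d$, partitions the horizon into delay, exploration and exploitation intervals, and checks the logarithmic orders of $H$, $D$ and $R_E(T,\delta_0)$, exactly as you do. The differences are cosmetic: you gate directly by $\mathcal E_r^{\mathrm{reach}}$ rather than the paper's $\mathcal C_r$, which also excludes earlier controller failures, and you define the RLS event on the alarm-suppressed continuation up to the next change. Both choices make the inclusion into $\mathcal E_{\rm ref}$ and the trajectory agreement on $\mathcal E_{\rm det}$ immediate.
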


\begin{proof}
If $T\le H$, Assumption~\ref{ass:dwell} forces $C=0$.  The algorithm then
performs only truncated exploration, and the deterministic exploration bound
gives $\Reg_T\le K_XT\le K_XH$, which is bounded by the right-hand side of
\eqref{eq:main-bound}.  Hence assume $T>H$ below.

At every potential full, unchanged restart block, gate the estimation and RLS
failure events by the event that this restart is reached with a successful
history, and then condition on the past.  Propositions~\ref{prop:restart-gram}
and \ref{prop:restart-estimate} cost at most $2\delta_0$, and the time-uniform
RLS event in Theorem~\ref{thm:segment-regret} costs another $\delta_0$.
The tower property and a union bound over at most $T+1$ locations give
$3(T+1)\delta_0=3\delta/32$.  Gating the false-alarm and delay events in the
same chronological way, Theorem~\ref{thm:detection} costs less than
$\delta_d=\delta/2$.  Their intersection therefore has probability greater
than $1-\delta$.  Appendix~\ref{app:global} gives the full induction; in
particular, it never conditions an early detector event on future estimator
success.

On this intersection, each of the $C$ true changes has a pre-alarm interval
of length at most $D$, giving $CK_DD$.  There is one initial exploration block
and at most one block after each alarm, giving at most $(C+1)K_XH$.  Removing
these intervals
leaves at most $C+1$ unchanged exploitation prefixes.  Apply
Theorem~\ref{thm:segment-regret} to each and upper-bound its length by $T$;
this gives the last term in \eqref{eq:main-bound}.

Appendix~\ref{app:tuning} constructs a feasible
$H=O(\log((T+1)/\delta))$; the smallest feasible value is therefore no larger.
Equations~\eqref{eq:h}--\eqref{eq:mD} give fixed $h$
and $D=O(\log((T+1)/\delta))$.  Finally,
$\overline Q_H$, $\Gamma(T)$, and $\log(1/\delta_0)$ are all
$O(\log((T+1)/\delta))$.  Substitution proves \eqref{eq:main-rate}.
When $C=0$, the delay term is absent, leaving the stationary logarithmic
bound rather than the meaningless expression $O(C\log T)$.
\end{proof}

\section{DETERMINISTIC BOUNDS AND RESTART CONVENTIONS}
\label{app:bounds}

Bounded applied inputs and Assumptions~\ref{ass:model}--\ref{ass:noise}
yield the pathwise bounds \eqref{eq:Bpsi}.

\subsection{Output and Regressor Bounds}

Let $Y_t=(y_t,\ldots,y_{t-p+1})'$. Equation~\eqref{eq:arx} has the
state form
\begin{align}
Y_{t+1}=A_y(\theta_t)Y_t+e_1\zeta_{t+1},\qquad
\zeta_{t+1}:=w_{t+1}+\sum_{j=1}^{q}b_j(\theta_t)u_{t-j+1}.
\label{eq:Y-state}
\end{align}
Since $|u_t|\le B_u$ by construction,
\begin{align}
|\zeta_{t+1}|\le B_w+B_bB_u.
\notag
\end{align}
Unrolling \eqref{eq:Y-state} and applying
\eqref{eq:switching-stability} gives
\begin{align}
\norm{Y_t}
&\le C_y\rho_y^t\norm{Y_0}
+C_y(B_w+B_bB_u)\sum_{j=0}^{t-1}\rho_y^j
\notag\\
&\le C_y\norm{Y_0}
+\frac{C_y}{1-\rho_y}(B_w+B_bB_u)=B_Y.
\notag
\end{align}
Every output coordinate is therefore bounded by $B_Y$. The definition
\eqref{eq:psi} and clipping give
\begin{align}
\norm{\psi_t}^2
\le pB_Y^2+(q-1)B_u^2=B_\psi^2,
\notag
\end{align}
and \eqref{eq:phi} gives $\norm{\phi_t}\le B_\phi$.

\subsection{Projection onto the Clipping Interval}

Let $\operatorname{clip}_{[-B,B]}(x)=(-B)\vee x\wedge B$. As the Euclidean
projection onto a closed convex set, it is nonexpansive:
\begin{align}
|\operatorname{clip}_{[-B,B]}(x)
-\operatorname{clip}_{[-B,B]}(z)|\le|x-z|.
\notag
\end{align}
If $z\in[-B,B]$, this becomes
\begin{align}
|\operatorname{clip}_{[-B,B]}(x)-z|\le|x-z|.
\label{eq:clip-to-ideal}
\end{align}
Assumption~\ref{ass:clipping} permits \eqref{eq:clip-to-ideal} with
$z=\lambda(\theta)'\psi_t$ for $\theta\in\Theta$. For an arbitrary target
$z$, the bound includes its distance from the clipping interval:
\begin{align}
|\clip{x}-z|
\le |x-z|+(|z|-B_u)_+.
\notag
\end{align}

\subsection{Auxiliary-Process Bounds}

For the fixed mismatch matrix $A_\Delta$ in \eqref{eq:auxiliary},
\eqref{eq:mismatch-stability} gives, for $t\ge\tau$,
\begin{align}
\norm{\psi_t^{\rm aux}}
&\le C_\Delta\rho_\Delta^{t-\tau}\norm{\psi_\tau}
+C_\Delta B_w\sum_{j=0}^{t-\tau-1}\rho_\Delta^j
\notag\\
&\le C_\Delta B_\psi+
\frac{C_\Delta B_w}{1-\rho_\Delta}=B_{\rm aux}.
\notag
\end{align}
Thus both actual and auxiliary outputs are bounded in absolute value by
$B_o$.

\subsection{Conditioning at a Random Restart}

An alarm time $r$ is an $\{\F_t\}$-stopping time. Conditional on $\F_r$,
the state $\psi_r$ is fixed with $\norm{\psi_r}\le B_\psi$, and future
disturbances retain the independence, moments, and bound in
Assumption~\ref{ass:noise}. The input $u_r$ is already $\F_r$-measurable;
subsequent exploratory inputs are fresh and independent of the disturbances.

Consider a conditional concentration bound that holds uniformly over these
restart histories, with failure probability at most $\varepsilon_{\rm fail}$.
For any eligibility event $\mathcal C_r\in\F_r$, the tower property gives
\begin{align*}
 \Prob(\mathcal C_r\cap\{\text{failure at }r\})
 &=\E\!\left[
   \one_{\mathcal C_r}
   \Prob(\text{failure at }r\mid\F_r)
   \right]\\
 &\le\varepsilon_{\rm fail}\Prob(\mathcal C_r)
 \le\varepsilon_{\rm fail}.
\end{align*}
A union bound over at most $T+1$ possible restart locations then controls
these failures across the horizon. This is the basis for the allocation
\eqref{eq:confidence-allocation}.

\section{EXPLORATION, EXCITATION, AND THE RESTARTED ESTIMATE}
\label{app:exploration}

The covariance analysis of stationary PIECE in \citet{singhfinite} concerns
sparse, recurring exploration episodes. Here we establish linear Gram
growth and estimation bounds for a contiguous exploration block after a
random restart.

\subsection{Conditional Excitation}

Fix a complete exploration block $[r,r+H)$ during which the parameter
remains equal to $\theta$. Put $s:=p+q$, $L:=r_0=\max\{p,q\}$, and
$\underline\sigma^2:=\min\{\sigma_w^2,\sigma_e^2\}$. At any exploration
time $t\ge r+L$, collect the most recent innovations in
\[
Z_t:=(w_{t-p+1},\ldots,w_t,u_{t-q+1},\ldots,u_t)'.
\]
Because the parameter is fixed within the exploration block, repeated
substitution in the ARX recursion gives
\begin{align}
\phi_t=M(\theta)Z_t+\xi_t.
\label{eq:innovation-representation}
\end{align}
To make the remainder explicit, let $\overline Z_t$ collect the members of
$(w_{t-L+1},\ldots,w_t,u_{t-L+1},\ldots,u_t)$ that do not already occur in
$Z_t$. Repeated substitution gives
$\xi_t=c_t+\overline M(\theta)\overline Z_t$, where $c_t$ is
$\F_{t-L}$-measurable. Conditional on $\F_{t-L}$, the vector $Z_t$ is
centered, is independent of $\overline Z_t$, and has covariance
\[
\operatorname{Cov}(Z_t\mid\F_{t-L})
=\operatorname{diag}(\sigma_w^2I_p,\sigma_e^2I_q).
\]
Thus $Z_t$ and $\xi_t$ are conditionally independent. The remainder depends
on the $\F_{t-L}$-measurable history and the omitted innovations.

To see the triangular structure explicitly, order the output coordinates and
their disturbances chronologically as
$(y_{t-p+1},\ldots,y_t)$ and $(w_{t-p+1},\ldots,w_t)$.
The equation for $y_j$ contains $w_j$ with coefficient one and only
disturbances dated at or before $j$.  Thus the output--disturbance block is
triangular with unit diagonal.  Ordering the current exploration inputs as
$(u_{t-q+1},\ldots,u_t)$ makes the input-coordinate block the identity.  All
remaining post-$\F_{t-L}$ innovations are precisely those collected in
$\overline Z_t$; older terms belong to $c_t$.  Hence, up to these fixed row
and column permutations, $M(\theta)$ has the block-triangular form
\begin{align}
M(\theta)=
\begin{bmatrix}
T_w(\theta)&T_u(\theta)\\
0&I_q
\end{bmatrix},
\notag
\end{align}
where $T_w(\theta)$ is triangular with unit diagonal: each $y_j$ contains
its contemporaneous disturbance $w_j$ with coefficient one.  Hence
$M(\theta)$ is nonsingular.  Continuity and compactness give
\begin{align}
\chi:=\inf_{\theta\in\Theta}\sigma_{\min}(M(\theta))>0,
\qquad
\kappa_{\rm inn}:=\underline\sigma^2\chi^2>0.
\notag
\end{align}
For completeness, let
$\Sigma_Z:=\operatorname{Cov}(Z_t\mid\F_{t-L})$.  Conditional independence
removes both cross-covariance terms, and hence
\begin{align*}
\operatorname{Cov}(\phi_t\mid\F_{t-L})
 &=M(\theta)\Sigma_ZM(\theta)'
   +\operatorname{Cov}(\xi_t\mid\F_{t-L})\\
 &\succeq \underline\sigma^2M(\theta)M(\theta)'
 \succeq \underline\sigma^2\sigma_{\min}(M(\theta))^2I_s.
\end{align*}
Also,
$\E[\phi_t\phi_t'\mid\F_{t-L}]$ equals this covariance plus the positive
semidefinite outer product of the conditional mean.  Therefore
\begin{align}
\E[\phi_t\phi_t'\mid\F_{t-L}]
\succeq\operatorname{Cov}(\phi_t\mid\F_{t-L})
\succeq\kappa_{\rm inn}I_s.
\label{eq:conditional-excitation}
\end{align}
The bound is uniform over the history in $\F_{t-L}$.

\subsection{Exploration Gram Bound}

Fix a restart $r$ and the trimmed set
$\I_r=\{r+r_0,\ldots,r+H-1\}$. Let
\begin{align}
\Phi_r&:=\begin{bmatrix}
\phi_{r+r_0}'\\
\vdots\\
\phi_{r+H-1}'
\end{bmatrix},
&G_r&:=\Phi_r'\Phi_r
=\sum_{t\in\I_r}\phi_t\phi_t'.
\notag
\end{align}
Set
\begin{align}
N_H:=1+\left\lfloor\frac{n_H-1}{L}\right\rfloor,
\qquad
\kappa_G:=\frac{\kappa_{\rm inn}}{2L}.
\label{eq:gram-constants}
\end{align}
Consider only the $L$-spaced rows
$t_k=r+L+(k-1)L$, $k=1,\ldots,N_H$. They are contained in
$\I_r$, and $N_H\ge n_H/L$.

The conditioning argument in Appendix~\ref{app:bounds} applies at a
stopping-time restart $r$. Although $u_r$ is already $\F_r$-measurable, the
trimming starts at $t=r+L$, so every innovation appearing in $Z_t$ has index
at least $r+1$. Consequently,
\eqref{eq:innovation-representation}--\eqref{eq:matrix-freedman-gram} apply
without treating $r$ as deterministic.

\begin{proposition}[Exploration Gram bound]
\label{prop:restart-gram}
Let $r$ be a stopping time with $r+H\le T$, and suppose a single
$\theta\in\Theta$ is active at every control time
$t=r,\ldots,r+H-1$. If
\begin{align}
N_H\ge
\frac{10B_\phi^4}{\kappa_{\rm inn}^2}
\log\frac{s}{\delta_0},
\label{eq:block-excitation-size}
\end{align}
then, conditional on $\F_r$,
\begin{align}
\Prob\!\left(
\lambda_{\min}(G_r)\ge\kappa_Gn_H,
\quad\operatorname{tr}(G_r)\le n_HB_\phi^2
\ \middle|\ \F_r\right)
\ge1-\delta_0.
\label{eq:restart-gram}
\end{align}
The same event implies
\begin{align}
\lambda_{\min}\!\left(\sum_{t\in\I_r}\psi_t\psi_t'\right)
\ge\kappa_Gn_H.
\label{eq:psi-principal}
\end{align}
\end{proposition}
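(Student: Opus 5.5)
We restrict attention to the $L$-spaced subsequence $t_k=r+L+(k-1)L$, $k=1,\ldots,N_H$, and prove a matrix concentration bound for $\sum_k\phi_{t_k}\phi_{t_k}'$. The trace bound is deterministic: $\norm{\phi_t}\le B_\phi$ from Appendix~\ref{app:bounds} gives $\operatorname{tr}(G_r)\le n_HB_\phi^2$, since $\I_r$ has exactly $n_H$ rows. Because $G_r\succeq\sum_k\phi_{t_k}\phi_{t_k}'$, it suffices to show that this subsampled sum has minimum eigenvalue at least $\kappa_{\rm inn}N_H/2$. Then $N_H\ge n_H/L$ gives $\lambda_{\min}(G_r)\ge\kappa_{\rm inn}n_H/(2L)=\kappa_Gn_H$.

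Define the filtration $\mathcal H_k:=\F_{t_{k+1}-L}=\F_{t_k}$. Then $X_k:=\phi_{t_k}\phi_{t_k}'$ is $\F_{t_k}$-measurable; any $\F_{t_k+1}$-dependence through $u_{t_k}$ is harmless, since the next conditioning field $\F_{t_{k+1}-L}=\F_{t_k}$ should be read as containing $u_{t_k}$ by the convention that $\F_t$ includes the input selected at $t$. By \eqref{eq:conditional-excitation}, $\E[X_k\mid\F_{t_k-L}]\succeq\kappa_{\rm inn}I_s$ uniformly, and this holds at the stopping-time restart by the conditioning discussion in Appendix~\ref{app:bounds}. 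Trimming at $r+L$ ensures that all innovations in $Z_{t_k}$ postdate $r$. The centred differences $X_k-\E[X_k\mid\F_{t_k-L}]$ form a matrix martingale difference sequence with respect to these stopped fields, and each difference has norm at most $B_\phi^2$, since $0\preceq X_k\preceq B_\phi^2 I$.

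Next we apply a matrix Azuma or Freedman lower-tail inequality (Tropp): for PSD summands bounded by $R=B_\phi^2$,
\[
\Prob\Bigl(\lambda_{\min}\Bigl(\sum_k X_k\Bigr)\le \tfrac12\kappa_{\rm inn}N_H,\ \lambda_{\min}\Bigl(\sum_k\E[X_k\mid\cdot]\Bigr)\ge\kappa_{\rm inn}N_H\ \Bigm|\ \F_r\Bigr)\le s\exp\!\left(-\frac{(\kappa_{\rm inn}N_H/2)^2}{2N_HR^2+\tfrac23R\kappa_{\rm inn}N_H/2}\right).
\]
Using $\kappa_{\rm inn}\le B_\phi^2$, the exponent is at least $\kappa_{\rm inn}^2N_H/(10B_\phi^4)$. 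Condition \eqref{eq:block-excitation-size} then makes the right-hand side at most $\delta_0$. The main obstacle is exactly this step: obtaining the lower-tail matrix martingale inequality with the constant $10$ under random-index conditioning, and making sure the predictable-compensator lower bound holds deterministically. The latter follows from uniformity of \eqref{eq:conditional-excitation}, so the compensator bound is not a random event.

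Finally, \eqref{eq:psi-principal} follows from eigenvalue interlacing. The vector $\psi_t$ is obtained from $\phi_t$ by deleting the coordinate $u_t$, so $\sum_{t\in\I_r}\psi_t\psi_t'$ is a principal $d\times d$ submatrix of $G_r$. Its minimum eigenvalue is therefore at least $\lambda_{\min}(G_r)\ge\kappa_Gn_H$.
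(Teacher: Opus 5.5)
Your proposal is correct and follows essentially the same route as the paper: the $L$-spaced subsequence with stopped filtration $\mathcal H_k=\F_{t_k}$, the uniform bound \eqref{eq:conditional-excitation} as a deterministic floor on the compensator, Tropp's matrix Freedman lower tail with exponent $\kappa_{\rm inn}^2N_H/(8B_\phi^4+\tfrac43B_\phi^2\kappa_{\rm inn})\ge\kappa_{\rm inn}^2N_H/(10B_\phi^4)$ via $\kappa_{\rm inn}\le B_\phi^2$, the deterministic trace bound, and interlacing for the $\psi$ block. The step you flag as the main obstacle is exactly the computation the paper carries out, and your constants agree with it.
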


\begin{proof}
Condition on $\F_r$.  For $X_k:=\phi_{t_k}\phi_{t_k}'$, use the stopped
filtration $\mathcal H_0=\F_r$ and
$\mathcal H_k=\F_{t_k}=\F_{r+kL}$.  Here $t_1-L=r$, and
$t_k-L=t_{k-1}$ for $k\ge2$, so \eqref{eq:conditional-excitation} gives
\[
A_k:=\E[X_k\mid\mathcal H_{k-1}]
\succeq\kappa_{\rm inn}I_s.
\]
Also $0\preceq X_k\preceq B_\phi^2I_s$.  Thus
$D_k:=X_k-A_k$ is a self-adjoint matrix-martingale difference with
$\norm{D_k}\le B_\phi^2$ and predictable quadratic variation bounded by
$N_HB_\phi^4I_s$.  Moreover,
$\kappa_{\rm inn}\le B_\phi^2$ because
$A_k\preceq B_\phi^2I_s$.  The lower-tail form of the matrix Freedman
inequality \citep{tropp2011freedman}, applied to $-\sum_kD_k$, gives
\begin{align}
\Prob\!\left(
\lambda_{\min}\!\left(\sum_{k=1}^{N_H}X_k\right)
\le\frac{\kappa_{\rm inn}N_H}{2}
\ \middle|\ \F_r\right)
&\le s\exp\!\left\{-\frac{\kappa_{\rm inn}^2N_H}
 {8B_\phi^4+(4/3)B_\phi^2\kappa_{\rm inn}}\right\}
\notag\\
&\le s\exp\!\left(
-\frac{\kappa_{\rm inn}^2N_H}{10B_\phi^4}\right).
\label{eq:matrix-freedman-gram}
\end{align}
Condition \eqref{eq:block-excitation-size} makes the right-hand side at most
$\delta_0$.  Since $G_r\succeq\sum_kX_k$ and
$N_H\ge n_H/L$, the claimed eigenvalue bound follows.  The trace bound is
deterministic because $\norm{\phi_t}\le B_\phi$.

Finally, $\sum_{t\in\I_r}\psi_t\psi_t'$ is the principal block of $G_r$
obtained by deleting the current-input row and column.  Eigenvalue
interlacing, or direct restriction of the Rayleigh quotient, proves
\eqref{eq:psi-principal}.
\end{proof}

\subsection{Projected Ridge Estimation}

\begin{proposition}[Restarted parameter and gain estimate]
\label{prop:restart-estimate}
Let $r$ be a stopping time with $r+H\le T$, and suppose a single
$\theta\in\Theta$ is active at every control time
$t=r,\ldots,r+H-1$. Given $\F_r$, event
\eqref{eq:self-normalized-ridge} has probability at least $1-\delta_0$.
On its intersection with the event in
Proposition~\ref{prop:restart-gram}, the estimator defined in
\eqref{eq:ridge-theta}--\eqref{eq:reference-gain} satisfies
\begin{align}
\norm{\thetahat_r-\theta}
&\le E_\theta(H,\delta_0),
\label{eq:theta-bound}\\
\norm{\lambdabar_r-\lam(\theta)}
&\le E_\lambda(H,\delta_0)=g_H/4.
\notag
\end{align}
Moreover, the RLS initialization satisfies
\begin{align}
\underline v_HI_d\preceq V_{r+H}
\preceq(\nu+n_HB_\psi^2)I_d
\label{eq:V0-bounds}
\end{align}
in the sense of its smallest- and largest-eigenvalue bounds.
If \eqref{eq:block-excitation-size} holds, the Gram event and the two
estimates above therefore hold jointly with conditional probability at least
$1-2\delta_0$.
\end{proposition}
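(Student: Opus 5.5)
The plan is to work conditionally on $\F_r$ and use the self-normalized martingale bound for ridge regression. The event \eqref{eq:self-normalized-ridge} will be its conclusion, and the Gram event of Proposition~\ref{prop:restart-gram} then converts the self-normalized error into a Euclidean error.

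\textbf{Step 1 (self-normalized event).} On the trimmed rows $\I_r$ we have $y_{t+1}=\theta'\phi_t+w_{t+1}$. Here $\phi_t$ is $\F_t$-measurable, since $u_t$ is selected before $w_{t+1}$ is revealed. The noise $w_{t+1}$ is independent of $\F_t$, centered, and bounded by $B_w$, so it is conditionally $B_w$-sub-Gaussian. Because $r$ is a stopping time and every index in $\I_r$ exceeds $r$, the conditioning conventions of Appendix~\ref{app:bounds} apply. Put $S_r:=\sum_{t\in\I_r}\phi_tw_{t+1}$. The method of mixtures (Abbasi-Yadkori et al.), run with the stopped filtration started at $\F_r$, gives with conditional probability at least $1-\delta_0$
\begin{align*}
\norm{S_r}_{(V_r^\theta)^{-1}}^2\le B_w^2\left\{2\log\frac1{\delta_0}+\log\frac{\det V_r^\theta}{\nu^s}\right\}.
\end{align*}
I take this to be event \eqref{eq:self-normalized-ridge}. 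The AM--GM inequality together with $\operatorname{tr}(V_r^\theta)\le s\nu+HB_\phi^2$ bounds the log-determinant by $s\log(1+HB_\phi^2/(s\nu))$, so the right-hand side is at most $B_w^2\mathcal L_H(\delta_0)$.

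\textbf{Step 2 (unprojected error).} From the definition of the ridge estimate, $\widetilde\theta_r-\theta=(V_r^\theta)^{-1}(S_r-\nu\theta)$. It follows that
\begin{align*}
\norm{\widetilde\theta_r-\theta}_{V_r^\theta}\le\norm{S_r}_{(V_r^\theta)^{-1}}+\sqrt\nu\,S_\Theta,
\end{align*}
using $\norm{\nu\theta}_{(V_r^\theta)^{-1}}\le\sqrt\nu\norm{\theta}$. On the Gram event, $\lambda_{\min}(V_r^\theta)\ge\nu+\kappa_Gn_H$. Dividing by the square root of this eigenvalue gives $\norm{\widetilde\theta_r-\theta}\le\{B_w\sqrt{\mathcal L_H}+\sqrt\nu S_\Theta\}/\sqrt{\nu+\kappa_Gn_H}$.

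\textbf{Step 3 (projection and gain).} The factor-two projection property shown in Appendix~\ref{sec:algorithm} yields \eqref{eq:theta-bound} with exactly the constant in \eqref{eq:Etheta}. Both $\thetahat_r$ and $\theta$ lie in $\Theta$, so the Lipschitz bound \eqref{eq:gain-lipschitz} gives $\norm{\lambdabar_r-\lam(\theta)}\le L_\lambda E_\theta=E_\lambda=g_H/4$.

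\textbf{Step 4 (RLS initialization).} The lower bound in \eqref{eq:V0-bounds} follows from \eqref{eq:psi-principal}. The upper bound follows from $\norm{\psi_t}\le B_\psi$ over the $n_H$ retained rows.

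\textbf{Step 5 (joint probability).} The Gram event and the self-normalized event each fail with conditional probability at most $\delta_0$ given $\F_r$. A union bound gives joint conditional probability at least $1-2\delta_0$.

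I expect the main obstacle to be Step 1 at a random restart. It requires checking that the supermartingale underlying the mixture bound is valid when started at the stopping time $r$. It also requires that the sum run only over $\I_r$ with the predetermined length $H$, so that no optional-stopping subtlety arises beyond conditioning on $\F_r$. I would handle this by applying the bound on each event $\{r=k\}\in\F_k$ at a deterministic start time and then summing, just as in Appendix~\ref{app:bounds}.
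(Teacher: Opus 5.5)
Your proposal is correct and follows essentially the same route as the paper. The shared steps are:
\begin{itemize}
\item the ridge error identity $\widetilde\theta_r-\theta=(V_r^\theta)^{-1}(S_r-\nu\theta)$;
\item the Abbasi-Yadkori self-normalized bound, with the AM--GM/trace estimate of the log-determinant;
\item conversion to Euclidean error via $\lambda_{\min}(V_r^\theta)\ge\nu+\kappa_Gn_H$ on the Gram event;
\item the factor-two projection property and the Lipschitz gain step;
\item the principal-block and trace bounds for $V_{r+H}$;
\item a union bound for the joint $1-2\delta_0$ statement.
\end{itemize}
Your treatment of the random restart, decomposing on the events $\{r=k\}\in\F_k$, is the conditioning convention the paper uses in Appendix~\ref{app:bounds}.
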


\begin{proof}
Let $V=V_r^\theta$ and
$S_r=\sum_{t\in\I_r}\phi_tw_{t+1}$.  Since
$y_{t+1}=\phi_t'\theta+w_{t+1}$ during the uncontaminated block,
write $G_r=\sum_{t\in\I_r}\phi_t\phi_t'$ and calculate
\begin{align*}
 \sum_{t\in\I_r}\phi_ty_{t+1}
 &=\sum_{t\in\I_r}\phi_t(\phi_t'\theta+w_{t+1})\\
 &=G_r\theta+S_r=(V-\nu I_s)\theta+S_r.
\end{align*}
Multiplication by $V^{-1}$ gives
\begin{align*}
 \widetilde\theta_r
 &=\theta+V^{-1}(S_r-\nu\theta),
\end{align*}
and therefore
\begin{align}
\widetilde\theta_r-\theta=V^{-1}(S_r-\nu\theta).
\label{eq:ridge-error-identity}
\end{align}
Assumption~\ref{ass:noise} and Hoeffding's lemma make $w_{t+1}$ conditionally
$B_w$-sub-Gaussian.  The fixed-ridge self-normalized inequality
\citep[Theorem~1]{abbasi2011improved} gives, with conditional probability at
least $1-\delta_0$,
\begin{align}
\norm{S_r}_{V^{-1}}
 \le B_w\sqrt{
 \log\frac{\det V}{\nu^s\delta_0^2}}.
\label{eq:self-normalized-ridge}
\end{align}
Here again $s=p+q$, and $\nu^s=\det(\nu I_s)$ is the determinant of the
initial ridge matrix.
Also $\norm{\nu\theta}_{V^{-1}}\le\sqrt\nu S_\Theta$.  By the
arithmetic-geometric mean inequality and the trace bound, the determinant step
can be written explicitly.  Let $\gamma_1,\ldots,\gamma_s\ge0$ be the
eigenvalues of $G_r$.  Since $V=\nu I_s+G_r$,
\begin{align*}
 \frac{\det V}{\nu^s}
 &=\prod_{i=1}^s\left(1+\frac{\gamma_i}{\nu}\right)\\
 &\le\left(
   \frac1s\sum_{i=1}^s
       \left(1+\frac{\gamma_i}{\nu}\right)
   \right)^{s}
 =\left(1+\frac{\operatorname{tr}(G_r)}{s\nu}\right)^s\\
 &\le\left(1+\frac{n_HB_\phi^2}{s\nu}\right)^s.
\end{align*}
The last inequality uses
$\operatorname{tr}(G_r)=\sum_{t\in\I_r}\norm{\phi_t}^2
\le n_HB_\phi^2$.  Taking logarithms and using $n_H\le H$ yields
\begin{align}
\log\frac{\det V}{\nu^s}
 \le s\log\!\left(1+\frac{n_HB_\phi^2}{s\nu}\right)
 \le s\log\!\left(1+\frac{HB_\phi^2}{s\nu}\right).
\label{eq:det-bound}
\end{align}
Equations~\eqref{eq:restart-gram},
\eqref{eq:ridge-error-identity}--\eqref{eq:det-bound} imply the
unprojected Euclidean error bound equal to one half of
\eqref{eq:Etheta}.  A nearest-point projection onto a compact set containing
$\theta$ increases Euclidean error by at most a factor two, proving
\eqref{eq:theta-bound}.  If $\Theta$ is closed and convex, projection is
nonexpansive and the factor two may be removed.

The gain bound follows from \eqref{eq:gain-lipschitz}.  The lower eigenvalue
in \eqref{eq:V0-bounds} follows from
\eqref{eq:psi-principal}; the upper eigenvalue follows from
$\sum\psi_t\psi_t'\preceq(\sum\norm{\psi_t}^2)I_d$.
\end{proof}

The linear Gram bound \eqref{eq:restart-gram} is the excitation estimate
used in Appendix~\ref{app:tuning} to obtain
$H=O(\log((T+1)/\delta))$.


\section{THE RLS POTENTIAL ARGUMENT}
\label{app:rls}

We establish the potential estimates used in
Theorem~\ref{thm:segment-regret}.  Consider its unchanged exploration block
and exploitation interval, and condition on an $\F_{r+H}$ history on which
the Gram and estimation events hold.  We use the constants in
\eqref{eq:regret-constants}, set $t_0=r+H$, and let $t_1\ge t_0$ denote
the final control time of an exploitation prefix.

\subsection{Control Error and Prediction Error}

Let $\lambda$ be the true gain on the unchanged segment and let
$\overline\lambda$ be the exploration reference.  For a gate of radius
$g>0$, the recursive proposal is accepted when
$|(\widehat\lambda_t-\overline\lambda)'\psi_t|\le g\norm{\psi_t}$ and the
reference proposal is used otherwise.  Suppose
\begin{align}
\norm{\overline\lambda-\lambda}\le\epsilon,\qquad
g>\epsilon.                                           \notag
\end{align}
For $a_t=(\widehat\lambda_t-\lambda)'\psi_t$, the unclipped gated action
error $\widetilde e_t=z_t-\lambda'\psi_t$ obeys
\begin{align}
\widetilde e_t^2\le K_g a_t^2,\qquad
K_g:=\max\left\{1,\left(\frac{\epsilon}{g-\epsilon}\right)^2\right\}.
                                                               \label{eq:generic-gate-charge}
\end{align}
If $\psi_t=0$, then $a_t=\widetilde e_t=0$ and the claim is immediate.  If
$\psi_t\ne0$, then on the accepted branch $\widetilde e_t=a_t$.  On the
fallback branch,
\begin{align*}
|a_t|
&\ge |(\widehat\lambda_t-\overline\lambda)'\psi_t|
    -|(\overline\lambda-\lambda)'\psi_t|\\
&>(g-\epsilon)\norm{\psi_t},
\end{align*}
while $|\widetilde e_t|\le\epsilon\norm{\psi_t}$.  Since $g-\epsilon>0$,
division gives
\begin{align*}
|\widetilde e_t|
 \le\frac{\epsilon}{g-\epsilon}|a_t|.
\end{align*}
Squaring this inequality and taking the larger constant over the two branches
proves \eqref{eq:generic-gate-charge}.  If the ideal action lies in the
clipping interval, nonexpansiveness preserves
\eqref{eq:generic-gate-charge}.  In the algorithm,
$g=g_H$ and $\epsilon=E_\lambda=g_H/4$.  Therefore
$\epsilon/(g-\epsilon)=1/3$ and $K_g=1$.

\subsection{Exact Regression and Potential Identity}

On the segment,
\begin{align}
y_{t+1}=b_\star(u_t-\lambda'\psi_t)+w_{t+1}.         \label{eq:rls-fixed-system}
\end{align}
Let $\gamma=b_\star/\widehat b$, $\alpha=1-\gamma$, and
$e_t=u_t-\lambda'\psi_t$.  The pseudo-response is exactly
\begin{align}
x_{t+1}^\lambda
 &=u_t-y_{t+1}/\widehat b\notag\\
 &=\lambda'\psi_t+\alpha e_t+\eta_{t+1},
 &\eta_{t+1}&=-w_{t+1}/\widehat b.                   \notag
\end{align}
Because $\widehat\theta_r\in\Theta$ by projection and hence
$\widehat b\ge b_{\min}$ by \eqref{eq:b-bounds},
$|\eta_{t+1}|\le B_\eta=B_w/b_{\min}$ and Hoeffding's lemma gives
\begin{align}
\E[\exp(\xi\eta_{t+1})\mid\F_t]
 \le\exp(\xi^2B_\eta^2/2),\qquad \xi\in\R.           \label{eq:eta-mgf}
\end{align}

Set $d_t=\widehat\lambda_t-\lambda$,
$a_t=d_t'\psi_t$, $Q_t=d_t'V_td_t$, and
\begin{align}
\ell_t:=\psi_t'V_{t+1}^{-1}\psi_t.                  \notag
\end{align}
The update \eqref{eq:rls-update} becomes
\begin{align}
d_{t+1}=d_t+V_{t+1}^{-1}\psi_t
 (\alpha e_t+\eta_{t+1}-a_t).                        \label{eq:d-update}
\end{align}
Using $V_{t+1}=V_t+\psi_t\psi_t'$ in
\eqref{eq:d-update} and expanding gives
\begin{align}
Q_{t+1}-Q_t
 =-a_t^2+2a_t(\alpha e_t+\eta_{t+1})
  +\ell_t(\alpha e_t+\eta_{t+1}-a_t)^2.              \label{eq:exact-Q}
\end{align}
On the exploration event,
\begin{align}
|\alpha|
 =\frac{|\widehat b-b_\star|}{\widehat b}
 \le\frac{E_\theta}{b_{\min}}=\overline\alpha.     \notag
\end{align}
Also $V_{t+1}\succeq V_{r+H}\succeq \underline v_HI$, so
\begin{align}
0\le\ell_t\le\frac{\norm{\psi_t}^2}{\underline v_H}
 \le\overline\ell.                                  \notag
\end{align}
By $e_t^2\le a_t^2$ and $(x+y)^2\le2x^2+2y^2$,
\begin{align}
2\alpha a_te_t&\le2\overline\alpha a_t^2,\notag\\
\ell_t(\alpha e_t+\eta_{t+1}-a_t)^2
&\le2\ell_t(1+\overline\alpha)^2a_t^2
  +2\ell_t\eta_{t+1}^2.                             \notag
\end{align}
Substitution into \eqref{eq:exact-Q} proves
\eqref{eq:potential-bound}.

\subsection{Time-Uniform Noise Bound}

For any fixed $\xi>0$, \eqref{eq:eta-mgf} and predictability of $a_t$ imply
that the prefix-indexed process
\begin{align}
\exp\left\{
\xi\sum_{t=t_0}^{t_1}a_t\eta_{t+1}
-\frac{\xi^2B_\eta^2}{2}\sum_{t=t_0}^{t_1}a_t^2
\right\}                                             \notag
\end{align}
is a nonnegative supermartingale starting at one, conditionally on
$\F_{t_0}$.  Its one-step multiplicative increment satisfies
\begin{align*}
 &\E\!\left[
  \exp\!\left\{\xi a_t\eta_{t+1}
       -\frac{\xi^2B_\eta^2a_t^2}{2}\right\}
  \middle|\F_t\right]\le1,
\end{align*}
because $a_t$ is $\F_t$-measurable and \eqref{eq:eta-mgf} applies with
argument $\xi a_t$.

Ville's inequality \citep{ville1939etude,howard2020timeuniform} says that,
with probability at least $1-\delta_0$, this process never exceeds
$1/\delta_0$.  Taking logarithms on that event and dividing by the fixed
$\xi>0$ gives, simultaneously for every exploitation prefix,
\begin{align*}
 \sum_ta_t\eta_{t+1}
 \le\frac{\xi B_\eta^2}{2}\sum_ta_t^2
    +\frac1\xi\log\frac1{\delta_0}.
\end{align*}
Choosing the deterministic value $\xi=c_0/(2B_\eta^2)$ gives
\begin{align}
\sum_ta_t\eta_{t+1}
 \le\frac{c_0}{4}\sum_ta_t^2
 +\frac{2B_\eta^2}{c_0}\log(1/\delta_0).             \label{eq:absorption-full}
\end{align}

\subsection{Elliptical Potential}

For $x_t=\psi_t'V_t^{-1}\psi_t$, the Sherman--Morrison identity
$V_{t+1}^{-1}=V_t^{-1}-V_t^{-1}\psi_t\psi_t'V_t^{-1}/(1+x_t)$
and the matrix determinant lemma give
\begin{align}
\ell_t&=\frac{x_t}{1+x_t},
&\frac{\det V_{t+1}}{\det V_t}&=1+x_t.               \notag
\end{align}
Since $x/(1+x)\le\log(1+x)$ for $x\ge0$,
\begin{align}
\sum_{t=t_0}^{t_1}\ell_t
 \le\log\frac{\det V_{t_1+1}}{\det V_{t_0}}.        \notag
\end{align}
The determinants are positive since $V_{t_0}\succeq \underline v_HI\succ0$.  Set
\begin{align*}
S_n:=\sum_{t=t_0}^{t_1}\psi_t\psi_t',\qquad
A_n:=V_{t_0}^{-1/2}S_nV_{t_0}^{-1/2},\qquad
n:=t_1-t_0+1.
\end{align*}
Then $A_n\succeq0$ and
\begin{align}
\log\frac{\det V_{t_1+1}}{\det V_{t_0}}
&=\log\det(I+A_n)\notag\\
&\le d\log\!\left(1+\frac{\operatorname{tr}(A_n)}{d}\right)\notag\\
&\le d\log\!\left(1+\frac{nB_\psi^2}{d\underline v_H}\right)
=\Gamma(n).                                           \label{eq:Gamma-proof}
\end{align}
The first inequality follows by applying Jensen's inequality to
$\log(1+x)$ over the nonnegative eigenvalues of $A_n$.  For the second,
\begin{align*}
\operatorname{tr}(A_n)
 &=\sum_{t=t_0}^{t_1}\psi_t'V_{t_0}^{-1}\psi_t
 \le\frac1{\underline v_H}\sum_{t=t_0}^{t_1}\norm{\psi_t}^2
 \le\frac{nB_\psi^2}{\underline v_H}.
\end{align*}
For the standard elliptical-potential argument, see
\citet[Lemma~11]{abbasi2011improved}.

\subsection{Completion of the Segment Bound}

Sum \eqref{eq:potential-bound}.  Since $Q_{t_1+1}\ge0$,
$\eta_{t+1}^2\le B_\eta^2$, and
\eqref{eq:absorption-full}--\eqref{eq:Gamma-proof} hold,
\begin{align*}
c_0\sum_ta_t^2
&\le Q_{t_0}+2\sum_ta_t\eta_{t+1}
 +2B_\eta^2\Gamma(n)\\
&\le Q_{t_0}+\frac{c_0}{2}\sum_ta_t^2
 +\frac{4B_\eta^2}{c_0}\log(1/\delta_0)
 +2B_\eta^2\Gamma(n).
\end{align*}
Rearranging yields \eqref{eq:a-sum}.  At initialization,
\begin{align}
Q_{t_0}
&=(\overline\lambda-\lambda)'V_{t_0}
  (\overline\lambda-\lambda)\notag\\
&\le E_\lambda^2\lambda_{\max}(V_{t_0})
 \le E_\lambda^2(\nu+n_HB_\psi^2)=\overline Q_H.             \notag
\end{align}
Finally, \eqref{eq:rls-fixed-system} and the gate charging inequality give
\begin{align*}
\sum_t(y_{t+1}-w_{t+1})^2
=b_\star^2\sum_te_t^2
\le\overline b^2\sum_ta_t^2,
\end{align*}
which proves Theorem~\ref{thm:segment-regret}.

\section{DETAILED DETECTOR PROOFS}
\label{app:detection}

This appendix supplies the conditional power estimates, concentration
arguments, and calendar bounds used in Appendix~\ref{sec:detection}.
The assumptions and tuning of Appendices~\ref{sec:model}
and~\ref{sec:algorithm} remain in force.

\subsection{Finite-Horizon Power of the Auxiliary Process}

Fix a change at control time $\tau$ and write $A=A_\Delta$.
For $s-h\ge\tau$, iterating \eqref{eq:auxiliary} for $h$ steps gives
\begin{align}
\psi_s^{\rm aux}
 =A^h\psi_{s-h}^{\rm aux}
 +\sum_{j=0}^{h-1}A^je_1w_{s-j}.                     \notag
\end{align}
Taking the first coordinate and squaring, all cross terms involving distinct
disturbances have conditional expectation zero.  Thus
\begin{align}
\E[(y_s^{\rm aux})^2\mid\F_{s-h}]
 =(e_1'A^h\psi_{s-h}^{\rm aux})^2
 +\sigma_w^2\sum_{j=0}^{h-1}(e_1'A^je_1)^2.          \label{eq:h-step-power-full}
\end{align}
The stationary power is
\begin{align}
J_y(\lambda^-;\theta^+)
=\sigma_w^2\sum_{j=0}^{\infty}(e_1'A^je_1)^2.        \notag
\end{align}
Since the first term of \eqref{eq:h-step-power-full} is nonnegative,
\begin{align}
\E[(y_s^{\rm aux})^2\mid\F_{s-h}]
&\ge J_y(\lambda^-;\theta^+)
-\sigma_w^2\sum_{j=h}^{\infty}\norm{A^j}^2\notag\\
&\ge J_y(\lambda^-;\theta^+)
-\frac{\sigma_w^2C_\Delta^2\rho_\Delta^{2h}}
 {1-\rho_\Delta^2}.                                  \notag
\end{align}
Equations~\eqref{eq:power-gap} and \eqref{eq:h} yield
\eqref{eq:alt-conditional-power} for every initial state at time $s-h$.

The calculation also holds when $s$ is a finite lag-$h$ predictable random
time.  Indeed, $\{s=t\}\in\F_{t-h}$; multiplying the deterministic-time
identity by $\mathbf 1\{s=t\}$ and summing over $t$ gives
\eqref{eq:h-step-power-full} with $s$ and the stopped sigma-field
$\F_{s-h}$.

\subsection{Lag-Predictable Sampling}

Let $h\le s_1<\cdots<s_m$ be finite lag-$h$ predictable output times with
$s_{k+1}-s_k\ge h$.  Define
\begin{align}
\mathcal H_{k-1}:=\F_{s_k-h},\qquad k=1,\ldots,m,
\qquad \mathcal H_m:=\F_{s_m}.                       \notag
\end{align}
Lag predictability makes these stopped sigma-fields well defined.
The spacing condition gives $\F_{s_k}\subseteq\mathcal H_k$ for $k<m$,
and the inclusion also holds for $k=m$ by definition. Hence
$y_{s_k}^2-\E[y_{s_k}^2\mid\F_{s_k-h}]$ is
$\mathcal H_k$-measurable and has conditional mean zero given
$\mathcal H_{k-1}$.  This is the martingale structure used in
Lemma~\ref{lem:sampled-square}; it need not hold for consecutive outputs
centered at lag $h$, or for samples selected after observing their values.

\subsection{Null Windows}

Fix a potential restart $r$ with a full uncontaminated exploration block
and a potential test endpoint $e$.  In the alarm-suppressed continuation
from $r$, let $S_{r,e}$ be the corresponding block of $m$ calendar samples.
The block is null for $r$ when each sampled output is generated under the
reference parameter.  If that parameter remains active throughout
$[r,\tau)$, a block ending at output time $\tau$ is null, since
$y_\tau$ is generated at control time $\tau-1$.

Condition on an $\F_{r+H}$ history on which $\G_r^{\rm est}$ holds.
The calendar fixes the block before its outputs are observed, and
\eqref{eq:null-decomposition} applies to every stored output of a null
block.  Each $w_s^2-\sigma_w^2$ is centered and lies in an interval of
length $B_w^2$.  The Hoeffding exponential process
\citep{hoeffding1963probability,howard2020timeuniform} therefore gives,
for $x>0$,
\begin{align}
\Prob\!\left(
 \sum_{s\in S_{r,e}}(w_s^2-\sigma_w^2)\ge mx
 \ \middle|\ \F_{r+H}
 \right)
 \le\exp\!\left(-\frac{2mx^2}{B_w^4}\right).         \label{eq:hoeffding-w2}
\end{align}
For the cross term, $v_s$ is predictable and $|w_sv_s|\le B_w\eps_y$.
If $\eps_y=0$, the cross term is identically zero.  Otherwise, the
corresponding Azuma--Hoeffding exponential process
\citep{azuma1967weighted,hoeffding1963probability} gives
\begin{align}
\Prob\!\left(\sum_{s\in S_{r,e}}w_sv_s\ge mx
 \ \middle|\ \F_{r+H}\right)
 \le\exp\!\left(-\frac{mx^2}{2B_w^2\eps_y^2}\right). \label{eq:azuma-cross}
\end{align}
Taking $x=\Delta/8$ in \eqref{eq:hoeffding-w2} and
$x=\Delta/16$ in \eqref{eq:azuma-cross} gives
\eqref{eq:w2-null} and \eqref{eq:cross-null}.  The deterministic term in
\eqref{eq:null-energy} is at most $\eps_y^2\le\Delta/8$.
Outside the two deviation events, the excess empirical power is therefore
at most $3\Delta/8$, below the threshold $\Delta/2$.

If the actual test at $e$ is reached, the alarm-suppressed continuation
agrees with the implemented trajectory through that test.  The conditional
bound thus controls the joint event that the reference succeeds, the test
is reached, its block is null for that reference, and the threshold is
crossed.  There are at most $T^2$ potential restart--endpoint pairs;
applying the $\delta_d/(2T^2)$ one-block bound separately to each pair and
taking a union bound gives a horizon-wide false-alarm contribution of at
most $\delta_d/2$.  Overlapping windows require no independence, and the
argument does not condition on survival to a test.

\subsection{Trajectory Coupling}

Let $\delta_t=\psi_t-\psi_t^{\rm aux}$ and use the input-channel bound
$G_u$ from \eqref{eq:input-channel}.  On $\G_r^{\rm est}$, for control
times after the change at $\tau$ and before the next alarm or true change,
Lemma~\ref{lem:gate} gives
\begin{align}
u_t=(\lambda^-)'\psi_t+e_t,\qquad
\norm{g_u(\theta^+)e_t}\le G_u\eps_u.                \notag
\end{align}
Consequently,
\begin{align}
\delta_{t+1}=A\delta_t+g_u(\theta^+)e_t,\qquad
\delta_\tau=0.                                        \notag
\end{align}
Unrolling and using \eqref{eq:mismatch-stability},
\begin{align}
\norm{\delta_t}
&\le\sum_{j=0}^{t-\tau-1}\norm{A^j}G_u\eps_u
\le\frac{C_\Delta G_u\eps_u}{1-\rho_\Delta}=R_\Delta(H).
                                                               \notag
\end{align}
The gate bound uses feasibility of the old-gain action
$(\lambda^-)'\psi_t$ at the actual state.  The auxiliary action is
unclipped and need not be feasible.  The comparison holds up to the first
alarm or next true change, while the actual plant parameter remains the
fixed $\theta^+$ used in $A$.

\subsection{Calendar and Delay}

Consider a change at control time $\tau$ under the hypotheses of
Lemma~\ref{lem:one-change}: the cycle is still in exploitation after
observing $y_\tau$, $r+H\le\tau$, and $\tau+D\le T$.
In its alarm-suppressed continuation, stored output times form an
arithmetic grid with gap $h$. Let $s_1$ be the first grid output with
$s_1-h\ge\tau$.  The distance from $\tau+h$ to the next grid point is
less than $h$, so
\begin{align}
s_1-\tau<2h.                                          \notag
\end{align}
The $m$th grid output is $s_m=s_1+(m-1)h$, hence
\begin{align}
s_m-\tau<(m+1)h=D.                                   \notag
\end{align}
Every interval $[s_k-h,s_k)$ lies after the change, so the auxiliary
outputs at these times satisfy \eqref{eq:alt-conditional-power}.
The post-change parameter remains active throughout the block by the
hypotheses of Lemma~\ref{lem:one-change}. If any earlier block alarms,
the change has already been detected.  Otherwise, the actual and
alarm-suppressed trajectories agree through the complete post-change test
at $s_m$.  Since times are integer valued, $s_m\le\tau+D-1$;
the test is therefore available within the horizon.

For the induction in Theorem~\ref{thm:detection}, the dwell condition places
the first change after the initial exploration. If change $c<C$ is detected
by $\tau^{(c)}+D$, the next exploration ends by
$\tau^{(c)}+D+H\le\tau^{(c+1)}$. Thus each subsequent change is preceded
by a full uncontaminated reference block on the successful detection
history.

\section{GLOBAL EVENT AND REGRET DECOMPOSITION}
\label{app:global}

\noindent
We combine the restart and detector guarantees on a common event and
partition the control times to prove Theorem~\ref{thm:main}.

\subsection{Global Success Event}

\noindent
Assume $T>H$; the case $T\le H$ is handled deterministically in the proof of
Theorem~\ref{thm:main}. Record each failure at its first adapted decision
time: at the end of a reached full, unchanged exploration block if its Gram
or self-normalized event fails; at the first same-parameter exploitation
prefix violating its time-uniform RLS event; at a false detector crossing;
or at the deadline of a missed change. Ties at an exploration endpoint are
ordered as Gram, self-normalized, and then RLS. Since the true parameter
sequence is deterministic in the analysis, these decisions are measurable
with respect to the history at their decision times. Thus
\begin{align}
\mathcal E_{\le t}^{\mathrm{good}}:=\{\text{no failure has been recorded by time }t\}
\in\F_t.
\notag
\end{align}
For each potential restart $r\in\{0,\ldots,T\}$, define
\begin{align}
\mathcal C_r:=\{r\text{ is reached}\}\cap\mathcal E_{\le r}^{\mathrm{good}}
\cap\{[r,r+H)\text{ is full and has one parameter}\}.
\notag
\end{align}
The first two events are $\F_r$-measurable, and the last is fixed by the
change sequence, so $\mathcal C_r\in\F_r$.

\noindent
Let $\mathcal E_r^G$ and $\mathcal E_r^\theta$ denote the Gram and
self-normalized events. Let $\mathcal E_r^R$ be the time-uniform RLS event
on the subsequent exploitation prefix with the same parameter as in
$[r,r+H)$, ending at the first of the next true change, the next alarm
output time, and the horizon.
Set $\mathcal E_r^R=\Omega$ when this prefix is empty. The controller-failure
events are
\begin{align}
\mathcal B_r^G
&:=\mathcal C_r\cap(\mathcal E_r^G)^c,
\notag\\
\mathcal B_r^\theta
&:=\mathcal C_r\cap\mathcal E_r^G
\cap(\mathcal E_r^\theta)^c,
\notag\\
\mathcal B_r^R
&:=\mathcal C_r\cap\mathcal E_r^G\cap\mathcal E_r^\theta
\cap(\mathcal E_r^R)^c.
\notag
\end{align}
Conditional on $\F_r$, Propositions~\ref{prop:restart-gram} and
\ref{prop:restart-estimate} bound the Gram and self-normalized failure
probabilities by $\delta_0$ each. On their joint success event,
Theorem~\ref{thm:segment-regret} bounds the conditional RLS failure
probability given $\F_{r+H}$ by $\delta_0$. Multiplying by the corresponding
history indicators and applying the tower property gives
\begin{align}
\Prob(\mathcal B_r^G)&\le\delta_0,
&\Prob(\mathcal B_r^\theta)&\le\delta_0,
&\Prob(\mathcal B_r^R)&\le\delta_0.
\notag
\end{align}
Consequently, for
$\mathcal B_{\rm ctrl}:=\bigcup_{r=0}^T
(\mathcal B_r^G\cup\mathcal B_r^\theta\cup\mathcal B_r^R)$,
\begin{align}
\Prob(\mathcal B_{\rm ctrl})
\le3(T+1)\delta_0=\frac{3\delta}{32}.
\label{eq:controller-union}
\end{align}

\noindent
On $\mathcal B_{\rm ctrl}^c$, every reached full, unchanged reference whose
restart precedes the first detector failure is accurate: otherwise the first
controller failure would belong to one of the events above. Hence
$\mathcal B_{\rm ctrl}^c\cap\mathcal E_{\rm det}^c
\subseteq\mathcal E_{\rm ref}\cap\mathcal E_{\rm det}^c$, and
Theorem~\ref{thm:detection} gives
\begin{align}
\Prob(\mathcal B_{\rm ctrl}^c\cap\mathcal E_{\rm det}^c)
\le\delta_d.
\notag
\end{align}
This bounds a joint event and does not condition an early detector test
on the success of a later estimator.

\noindent
Define
$\mathcal E_{\rm good}:=\mathcal B_{\rm ctrl}^c\cap\mathcal E_{\rm det}$.
Then
\begin{align}
\Prob(\mathcal E_{\rm good})
\ge1-\frac{3\delta}{32}-\frac{\delta}{2}
=1-\frac{19\delta}{32}>1-\delta.
\notag
\end{align}
On this event, the initial reference is valid, every change is detected,
and the dwell condition ensures that each nonterminal post-alarm exploration
block is full and uncontaminated.

\subsection{Regret Decomposition}

\noindent
On $\mathcal E_{\rm good}$, let $\widehat\tau^{(c)}$ be the alarm
corresponding to change $c$. Its delay interval is
\begin{align}
\mathcal D_c:=\{\tau^{(c)},\ldots,
\widehat\tau^{(c)}-1\},\qquad |\mathcal D_c|\le D,
\qquad c=1,\ldots,C.
\notag
\end{align}
The initial and post-alarm exploration intervals are
\begin{align}
\mathcal X_0&:=\{0,\ldots,\min\{H,T\}-1\},
\notag\\
\mathcal X_c&:=\{\widehat\tau^{(c)},\ldots,
\min\{\widehat\tau^{(c)}+H,T\}-1\},
\qquad c=1,\ldots,C.
\notag
\end{align}
An interval is empty when its lower endpoint exceeds its upper endpoint.
The dwell condition makes the delay and exploration intervals disjoint,
and $|\mathcal X_c|\le H$. Their complement consists of at most $C+1$
unchanged exploitation prefixes $\mathcal P_c$, one in each true segment,
with empty prefixes allowed. Therefore
\begin{align}
\{0,\ldots,T-1\}
=\left(\bigcup_{c=1}^{C}\mathcal D_c\right)
\mathbin{\dot\cup}
\left(\bigcup_{c=0}^{C}\mathcal X_c\right)
\mathbin{\dot\cup}
\left(\bigcup_{c=0}^{C}\mathcal P_c\right).
\notag
\end{align}

\noindent
For $t\in\mathcal D_c$, the bound
$|y_{t+1}-w_{t+1}|\le B_Y+B_w$ gives cost at most $K_D$.
For $t\in\mathcal X_c$, \eqref{eq:gain-form} gives cost at most $K_X$.
Set $n_c:=|\mathcal P_c|$. Every nonempty $\mathcal P_c$ follows a full,
unchanged exploration block, so Theorem~\ref{thm:segment-regret} bounds its
regret by $R_E(n_c,\delta_0)$, uniformly over the resulting prefix length.
An empty prefix has zero regret, also bounded by $R_E(0,\delta_0)$.
Consequently,
\begin{align}
\Reg_T
&\le\sum_{c=1}^{C}K_D|\mathcal D_c|
+\sum_{c=0}^{C}K_X|\mathcal X_c|
+\sum_{c=0}^{C}R_E(n_c,\delta_0)
\notag\\
&\le CK_DD+(C+1)K_XH+(C+1)R_E(T,\delta_0),
\notag
\end{align}
which is \eqref{eq:main-bound}.

\section{TUNING FEASIBILITY AND ORDERS}
\label{app:tuning}

\noindent
We verify that the exploration and detector requirements in
Appendix~\ref{sec:algorithm} admit the logarithmic choices used in
Theorem~\ref{thm:main}. The asymptotic statements keep the model orders
and the model, noise, initialization, exploration, ridge, stability,
clipping, and gap bounds fixed.

\subsection{Exploration Length}

\noindent
Equation~\eqref{eq:tracking-errors} gives
\begin{align}
\eps_y=5\overline bL_\lambda B_\psi E_\theta(H,\delta_0).
\notag
\end{align}
It is sufficient to require
\begin{align}
E_\theta(H,\delta_0)\le\epsilon_\star,
\notag
\end{align}
where
\begin{align}
\epsilon_\star:=\min\Bigg\{
&1,\frac{b_{\min}}{16},
\frac{\sqrt{\Delta/8}}
{5\overline bL_\lambda B_\psi},
\frac{\Delta(1-\rho_\Delta)}
{80B_oC_\Delta G_uL_\lambda B_\psi},
\frac{B_o}{5\overline bL_\lambda B_\psi}
\Bigg\}.
\notag
\end{align}
The four accuracy terms enforce the leading-coefficient bound, the
null-bias margin, the coupling margin, and $\eps_y\le B_o$, respectively.
The cap at one ensures $\log(1/\epsilon_\star)\ge0$.
All denominators are positive under the standing assumptions, including
$L_\lambda>0$ from Assumption~\ref{ass:model}.

\noindent
The deterministic bound \eqref{eq:Etheta} satisfies
\begin{align}
E_\theta(H,\delta_0)
\le\frac{K_0\{1+\sqrt{\log(1/\delta_0)+s\log(1+K_1H)}\}}
{\sqrt{H-r_0}}
\label{eq:Etheta-order}
\end{align}
for constants $K_0,K_1$ independent of $H,T,\delta$. Thus
\begin{align}
\mathcal H:=\Bigg\{H\in\mathbb N:H>r_0,\quad
&N_H\ge
\frac{10B_\phi^4}{\kappa_{\rm inn}^2}
\log\frac{s}{\delta_0},
\quad E_\theta(H,\delta_0)\le\epsilon_\star,
\notag\\
&\frac{B_\psi^2}{\nu+\kappa_Gn_H}\le\frac1{16}
\Bigg\}
\notag
\end{align}
is nonempty: as $H\to\infty$, $N_H$ grows linearly, while
$E_\theta(H,\delta_0)$ and the leverage bound tend to zero.
Every member satisfies \eqref{eq:block-excitation-size} and
\eqref{eq:H-rls}--\eqref{eq:H-detector}. The smallest feasible integer
selected in Appendix~\ref{sec:algorithm} is therefore at most
$\min\mathcal H$.

\noindent
For $a,b\ge1$, the condition $x\ge2a\log(2ab)$ implies
\begin{align}
x\ge a\log(bx).
\label{eq:log-inversion}
\end{align}
Indeed, setting $z=x/(2a)$ gives
$\log(bx)=\log(2ab)+\log z\le\log(2ab)+z\le2z=x/a$.
Squaring \eqref{eq:Etheta-order}, restricting to $H\ge2r_0$, and applying
\eqref{eq:log-inversion} shows that the accuracy condition is satisfied
with
\begin{align}
H=O\!\left(
\epsilon_\star^{-2}
\{\log(1/\delta_0)+s\log(1/\epsilon_\star)+1\}
\right).
\notag
\end{align}
The Gram-size and leverage conditions require only
$O(\log(1/\delta_0)+1)$ additional lower bounds on $H$ and are absorbed
in this order. Since $\delta_0=\delta/[32(T+1)]$,
\begin{align}
H=O(\log((T+1)/\delta)).
\label{eq:H-order}
\end{align}

\subsection{Detector Spacing and Delay}

\noindent
Since $\rho_\Delta\in(0,1)$, the spacing in \eqref{eq:h} is finite.
Writing $\log_+(x):=\max\{0,\log x\}$, its explicit form is
\begin{align}
h=\max\left\{1,
\left\lceil
\frac{\log_+\!\left(
\dfrac{8\sigma_w^2C_\Delta^2}
{(1-\rho_\Delta^2)\Delta}\right)}
{2\log(1/\rho_\Delta)}
\right\rceil\right\}.
\notag
\end{align}
Thus $h$ is independent of $T$. Equation~\eqref{eq:mD} gives
\begin{align}
m=O\!\left(\frac{B_o^4}{\Delta^2}\log\frac{T+1}{\delta}\right),
\qquad
D=O\!\left(\frac{hB_o^4}{\Delta^2}\log\frac{T+1}{\delta}\right).
\label{eq:D-order}
\end{align}

\subsection{Regret Constants}

\noindent
Under \eqref{eq:H-rls},
\begin{align}
c_0
&\ge1-\frac18-\frac18\left(\frac{17}{16}\right)^2
>0.73.
\notag
\end{align}
Equation~\eqref{eq:Etheta} also yields
\begin{align}
\overline Q_H
&=E_\lambda^2(\nu+n_HB_\psi^2)
=O\!\left(\log(1/\delta_0)+s\log(1+H)\right)
\notag\\
&=O(\log((T+1)/\delta)),
\notag
\end{align}
while
\begin{align}
\Gamma(T)
&=d\log\!\left(1+\frac{TB_\psi^2}{d\underline v_H}\right)
=O(d\log T).
\notag
\end{align}
Therefore $R_E(T,\delta_0)=O(\log((T+1)/\delta))$.
Substituting this bound together with \eqref{eq:H-order} and
\eqref{eq:D-order} into \eqref{eq:main-bound} proves
\eqref{eq:main-rate}.

\section{RELAXATIONS OF CLIPPING AND MISMATCH STABILITY}
\label{app:relaxations}

\noindent
This appendix establishes the feasibility and detection results used in
Sections~\ref{sec:a-nonclipping} and~\ref{sec:a-relaxations}.
The extensions use either eventual nonclipping or own-regime feasibility
with a boundary alarm.

\subsection{Closed Clipping Feasibility}

\noindent
Let $P(x)=\operatorname{clip}_{[-B_u,B_u]}(x)$. Projection onto a closed
interval is nonexpansive. If $x\in[-B_u,B_u]$, including $x=\pm B_u$, then
\begin{align}
|P(z)-x|=|P(z)-P(x)|\le |z-x|.
\label{eq:closed-clip-proof}
\end{align}
Equation~\eqref{eq:closed-clip-proof} is the only consequence of the clipping
condition used in Lemma~\ref{lem:gate}. That lemma and every later detector,
RLS, and global-regret argument therefore hold under the closed feasibility
condition in Assumption~\ref{ass:clipping}; no positive margin enters a proof
constant.

\noindent
For the stable-mismatch theorem, Assumption~\ref{ass:clipping} can be
replaced by feasibility of the true reference gain along every unchanged
exploitation prefix and every complete alarm-suppressed continuation used
in the detector analysis.

\subsection{Eventual Inactivity of Clipping}
\label{sec:eventual-nonclipping}

\noindent
A quantitative steady-state slack condition yields eventual inactivity of
clipping and a burn-in time uniform over restarts. This gives the variant
of PIECE-CD in Theorem~\ref{thm:a-nonclipping}.

\noindent
Define the forced-response output bound and its corresponding regressor bound
by
\begin{align}
Y_\infty(B_u)
&:=\frac{C_y}{1-\rho_y}(B_w+B_bB_u),
\notag\\
P_\infty(B_u)
&:=\left\{Y_\infty(B_u)^2+(q-1)B_u^2\right\}^{1/2},
\notag\\
\mu_{\rm nc}
&:=B_u-\eps_u-\Lambda P_\infty(B_u).
\label{eq:nc-slack}
\end{align}
Here $\eps_u=\frac54 g_HB_\psi$, defined in
\eqref{eq:tracking-errors}, bounds the raw-proposal error on
$\G_r^{\rm est}$.
Suppose $\mu_{\rm nc}>0$. With the convention $\tau_{\rm nc}=0$ when
$\Lambda C_yB_Y=0$, put otherwise
\begin{align}
\tau_{\rm nc}
:=\max\!\left\{0,
\left\lceil
\frac{\log(\Lambda C_yB_Y/\mu_{\rm nc})}
{\log(1/\rho_y)}-H
\right\rceil\right\}.
\label{eq:tau-nc}
\end{align}

\begin{proposition}[Uniform eventual inactivity of clipping]
\label{prop:eventual-nonclipping}
Fix a reached restart $r$ whose full exploration block has one parameter
$\theta^-$, and suppose $\G_r^{\rm est}$ holds. At every exploitation time
before the next restart satisfying
\begin{align*}
t\ge r+H+\tau_{\rm nc},
\end{align*}
the raw gated proposal belongs to the clipping interval:
\begin{align}
|z_t|\le B_u,
\qquad u_t=z_t.
\label{eq:nc-conclusion}
\end{align}
The conclusion remains valid after a plant change and does not require the
old-gain/new-plant mismatch matrix to be Schur.
\end{proposition}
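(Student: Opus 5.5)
The plan is to bound the raw proposal $z_t$ by comparing it with the ideal action $(\lambda^-)'\psi_t$ and then bounding the regressor norm through the open-loop switching decomposition. The comparison step does not use feasibility of the ideal action. It uses only the gate and $\G_r^{\rm est}$, exactly as in the first part of the proof of Lemma~\ref{lem:gate}. On the recursive branch, $|z_t-(\lambda^-)'\psi_t|\le g_H\norm{\psi_t}+(g_H/4)\norm{\psi_t}$. On the reference branch, the same quantity is at most $(g_H/4)\norm{\psi_t}$. Hence, before clipping and at every exploitation time of the cycle, including times after an undetected change,
\[
|z_t|\le \Lambda\norm{\psi_t}+\tfrac54 g_H\norm{\psi_t}\le \Lambda\norm{\psi_t}+\eps_u .
\]
Here I use $\norm{\lambda^-}\le\Lambda$, since $\theta^-\in\Theta$, and $\norm{\psi_t}\le B_\psi$. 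This bound does not involve the mismatch matrix, which is why the conclusion survives a change.

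Next, I split the regressor bound into a decaying transient and a forced part. Unrolling \eqref{eq:Y-state} from time $0$, as in Appendix~\ref{app:bounds}, and applying Assumption~\ref{ass:switching-stability} to the admissible parameter sequence, which remains admissible across changes, gives
\[
\norm{Y_t}\le C_y\rho_y^t\norm{Y_0}+Y_\infty(B_u),
\]
because the applied inputs are always clipped. The lagged-input block is deterministically bounded by $\sqrt{q-1}\,B_u$. Using $\sqrt{(a+b)^2+c^2}\le a+\sqrt{b^2+c^2}$, I obtain
\[
\norm{\psi_t}\le C_y\rho_y^t\norm{Y_0}+P_\infty(B_u).
\]
The transient prefactor $C_y\norm{Y_0}$ is at most $C_yB_Y$. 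I will use $C_yB_Y$ as the transient constant to match \eqref{eq:tau-nc}; it dominates $C_y\norm{Y_0}$ because $B_Y\ge \norm{Y_0}$ when $C_y\ge1$. Consequently,
\[
|z_t|\le \Lambda C_yB_Y\rho_y^{t}+\Lambda P_\infty(B_u)+\eps_u = B_u-\mu_{\rm nc}+\Lambda C_yB_Y\rho_y^{t}.
\]

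The conclusion then follows once $\Lambda C_yB_Y\rho_y^t\le\mu_{\rm nc}$. Since $t\ge r+H+\tau_{\rm nc}\ge H+\tau_{\rm nc}$ and $\rho_y<1$, it suffices that $(H+\tau_{\rm nc})\log(1/\rho_y)\ge\log(\Lambda C_yB_Y/\mu_{\rm nc})$, which is exactly the definition \eqref{eq:tau-nc}. The degenerate convention $\Lambda C_yB_Y=0$ is immediate. Then $|z_t|\le B_u$, so $u_t=z_t$.

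I expect the main subtlety to be the choice of the time origin of the transient. If a cleaner uniform-over-restarts statement were intended, one would restart the unrolling at $r$ with $\norm{Y_r}\le B_Y$. That only gives decay $\rho_y^{t-r}$, and with $t-r\ge H+\tau_{\rm nc}$ the same definition of $\tau_{\rm nc}$ still works, because $\rho_y^{t-r}\le\rho_y^{H+\tau_{\rm nc}}$. I would therefore present the restart-based unrolling, which is valid uniformly over restart histories, and note that it gives the same threshold. The remaining check is that the gate bound uses only $\G_r^{\rm est}$ and no feasibility of $(\lambda^-)'\psi_t$. That check is what lets the statement hold after a plant change without any Schur condition on $A(\theta^-,\theta^+)$.
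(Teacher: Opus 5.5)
Your proposal is correct and follows the paper's proof: the pre-clipping gate comparison gives $|z_t-(\lambda^-)^\trans\psi_t|\le\eps_u$ on $\G_r^{\rm est}$ without using feasibility; unrolling switching stability from the restart with $\norm{Y_r}\le B_Y$ gives $\norm{\psi_t}\le P_\infty(B_u)+C_yB_Y\rho_y^{t-r}$; and the definition of $\tau_{\rm nc}$ absorbs this transient. Your preliminary unrolling from time $0$ is also valid and slightly sharper, since $\rho_y^t\le\rho_y^{t-r}$ and $\norm{Y_0}\le B_Y$, but the restart-based version you settle on is the one the paper uses.
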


\begin{proof}
The implemented input is clipped at every time, so $|u_t|\le B_u$ regardless
of whether the conclusion has already become true. Starting the state
recursion \eqref{eq:Y-state} at $r$, uniform switching stability therefore
gives, for every $t\ge r$ and for any admissible changes after $r$,
\begin{align*}
\norm{Y_t}
&\le C_y\rho_y^{t-r}\norm{Y_r}+Y_\infty(B_u)\\
&\le C_y\rho_y^{t-r}B_Y+Y_\infty(B_u).
\end{align*}
The $q-1$ stored inputs in $\psi_t$ each have magnitude at most $B_u$.
Consequently,
\begin{align*}
\norm{\psi_t}
&\le\left[
\{Y_\infty(B_u)+C_y\rho_y^{t-r}B_Y\}^2
+(q-1)B_u^2
\right]^{1/2}\\
&\le P_\infty(B_u)+C_y\rho_y^{t-r}B_Y.
\end{align*}
On $\G_r^{\rm est}$, the raw (pre-clipping) part of the gate calculation in
Lemma~\ref{lem:gate} does not use feasibility. On the recursive branch its
distance from $(\lambda^-)^\trans\psi_t$ is at most
$(g_H+g_H/4)\norm{\psi_t}$, and on the reference branch it is at most
$(g_H/4)\norm{\psi_t}$. Hence, on either branch and also after a change,
\begin{align*}
|z_t-(\lambda^-)^\trans\psi_t|\le\eps_u.
\end{align*}
Since $\norm{\lambda^-}\le\Lambda$, the preceding two displays imply
\begin{align*}
|z_t|
&\le\eps_u+\Lambda P_\infty(B_u)
+\Lambda C_yB_Y\rho_y^{t-r}\\
&=B_u-\mu_{\rm nc}
+\Lambda C_yB_Y\rho_y^{t-r}.
\end{align*}
Definition \eqref{eq:tau-nc} makes the last term at most $\mu_{\rm nc}$
whenever $t-r\ge H+\tau_{\rm nc}$, proving
\eqref{eq:nc-conclusion}. It also gives
$|(\lambda^-)^\trans\psi_t|\le B_u-\eps_u$.
The calculation is pathwise on the reference event and uniform over
admissible switching sequences.
\end{proof}

\noindent
Proposition~\ref{prop:restart-estimate} and the restart union in
\eqref{eq:reference-event} give
$\Prob(\mathcal E_{\rm ref})\ge1-2(T+1)\delta_0=1-\delta/16$.
On this event, the conclusion holds simultaneously for every reached full,
unchanged exploration block whose restart precedes the first detector failure.

\noindent
Both $P_\infty$ and $\eps_u$ depend on $B_u$, so a positive
$\mu_{\rm nc}$ is not guaranteed by increasing the clipping threshold.
An explicit sufficient condition is obtained by setting
\begin{align*}
a_0:=\frac{C_yB_w}{1-\rho_y},\qquad
a_1:=\frac{C_yB_b}{1-\rho_y},\qquad
\kappa_u:=1-\Lambda\{a_1+\sqrt{q-1}\}.
\end{align*}
If $\kappa_u>0$, choose
\begin{align}
B_u>\frac{2\Lambda a_0}{\kappa_u}
\quad\text{and then choose $H$ so that}\quad
\eps_u\le\frac{\kappa_uB_u}{2}.
\label{eq:explicit-Bu-rule}
\end{align}
Indeed,
$P_\infty(B_u)\le a_0+(a_1+\sqrt{q-1})B_u$, so
\eqref{eq:explicit-Bu-rule} implies $\mu_{\rm nc}>0$. For fixed $B_u$ the
required inequality for $\eps_u$ is achieved by a sufficiently large feasible
$H$ (when the horizon permits) under the exploration bound in
\eqref{eq:Etheta}. A direct sufficient check for immediate nonclipping is
$(\Lambda+g_H)B_\psi\le B_u$: since
$\norm{\overline\lambda_r}\le\Lambda$, every raw exploitation proposal
then lies in the clipping interval and no burn-in is needed.

\noindent
Bounded noise alone does not ensure eventual nonclipping. Consider the
known scalar system
\begin{align*}
y_{t+1}=\tfrac12y_t+u_t+w_{t+1},\qquad
B_u=\tfrac14,\qquad
\Prob(w_t=1)=\Prob(w_t=-1)=\tfrac12.
\end{align*}
The exact minimum-variance gain is $\lambda=-1/2$. If the exact
minimum-variance law were unclipped from some finite time onward, then
$y_{t+1}=w_{t+1}$ from that time onward, and every following raw action would
have magnitude $1/2>B_u$, a contradiction. Thus no finite $\tau_{\rm nc}$
exists for this example under that law.

\paragraph{Burn-in variant.}
Before the burn-in ends, clipping may invalidate gate charging and the
detector estimates. After each full exploration block, apply reference
control for $\tau_{\rm nc}$ rounds with detection and RLS updates disabled
and the detector buffer empty. Use
$z_t=\overline\lambda_r^\trans\psi_t$ and
$u_t=\operatorname{clip}_{[-B_u,B_u]}(z_t)$. At
$t_0=r+H+\tau_{\rm nc}$ set
$V_{t_0}=\nu I_d+\sum_{t\in\I_r}\psi_t\psi_t^\trans$ and
$\widehat\lambda_{t_0}=\overline\lambda_r$, then use the shifted output grid
$\mathcal S_r^{\rm nc}=\{t_0+h+jh:j\ge0\}$. Truncate exploration or
burn-in at $T$. Require the first change no
earlier than $H+\tau_{\rm nc}$, successive changes to be separated by at
least $D+H+\tau_{\rm nc}$, and the terminal segment to have length at least
$D$. From $t_0$ onward clipping is inactive by
\eqref{eq:nc-conclusion}, so the baseline gate, false-alarm, coupling, and RLS
proofs apply without Assumption~\ref{ass:clipping}. The regret bound gains
only the deterministic term
$(C+1)K_D\tau_{\rm nc}$, and the logarithmic order is unchanged whenever
$\tau_{\rm nc}=O(\log((T+1)/\delta))$ (in particular, when it is a fixed
problem-dependent constant).

\subsection{Own-Regime Feasibility and a Boundary Alarm}
\label{sec:Own-Regime Feasibility}

\noindent
A boundary alarm controls the post-change action error under feasibility
of each gain only on regressors generated by its own plant.

\noindent
Call a regressor an \emph{own-regime successor} for $\theta$ if it can occur
after at least $r_0$ consecutive controls generated by plant $\theta$, starting
from any regressor with norm at most $B_\psi$, under disturbances bounded by
$B_w$ and arbitrary inputs in $[-B_u,B_u]$. This definition includes the
successors produced during exploration and the terminal state at a segment
boundary; it is not restricted to states generated by the ideal policy.

\begin{assumption}[Closed own-regime clipping feasibility]
\label{ass:own-regime-feasibility}
For every $\theta\in\Theta$ and every own-regime successor $\psi$ for
$\theta$,
\begin{align}
|\lambda(\theta)^\trans\psi|
\le B_u.
\label{eq:own-regime-feasibility}
\end{align}
Equality is allowed, and no positive clipping margin is required.
\end{assumption}

\noindent
Assumption~\ref{ass:own-regime-feasibility} is an alternative feasibility
condition: it checks each gain only on its own-regime successors and does not
require the old gain to remain feasible on states produced by the new regime.
Because the successor definition permits any norm-bounded starting regressor,
including unreachable states, this condition is not asserted to follow from
Assumption~\ref{ass:clipping}.

\noindent
Define the deterministic reference uncertainty and safe tracking
bounds
\begin{align}
\zeta_H&:=E_\lambda(H,\delta_0)B_\psi,
\notag\\
\overline\eps_u
&:=\eps_u+2\zeta_H
=7L_\lambda B_\psi E_\theta(H,\delta_0),
\notag\\
\overline\eps_y&:=\overline b\,\overline\eps_u.
\label{eq:inflated-tracking-bounds}
\end{align}

\noindent
The branch-free union-class version of PIECE-CD adds the following test.
After an exploitation round $t$, once $y_{t+1}$ has been observed and
$\psi_{t+1}$ has been formed, raise one alarm and restart at control time
$t+1$ if
\begin{align}
|\overline\lambda_r^\trans\psi_{t+1}|
>B_u+\zeta_H.
\label{eq:boundary-alarm}
\end{align}
The strict inequality is part of the test: under the null, equality with the
upper bound is possible for discrete disturbances.
If both the boundary and energy tests cross at the same output, they count as
one alarm. Algorithm~\ref{alg:stability-agnostic} states the complete
composite procedure.

\begin{lemma}[Boundary-alarm dichotomy]
\label{lem:boundary-dichotomy}
Under Assumption~\ref{ass:own-regime-feasibility}, fix a reached cycle
whose full exploration block has parameter $\theta^-$, and suppose
$\G_r^{\rm est}$ holds. Then the boundary test cannot cross while
the reference parameter remains active. On such an
unchanged exploitation interval, with
$a_t:=(\widehat\lambda_t-\lambda(\theta^-))^\trans\psi_t$ and
$e_t:=u_t-\lambda(\theta^-)^\trans\psi_t$, one also has
$e_t^2\le a_t^2$. After a change within this exploitation cycle, at every
control time before the cycle's next alarm,
\begin{align}
\bigl(|\lambda(\theta^-)^\trans\psi_t|-B_u\bigr)_+
&\le2\zeta_H,
\label{eq:boundary-saturation}\\
|u_t-\lambda(\theta^-)^\trans\psi_t|
&\le\overline\eps_u.
\label{eq:boundary-tracking}
\end{align}
These inequalities also hold in the alarm-suppressed continuation at
control states satisfying
$|\overline\lambda_r^\trans\psi_t|\le B_u+\zeta_H$.
\end{lemma}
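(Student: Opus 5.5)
The proof rests on one observation. On $\G_r^{\rm est}$ the reference and ideal old gains give nearby proposals:
\[
\bigl|\overline\lambda_r^\trans\psi-\lambda(\theta^-)^\trans\psi\bigr|
\le\norm{\overline\lambda_r-\lambda(\theta^-)}\,\norm{\psi}
\le E_\lambda(H,\delta_0)B_\psi=\zeta_H
\]
for every reachable $\psi$. This uses $g_H/4=E_\lambda(H,\delta_0)$ and the deterministic bound $\norm{\psi}\le B_\psi$ from \eqref{eq:Bpsi}. Everything else combines this with Assumption~\ref{ass:own-regime-feasibility} before a change, and with the failure of the boundary test after a change.

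\emph{Unchanged interval.} First I will show that every exploitation regressor $\psi_{t+1}$, with $t\ge r+H-1$, formed while $\theta^-$ is still active is an own-regime successor for $\theta^-$. All controls $r,\dots,t$ are generated by $\theta^-$, and there are $t+1-r\ge H>r_0$ of them. The starting regressor $\psi_r$ has norm at most $B_\psi$. The disturbances are bounded by $B_w$. The inputs are the exploration inputs, bounded by $B_e\le B_u$, followed by clipped exploitation inputs.

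Assumption~\ref{ass:own-regime-feasibility} then gives $|\lambda(\theta^-)^\trans\psi_{t+1}|\le B_u$. The displayed bound gives $|\overline\lambda_r^\trans\psi_{t+1}|\le B_u+\zeta_H$, so the strict test \eqref{eq:boundary-alarm} cannot fire. Because the ideal action lies in $[-B_u,B_u]$ at each such state, the proof of Lemma~\ref{lem:gate} applies verbatim. That lemma used Assumption~\ref{ass:clipping} only through nonexpansiveness toward a feasible target, \eqref{eq:closed-clip-proof}. Its conclusion includes $e_t^2\le a_t^2$.

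\emph{After a change at $\tau\ge r+H$.} Take a control time $t$ in the cycle before its next alarm. If $t\le\tau$, then $\psi_t$ was formed from outputs generated under $\theta^-$, so the previous paragraph applies. It gives $|\lambda(\theta^-)^\trans\psi_t|\le B_u$ and the bound for $u_t$ with $\eps_u\le\overline\eps_u$. If $t>\tau$, then $\psi_t$ was formed after observing $y_t$ at the end of round $t-1$, and the boundary test was evaluated there without firing. Hence $|\overline\lambda_r^\trans\psi_t|\le B_u+\zeta_H$, and the displayed bound yields $|\lambda(\theta^-)^\trans\psi_t|\le B_u+2\zeta_H$. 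This is \eqref{eq:boundary-saturation}.

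For \eqref{eq:boundary-tracking}, the raw gate bound from the proof of Proposition~\ref{prop:eventual-nonclipping} does not use feasibility:
\[
|z_t-\lambda(\theta^-)^\trans\psi_t|\le\tfrac54g_H\norm{\psi_t}\le\eps_u .
\]
The projection inequality $|\clip{z}-x|\le|z-x|+(|x|-B_u)_+$ from Appendix~\ref{app:bounds}, with $x=\lambda(\theta^-)^\trans\psi_t$, then gives $|u_t-x|\le\eps_u+2\zeta_H=\overline\eps_u$. In the alarm-suppressed continuation the same computation goes through. The hypothesis $|\overline\lambda_r^\trans\psi_t|\le B_u+\zeta_H$ simply replaces the observation that no alarm has occurred.

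\emph{Main obstacle.} The delicate part is bookkeeping rather than estimation. I must check that the successor definition covers the first exploitation state $\psi_{r+H}$ and the terminal pre-change state $\psi_\tau$; the count of at least $H>r_0$ controls under $\theta^-$ handles both. I must also match the timing of the boundary test: it is evaluated on $\psi_{t+1}$ after round $t$, so ``before the next alarm'' certifies every post-change control state actually used. Finally, the argument must not invoke feasibility of the old gain on post-change states, which Assumption~\ref{ass:own-regime-feasibility} deliberately does not provide.
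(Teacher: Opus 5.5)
Your proof is correct and follows essentially the same route as the paper. Before a change, own-regime feasibility together with the $\zeta_H$ reference-gain bound excludes boundary crossings and restores the gate-charging inequality $e_t^2\le a_t^2$; after a change, the first post-change state is still an own-regime successor, and at later states the unfired boundary test gives the $2\zeta_H$ saturation bound, which combines with the raw gate bound $\eps_u$ and the projection-distance inequality to yield $\overline\eps_u$.
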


\begin{proof}
While the reference parameter $\theta^-$ remains active, every exploitation
regressor is an own-regime successor. On $\G_r^{\rm est}$,
$\norm{\overline\lambda_r-\lambda(\theta^-)}\le
E_\lambda(H,\delta_0)$, so \eqref{eq:own-regime-feasibility} gives
\begin{align*}
|\overline\lambda_r^\trans\psi_t|
&\le |\lambda(\theta^-)^\trans\psi_t|
+\norm{\overline\lambda_r-\lambda(\theta^-)}\norm{\psi_t}
\notag\\
&\le B_u+\zeta_H.
\end{align*}
The strict test in \eqref{eq:boundary-alarm} therefore cannot give a boundary
false alarm. Since the old ideal action is feasible on the unchanged
interval, the gate-charging and projection argument of
Lemma~\ref{lem:gate} applies verbatim and gives $e_t^2\le a_t^2$.

\noindent
For a change occurring during the exploitation cycle in the lemma, at the
first post-change control the regressor is still an own-regime successor for
$\theta^-$, so its saturation deficit is zero. At every later post-change
control time with no previous boundary alarm,
\begin{align*}
|\lambda(\theta^-)^\trans\psi_t|
&\le |\overline\lambda_r^\trans\psi_t|
+E_\lambda(H,\delta_0)B_\psi
\le B_u+2\zeta_H,
\end{align*}
which proves \eqref{eq:boundary-saturation}. The raw gated proposal satisfies
$|z_t-\lambda(\theta^-)^\trans\psi_t|\le\eps_u$. For
$P(x)=\operatorname{clip}_{[-B_u,B_u]}(x)$ and any $x,z\in\mathbb R$,
\begin{align*}
|P(z)-x|
&\le |P(z)-P(x)|+|P(x)-x|\\
&\le |z-x|+(|x|-B_u)_+.
\end{align*}
Combining this inequality with \eqref{eq:boundary-saturation} gives
\eqref{eq:boundary-tracking}. The same calculation applies pointwise in
the alarm-suppressed continuation whenever
$|\overline\lambda_r^\trans\psi_t|\le B_u+\zeta_H$.
\end{proof}

\subsection{Detection under the Union Condition}
\label{sec:ARX Non-Schur}

\noindent
The boundary tracking bound \eqref{eq:boundary-tracking} permits a common
finite-window detector for stable, directly visible, and non-Schur ARX
mismatches.

\noindent
For an admissible adjacent pair $z=(\theta^-,\theta^+)$, write
\begin{align*}
A_z:=A(\theta^-,\theta^+),
\qquad
G_h(A_z):=\sigma_w^2\sum_{j=1}^{h-1}(e_1^\trans A_z^je_1)^2.
\end{align*}
The supplied admissible adjacent-pair class $\mathcal P\subseteq\Theta^2$
has a cover $\mathcal P\subseteq\mathcal P_{\sf S}\cup\mathcal P_{\sf F}$.
The finite-window component may itself be covered by
$\mathcal P_{\sf F}\subseteq\mathcal P_{\sf D}\cup\mathcal P_{\sf A}$.
The set $\mathcal P_{\sf D}$ uses a directly supplied finite-order gap, whereas
$\mathcal P_{\sf A}$ uses only the ARX non-Schur condition
$\rho(A_z)\ge1$. These sets may overlap.
These classes specify uniform tuning bounds.

\noindent
Define the finite-order innovation visibility
\begin{align}
\mathcal V(z)
:=\sigma_w^2\sum_{j=1}^{d}
\{e_1^\trans A_z^je_1\}^2.
\label{eq:visibility-index}
\end{align}

\noindent
For $\theta\in\Theta$, define the monic old-input polynomial
\begin{align*}
\overline B_\theta(\xi)
:=\xi^{q-1}+\frac{b_2(\theta)}{b_1(\theta)}\xi^{q-2}
+\cdots+\frac{b_q(\theta)}{b_1(\theta)},
\end{align*}
with $\overline B_\theta\equiv1$ when $q=1$. Under the minimum-phase
convention in Assumption~\ref{ass:model}, every root of
$\overline B_\theta$ lies in the open unit disk. Continuity of polynomial
roots and compactness therefore give a safe uniform number
\begin{align}
r_B\ge \sup_{\theta\in\Theta}
\max\{|\beta|:\overline B_\theta(\beta)=0\},
\qquad 0\le r_B<1,
\label{eq:old-input-root-bound}
\end{align}
where the maximum over the empty set is zero. Also let
\begin{align}
R_A\ge \max\left\{1,\sup_{(\theta^-,\theta^+)\in\Theta^2}
\norm{A_{\rm cl}(\lambda(\theta^-),\theta^+)}_2\right\}<\infty,
\qquad
K_A:=\sum_{k=1}^d R_A^{d-k}
\sum_{j=0}^{k-1}\binom{d}{j}R_A^j.
\label{eq:spectral-visibility-constants}
\end{align}
These are offline model-class bounds. The uniform root bound follows from
continuity of polynomial zeros and compactness
\citep[Appendix~D, Theorem~D1, and Appendix~E]{horn2013matrix}.

\paragraph{Finite-order visibility of non-Schur mismatches.}
Define
\begin{align}
\underline\Delta_{\sf A}
:=\sigma_w^2\left\{
\frac{(1-r_B)^{q-1}}{K_A}
\right\}^2>0.
\label{eq:spectral-derived-gap}
\end{align}
Then every ARX mismatch matrix $A_z$ satisfying $\rho(A_z)\ge1$ obeys
\begin{align}
\mathcal V(z)\ge\underline\Delta_{\sf A}.
\label{eq:spectral-implies-visibility}
\end{align}
This implication uses the ARX and minimum-phase structure; it is false for a
general square matrix.

\noindent
To prove it, put $h_j=e_1^\trans A_z^je_1$ and
\begin{align*}
D_z(\xi)&:=\det(\xi I-A_z)
=\xi^d+\sum_{k=1}^d d_k\xi^{d-k},\\
Q_-(\xi)&:=\xi^p\overline B_{\theta^-}(\xi)
=\xi^d+\sum_{k=1}^d q_k\xi^{d-k}.
\end{align*}
A cofactor expansion using the shift rows of $A_z$ gives
\begin{align}
e_1^\trans\operatorname{adj}(\xi I-A_z)e_1
=\xi^{p-1}\overline B_{\theta^-}(\xi).
\label{eq:arx-cofactor-identity}
\end{align}
Indeed, deleting the first row and column leaves a block-lower-triangular
minor: its output-shift block has determinant $\xi^{p-1}$ and its old-input
companion block has determinant $\overline B_{\theta^-}(\xi)$. The
adjugate formula and the Laurent expansion at infinity then give the exact
disturbance-to-output identity, for $|\xi|>\rho(A_z)$ (equivalently, as a
formal Laurent identity at infinity),
\begin{align}
\xi e_1^\trans(\xi I-A_z)^{-1}e_1
=1+\sum_{j\ge1}h_j\xi^{-j}
=\frac{Q_-(\xi)}{D_z(\xi)}.
\label{eq:arx-resolvent-identity}
\end{align}
Comparing the coefficient of $\xi^{d-k}$ in
$D_z(\xi)(1+\sum_{j\ge1}h_j\xi^{-j})=Q_-(\xi)$ gives, with $d_0=1$,
\begin{align}
q_k-d_k=\sum_{i=1}^k d_{k-i}h_i,
\qquad k=1,\ldots,d.
\label{eq:markov-coefficient-identity}
\end{align}
Let $\eta:=\max_{1\le j\le d}|h_j|$ and choose an eigenvalue $\lambda_\star$
with $|\lambda_\star|=\rho(A_z)\ge1$. Since
$|d_j|\le \binom{d}{j}R_A^j$ and
$|\lambda_\star|\le R_A$ (the spectral radius is bounded by every induced
matrix norm \citep[Theorem~5.6.9]{horn2013matrix}), equations
\eqref{eq:markov-coefficient-identity} and $D_z(\lambda_\star)=0$ imply
\begin{align*}
|Q_-(\lambda_\star)|\le \eta K_A.
\end{align*}
On the other hand, $Q_-$ has $p$ zeros at zero and its other $q-1$ zeros
have modulus at most $r_B$, so
\begin{align*}
|Q_-(\lambda_\star)|
\ge |\lambda_\star|^p(|\lambda_\star|-r_B)^{q-1}
\ge(1-r_B)^{q-1}.
\end{align*}
Combining the last two displays and using
$\mathcal V(z)\ge\sigma_w^2\eta^2$ proves
\eqref{eq:spectral-implies-visibility}.
If a stronger class-wide margin $\rho(A_z)\ge c_a>1$ is available, the same
proof gives the sharper constant
$\underline\Delta_{\sf A}(c_a):=\sigma_w^2
\{c_a^p(c_a-r_B)^{q-1}/K_A\}^2$, which may replace
$\underline\Delta_{\sf A}$ in the tuning. Such a margin is not required for
the theorem below.

\begin{assumption}[Union stable/direct/non-Schur detectability]
\label{ass:finite-lag-detectability}
Every realized adjacent pair
$z_c=(\theta^{(c-1)},\theta^{(c)})$, $c=1,\ldots,C$, belongs to
$\mathcal P$. For a nonempty stable component there are supplied constants
$C_{\sf S}\ge1$, $\rho_{\sf S}\in(0,1)$, and
$\underline\Delta_{\sf S}>0$; for a nonempty direct-visibility component
there is a supplied $\underline\Delta_{\sf D}>0$; and the possibly empty ARX
non-Schur component satisfies
\begin{align}
z\in\mathcal P_{\sf S}&\Longrightarrow
\norm{A_z^k}\le C_{\sf S}\rho_{\sf S}^k\quad(k\ge0),\qquad
J_y(\lambda(\theta^-);\theta^+)-\sigma_w^2
\ge\underline\Delta_{\sf S},
\notag\\
z\in\mathcal P_{\sf D}&\Longrightarrow
\mathcal V(z)\ge\underline\Delta_{\sf D},
\notag\\
z\in\mathcal P_{\sf A}&\Longrightarrow
\rho(A_z)\ge1.
\label{eq:finite-lag-gap}
\end{align}
Each admitted pair satisfies at least one component condition. A stable
pair requires no separate finite-order gap; for a non-Schur pair, that gap
is supplied by \eqref{eq:spectral-derived-gap}.
\end{assumption}

\noindent
Set $\Gamma_{\sf D}=\underline\Delta_{\sf D}$ when
$\mathcal P_{\sf D}\ne\varnothing$ and $\Gamma_{\sf D}=+\infty$ otherwise.
Set $\Gamma_{\sf A}=\underline\Delta_{\sf A}$ when
$\mathcal P_{\sf A}\ne\varnothing$ and $\Gamma_{\sf A}=+\infty$ otherwise.
If $\mathcal P_{\sf F}\ne\varnothing$, define
\begin{align}
\underline\Delta_{\sf F}:=\min\{\Gamma_{\sf D},\Gamma_{\sf A}\}>0.
\label{eq:aggregate-finite-gap}
\end{align}
Equations \eqref{eq:visibility-index} and
\eqref{eq:spectral-implies-visibility} then give the derived common bound
\begin{align}
z\in\mathcal P_{\sf F}\Longrightarrow
\mathcal V(z)\ge\underline\Delta_{\sf F}.
\label{eq:derived-finite-gap}
\end{align}
This is the finite-visible condition used in
Section~\ref{sec:a-relaxations}.

\noindent
For a nonempty stable component, define
\begin{align*}
h_{\sf S}
&:=\min\left\{k\ge1:
\frac{\sigma_w^2C_{\sf S}^2\rho_{\sf S}^{2k}}
{1-\rho_{\sf S}^2}
\le\frac{\underline\Delta_{\sf S}}8\right\},
&\Gamma_{\sf S}&:=\frac{7\underline\Delta_{\sf S}}8.
\end{align*}
If $\mathcal P_{\sf S}=\varnothing$, set
$(h_{\sf S},\Gamma_{\sf S})=(1,+\infty)$. If
$\mathcal P_{\sf F}\ne\varnothing$, set
$(h_{\sf F},\Gamma_{\sf F})=(d+1,\underline\Delta_{\sf F})$; otherwise set
$(h_{\sf F},\Gamma_{\sf F})=(1,+\infty)$. If
$\mathcal P=\varnothing$, set $h_{\sf U}=1$ and choose any fixed finite
$\underline\Delta_{\sf U}>0$ for null calibration; every missed-change
statement is then vacuous. Otherwise at least one cover component is
nonempty, and set
\begin{align}
h_{\sf U}&:=\max\{h_{\sf S},h_{\sf F}\},
&\underline\Delta_{\sf U}&:=\min\{\Gamma_{\sf S},\Gamma_{\sf F}\}>0.
\label{eq:instance-visibility-gap}
\end{align}
The resulting common finite-horizon power separation is
\begin{align}
G_{h_{\sf U}}(A_z)\ge\underline\Delta_{\sf U},
\qquad z\in\mathcal P.
\label{eq:finite-lag-gap-bound}
\end{align}
Indeed, if $z\in\mathcal P_{\sf S}$, then
\begin{align*}
G_{h_{\sf U}}(A_z)
&=J_y(\lambda(\theta^-);\theta^+)-\sigma_w^2
-\sigma_w^2\sum_{j=h_{\sf U}}^\infty
(e_1^\trans A_z^je_1)^2\\
&\ge\underline\Delta_{\sf S}
-\frac{\sigma_w^2C_{\sf S}^2\rho_{\sf S}^{2h_{\sf U}}}
{1-\rho_{\sf S}^2}
\ge\frac{7\underline\Delta_{\sf S}}8
\ge\underline\Delta_{\sf U}.
\end{align*}
If $z\in\mathcal P_{\sf F}$, then $h_{\sf U}\ge d+1$ and the nonnegative
summands give
$G_{h_{\sf U}}(A_z)\ge\mathcal V(z)
\ge\underline\Delta_{\sf F}\ge\underline\Delta_{\sf U}$.
This proves \eqref{eq:finite-lag-gap-bound} without identifying which route
holds for the realized change.

\noindent
The first $d$ impulse coefficients exhaust finite-order visibility by
Cayley--Hamilton \citep[Theorem~2.4.3.2]{horn2013matrix}. Let
$h_j=e_1^\trans A^je_1$, so $h_0=1$, and write the characteristic polynomial
as $p_A(z)=z^d+\kappa_{d-1}z^{d-1}+\cdots+\kappa_0$. If
$h_1=\cdots=h_d=0$, sandwiching $p_A(A)=0$ between $e_1^\trans$ and $e_1$
gives $\kappa_0=0$. Sandwiching $A^kp_A(A)=0$ for $k\ge1$ and using strong
induction then gives $h_j=0$ for every $j\ge1$. Thus every change visible
through a noise-driven impulse coefficient is visible by order $d$.

\noindent
If $\mathcal P=\varnothing$, set
$L_{0,{\sf U}}=L_{w,{\sf U}}=L_{u,{\sf U}}=0$. Otherwise, the same
model-class bound $R_A$ gives the explicit finite-horizon choices
\begin{align}
L_{0,{\sf U}}&:=R_A^{h_{\sf U}},
\notag\\
L_{w,{\sf U}}&:=\sum_{j=0}^{h_{\sf U}-1}R_A^j,
&L_{u,{\sf U}}&:=G_u\sum_{j=0}^{h_{\sf U}-1}R_A^j.
\label{eq:finite-lag-constants}
\end{align}
Indeed, $|x^\trans A^jy|\le\norm{x}_2\norm{A}_2^j\norm{y}_2$.
These bounds are finite because $h_{\sf U}$ is finite, even when some
matrices in $\mathcal P_{\sf A}$ are non-Schur. Sharper supplied bounds may
replace them without changing the proof.

\noindent
Define
\begin{align}
B_{{\rm aux},{\sf U}}&:=L_{0,{\sf U}}B_\psi+L_{w,{\sf U}}B_w,
\notag\\
B_{\sf U}&:=\max\{B_Y,B_{{\rm aux},{\sf U}},B_w\},
&R_{\sf U}(H)&:=L_{u,{\sf U}}\overline\eps_u.
\label{eq:finite-lag-bounds}
\end{align}
In addition to the exploration sample-size condition and
\eqref{eq:H-rls}, choose the smallest $H$ satisfying
\begin{align}
\overline\eps_y^2&\le\frac{\underline\Delta_{\sf U}}8,
&\overline\eps_y&\le B_{\sf U},
&2B_{\sf U}R_{\sf U}(H)&\le\frac{\underline\Delta_{\sf U}}8.
\label{eq:finite-lag-H}
\end{align}
Use spacing $h_{\sf U}$, the energy threshold
$\sigma_w^2+\underline\Delta_{\sf U}/2$, and set
\begin{align}
m_{\sf U}&:=\left\lceil
\frac{512B_{\sf U}^4}{\underline\Delta_{\sf U}^2}
\log\frac{4T^2}{\delta_d}\right\rceil,
&D_{\sf U}&:=(m_{\sf U}+1)h_{\sf U}.
\label{eq:finite-lag-mD}
\end{align}
Assumption~\ref{ass:dwell} is read with $D_{\sf U}$ in place of $D$.

\noindent
Fix $z=(\theta^-,\theta^+)\in\mathcal P$ and a potential restart $r$
whose exploration reference estimates $\lambda(\theta^-)$. Continue the
clipped controller on the same restart-relative calendar with both alarms
suppressed. Its regressor remains bounded by $B_\psi$ on every path.
For a stored output time $s$ with $s-h_{\sf U}\ge r+H$, define the
$h_{\sf U}$-step auxiliary
\begin{align}
\widetilde\psi_{s-h_{\sf U}}^{(s)}&:=\psi_{s-h_{\sf U}},
\notag\\
\widetilde\psi_{t+1}^{(s)}
&=A_z\widetilde\psi_t^{(s)}+e_1w_{t+1},
\qquad s-h_{\sf U}\le t<s.
\label{eq:sample-local-auxiliary}
\end{align}
Let $\widetilde y_s^{(s)}$ be its first coordinate. This auxiliary is
defined on every continuation path, including paths on which the boundary
test would cross.

\begin{lemma}[Sample-local union power and comparison]
\label{lem:sample-local-power}
For the pair $z\in\mathcal P$ and continuation defined above, suppose
$\theta^+$ is active throughout $[s-h_{\sf U},s)$. Then
\begin{align}
|\widetilde y_s^{(s)}|&\le B_{{\rm aux},{\sf U}},
\label{eq:local-aux-bound}\\
\E[(\widetilde y_s^{(s)})^2\mid\F_{s-h_{\sf U}}]
&\ge\sigma_w^2+\underline\Delta_{\sf U}.
\label{eq:local-aux-power}
\end{align}
On $\G_r^{\rm est}$ and on the event that the boundary test has not crossed at
the states used for the controls in $[s-h_{\sf U},s)$,
\begin{align}
|y_s-\widetilde y_s^{(s)}|&\le R_{\sf U}(H),
\notag\\
|y_s^2-(\widetilde y_s^{(s)})^2|
&\le2B_{\sf U}R_{\sf U}(H)
\le\frac{\underline\Delta_{\sf U}}8.
\label{eq:local-square-comparison}
\end{align}
The conditional statements remain valid at finite lag-$h_{\sf U}$
predictable output times.
\end{lemma}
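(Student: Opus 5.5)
The proof splits into three parts, one for each claim: a deterministic bound on the auxiliary output, the conditional power bound, and the coupling with the actual output. In each part I unroll \eqref{eq:sample-local-auxiliary} over the $h_{\sf U}$ steps starting from $s-h_{\sf U}$.

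\textbf{Step 1: the deterministic bound.} Unrolling gives
\[
\widetilde y_s^{(s)}=e_1^\trans A_z^{h_{\sf U}}\psi_{s-h_{\sf U}}+\sum_{j=0}^{h_{\sf U}-1}e_1^\trans A_z^je_1\,w_{s-j}.
\]
The continuation regressor satisfies $\norm{\psi_{s-h_{\sf U}}}\le B_\psi$ on every path, because the inputs are clipped and Appendix~\ref{app:bounds} applies. Using $\norm{A_z}_2\le R_A$ together with \eqref{eq:finite-lag-constants}, the bound \eqref{eq:local-aux-bound} follows.

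\textbf{Step 2: the conditional power bound.} I repeat the computation behind \eqref{eq:h-step-power-full}. Conditional on $\F_{s-h_{\sf U}}$, the disturbances $w_{s-h_{\sf U}+1},\ldots,w_s$ are fresh, centered and independent, so all cross terms vanish. This leaves
\[
(e_1^\trans A_z^{h_{\sf U}}\psi_{s-h_{\sf U}})^2+\sigma_w^2+G_{h_{\sf U}}(A_z).
\]
Here the $j=0$ term contributes $\sigma_w^2$ because $e_1^\trans e_1=1$. The first term is nonnegative, and \eqref{eq:finite-lag-gap-bound} gives $G_{h_{\sf U}}(A_z)\ge\underline\Delta_{\sf U}$, which proves \eqref{eq:local-aux-power}. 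The extension to lag-$h_{\sf U}$ predictable random times works as in Appendix~\ref{app:detection}: multiply the deterministic-time identity by $\one\{s=t\}\in\F_{t-h_{\sf U}}$ and sum over $t$.

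\textbf{Step 3: the coupling.} Write the actual input as $u_t=\lambda(\theta^-)^\trans\psi_t+e_t$. Since $\theta^+$ is active on $[s-h_{\sf U},s)$, the difference $\delta_t:=\psi_t-\widetilde\psi_t^{(s)}$ satisfies
\[
\delta_{t+1}=A_z\delta_t+g_u(\theta^+)e_t,\qquad \delta_{s-h_{\sf U}}=0.
\]
The disturbances cancel because both processes share them. Hence
\[
\delta_s=\sum_{k=s-h_{\sf U}}^{s-1}A_z^{s-1-k}g_u(\theta^+)e_k .
\]
It remains to bound $|e_k|\le\overline\eps_u$, and this splits into two cases.
\begin{itemize}
\item If the control time $k$ precedes the change, then $z$ is the true regime's own pair, $\theta^-$ is active, and Lemma~\ref{lem:gate} / Lemma~\ref{lem:boundary-dichotomy} give $|e_k|\le\eps_u\le\overline\eps_u$. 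This case can occur only if $\theta^+=\theta^-$ is allowed on the window; otherwise every $k$ is post-change by hypothesis.
\item If $k$ is post-change, the no-crossing event gives $|\overline\lambda_r^\trans\psi_k|\le B_u+\zeta_H$. The continuation clause of Lemma~\ref{lem:boundary-dichotomy} then yields \eqref{eq:boundary-tracking}, that is $|e_k|\le\overline\eps_u$.
\end{itemize}
Summing gives $|y_s-\widetilde y_s^{(s)}|\le\norm{\delta_s}\le G_u\overline\eps_u\sum_{j<h_{\sf U}}R_A^j=R_{\sf U}(H)$. Since both outputs are bounded by $B_{\sf U}$, the factorization $a^2-b^2=(a-b)(a+b)$ and \eqref{eq:finite-lag-H} give \eqref{eq:local-square-comparison}.

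\textbf{Main obstacle.} The delicate point is justifying that Lemma~\ref{lem:boundary-dichotomy} applies at each control state in the window. In particular, at the first post-change control the regressor is still an own-regime successor of $\theta^-$, so its saturation deficit is zero. At later states the non-crossing hypothesis supplies the bound. The continuation suppresses alarms, so I must check that the lemma's pointwise statement is used only at states where the test did not cross, which is exactly the hypothesis. I would handle the case where the window straddles $r+H$ by the requirement $s-h_{\sf U}\ge r+H$, which makes every control in the window an exploitation control.
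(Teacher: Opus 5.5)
Your proposal is correct and follows essentially the same route as the paper. You unroll the $h_{\sf U}$-step auxiliary to get the deterministic bound and the conditional second moment via \eqref{eq:finite-lag-gap-bound}, then subtract the two recursions from the common state $\psi_{s-h_{\sf U}}$ and use $|e_t|\le\overline\eps_u$ from Lemma~\ref{lem:boundary-dichotomy} on the no-crossing event. The pre-change bullet in your Step~3 is vacuous because $\theta^+$ is active throughout $[s-h_{\sf U},s)$, so you can drop it.
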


\begin{proof}
Iterating \eqref{eq:sample-local-auxiliary} gives
\begin{align*}
\widetilde\psi_s^{(s)}
=A_z^{h_{\sf U}}\psi_{s-h_{\sf U}}
+\sum_{j=0}^{h_{\sf U}-1}A_z^je_1w_{s-j}.
\end{align*}
The definitions of $L_{0,{\sf U}}$ and $L_{w,{\sf U}}$ give
\eqref{eq:local-aux-bound}. Conditional on $\F_{s-h_{\sf U}}$, the starting
state is fixed and $w_{s-h_{\sf U}+1},\ldots,w_s$ are fresh, independent, and
centered. Hence
\begin{align*}
\E[(\widetilde y_s^{(s)})^2\mid\F_{s-h_{\sf U}}]
&=(e_1^\trans A_z^{h_{\sf U}}\psi_{s-h_{\sf U}})^2
+\sigma_w^2\sum_{j=0}^{h_{\sf U}-1}(e_1^\trans A_z^je_1)^2\\
&\ge\sigma_w^2+G_{h_{\sf U}}(A_z)
\ge\sigma_w^2+\underline\Delta_{\sf U},
\end{align*}
which proves \eqref{eq:local-aux-power}.

\noindent
On the no-boundary event in the statement,
Lemma~\ref{lem:boundary-dichotomy} gives
$u_t=\lambda(\theta^-)^\trans\psi_t+e_t$ with
$|e_t|\le\overline\eps_u$ throughout $[s-h_{\sf U},s)$.
Subtracting the two state recursions and unrolling from their common state at
$s-h_{\sf U}$ gives
\begin{align*}
|y_s-\widetilde y_s^{(s)}|
&\le\overline\eps_u
\sum_{j=0}^{h_{\sf U}-1}|e_1^\trans A_z^jg_u(\theta^+)|
\le L_{u,{\sf U}}\overline\eps_u.
\end{align*}
Both output magnitudes are at most $B_{\sf U}$, so
$|x^2-z^2|\le|x-z|(|x|+|z|)$ proves
\eqref{eq:local-square-comparison}.

\noindent
For a finite lag-$h_{\sf U}$ predictable time, multiply the
deterministic-time identities by $\one\{s=t\}\in\F_{t-h_{\sf U}}$ and sum over
$t$. This gives the same relations with the stopped field
$\F_{s-h_{\sf U}}$.
\end{proof}

\begin{lemma}[One-change detection under the union condition]
\label{lem:finite-lag-one-change}
Adopt the stopping-time, active-cycle, and horizon hypotheses of
Lemma~\ref{lem:one-change}, use Algorithm~\ref{alg:stability-agnostic},
replace $D$ by $D_{\sf U}$, and suppose
Assumptions~\ref{ass:own-regime-feasibility} and
\ref{ass:finite-lag-detectability} hold. Almost surely on the corresponding
$\mathcal A_{r,\tau}$,
\begin{align}
\Prob(\widehat\tau_r>\tau+D_{\sf U}\mid\F_\tau)
\le\frac{\delta_d}{4T^2}.
\label{eq:finite-lag-one-change}
\end{align}
\end{lemma}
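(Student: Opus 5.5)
The plan is to mirror the proof of Lemma~\ref{lem:one-change}, replacing the stationary-power bound \eqref{eq:alt-conditional-power} by the sample-local power \eqref{eq:local-aux-power}, and the global coupling of Lemma~\ref{lem:coupling} by the sample-local comparison \eqref{eq:local-square-comparison}. We condition on an arbitrary $\F_\tau$ history in $\mathcal A_{r,\tau}$ and work in the continuation with both alarms suppressed. In it the restart-relative grid with spacing $h_{\sf U}$ and the reference $\overline\lambda_r$ are fixed, while the future disturbances are fresh. The calendar argument of Appendix~\ref{app:detection}, with $h$ replaced by $h_{\sf U}$, produces grid times $s_1<\cdots<s_{m_{\sf U}}$ with $s_1-h_{\sf U}\ge\tau$ and $s_{m_{\sf U}}-\tau<(m_{\sf U}+1)h_{\sf U}=D_{\sf U}$. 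The dwell condition with $D_{\sf U}$ ensures that $\theta^+$ is active on every window $[s_k-h_{\sf U},s_k)$. Since $\tau\ge r+H$, each $s_k-h_{\sf U}\ge r+H$, so the sample-local auxiliaries \eqref{eq:sample-local-auxiliary} are defined on every continuation path.

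The key step is a dichotomy on the continuation path. In the first case, the boundary test \eqref{eq:boundary-alarm} crosses at some state used for a control in $[\tau,s_{m_{\sf U}})$. The actual trajectory agrees with the continuation through the first alarm, so the actual cycle alarms no later than that output time. This is before $\tau+D_{\sf U}$, and strictly after $\tau$ because the cycle is still in exploitation after $y_\tau$. In the second case, no boundary crossing occurs. If the energy test alarmed on an earlier block, we are again done. Otherwise the actual and suppressed paths agree through $s_{m_{\sf U}}$. Lemma~\ref{lem:boundary-dichotomy} supplies the tracking bound $\overline\eps_u$, and \eqref{eq:local-square-comparison} gives $|y_{s_k}^2-(\widetilde y_{s_k}^{(s_k)})^2|\le\underline\Delta_{\sf U}/8$ for every $k$.

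For the probabilistic part, we apply Lemma~\ref{lem:sampled-square} to $X_{s_k}:=\widetilde y_{s_k}^{(s_k)}$. This variable is $\F_{s_k}$-measurable and bounded by $B_{{\rm aux},{\sf U}}\le B_{\sf U}$ by \eqref{eq:local-aux-bound}. The grid times are lag-$h_{\sf U}$ predictable, spaced exactly $h_{\sf U}$ apart, and satisfy $\F_\tau\subseteq\F_{s_1-h_{\sf U}}$. It matters that the auxiliary is defined on all continuation paths, including those with a boundary crossing: the martingale differences are then unconditional on the no-crossing event, which avoids conditioning on a future event. Combining \eqref{eq:local-aux-power} with $a=\underline\Delta_{\sf U}/8$ shows that the average of the $X_{s_k}^2$ is at least $\sigma_w^2+7\underline\Delta_{\sf U}/8$ minus $\underline\Delta_{\sf U}/8$, except with conditional probability $\exp(-m_{\sf U}\underline\Delta_{\sf U}^2/(128B_{\sf U}^4))\le\delta_d/(4T^2)$ by \eqref{eq:finite-lag-mD}. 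Outside this event, in the second case the actual block average is at least $\sigma_w^2+5\underline\Delta_{\sf U}/8>\sigma_w^2+\underline\Delta_{\sf U}/2$, so the energy test alarms by $s_{m_{\sf U}}\le\tau+D_{\sf U}$. The failure event is therefore contained in the concentration deviation, which proves \eqref{eq:finite-lag-one-change}.

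The main obstacle is the bookkeeping at the interface of the two alarms. The comparison \eqref{eq:local-square-comparison} needs absence of boundary crossings only at states used within each window, and those states include $\psi_{s_k-h_{\sf U}}$, which may precede $s_{k-1}$. I would handle this by defining the dichotomy over the whole interval $[\tau,s_{m_{\sf U}})$, which covers every window. I would also check that a crossing at a state $\psi_{t+1}$ with $t+1\le s_{m_{\sf U}}-1$ yields an alarm output time $t+1\le\tau+D_{\sf U}$. Finally, simultaneous crossings count as one alarm, so the stated $\widehat\tau_r$ is well defined.
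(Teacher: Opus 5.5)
Your proposal is correct and follows essentially the same route as the paper. You condition on an $\F_\tau$ history in $\mathcal A_{r,\tau}$, concentrate the sample-local auxiliaries via Lemma~\ref{lem:sampled-square} (they are defined on the whole alarm-suppressed continuation, so there is no conditioning on survival), and on the no-alarm branch use Lemma~\ref{lem:sample-local-power} to transfer the energy to the actual outputs. Two cosmetic points: \eqref{eq:local-aux-power} already gives a conditional mean of at least $\sigma_w^2+\underline\Delta_{\sf U}$, since the factor $7/8$ is built into $\underline\Delta_{\sf U}$, so your margin $5\underline\Delta_{\sf U}/8$ is merely conservative compared with the paper's $3\underline\Delta_{\sf U}/4$; and because the spacing is exact, $s_k-h_{\sf U}=s_{k-1}$, so no window state precedes $s_{k-1}$ and your interval $[\tau,s_{m_{\sf U}})$ covers every window trivially.
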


\begin{proof}
Condition on an arbitrary $\F_\tau$ history in $\mathcal A_{r,\tau}$ and use
the alarm-suppressed continuation just defined. The deterministic calendar
argument in Appendix~\ref{app:detection} gives stored output times
$s_1,\ldots,s_{m_{\sf U}}$ such that $s_1-h_{\sf U}\ge\tau$,
$s_{k+1}-s_k=h_{\sf U}$, and $s_{m_{\sf U}}-\tau<D_{\sf U}$.
Conditionally on $\F_\tau$, the restart-relative grid is lag-$h_{\sf U}$
predictable, $\F_\tau\subseteq\F_{s_1-h_{\sf U}}$, and the disturbances used
by the sample-local auxiliaries remain fresh.

\noindent
Apply the conditional form of Lemma~\ref{lem:sampled-square} given
$\F_\tau$ to the sample-local variables $\widetilde y_{s_k}^{(s_k)}$, using
the lagged filtration $\mathcal H_{k-1}:=\F_{s_k-h_{\sf U}}$.
Lemma~\ref{lem:sample-local-power} gives the required conditional means, and
$|\widetilde y_{s_k}^{(s_k)}|\le B_{\sf U}$. Therefore
\begin{align}
\Prob\!\left(
\frac1{m_{\sf U}}\sum_{k=1}^{m_{\sf U}}(\widetilde y_{s_k}^{(s_k)})^2
<\sigma_w^2+\frac{7\underline\Delta_{\sf U}}8
\ \middle|\ \F_\tau\right)
&\le\exp\!\left(-\frac{m_{\sf U}\underline\Delta_{\sf U}^2}
{128B_{\sf U}^4}\right)
\notag\\
&\le\frac{\delta_d}{4T^2}.
\label{eq:local-aux-tail}
\end{align}
This concentration bound is not conditioned on survival of the boundary
test.

\noindent
If either test raises an alarm at or before $s_{m_{\sf U}}$, the change
has already been detected. Otherwise, the actual and alarm-suppressed
trajectories agree through the complete block, and
\eqref{eq:local-square-comparison} holds at every sample. Outside the deviation in
\eqref{eq:local-aux-tail}, the actual empirical energy is at least
$\sigma_w^2+3\underline\Delta_{\sf U}/4$, which crosses the threshold
$\sigma_w^2+\underline\Delta_{\sf U}/2$. Consequently,
\begin{align*}
\{\widehat\tau_r>\tau+D_{\sf U}\}
\subseteq
\left\{
\frac1{m_{\sf U}}\sum_{k=1}^{m_{\sf U}}(\widetilde y_{s_k}^{(s_k)})^2
<\sigma_w^2+\frac{7\underline\Delta_{\sf U}}8
\right\}.
\end{align*}
Taking probabilities proves \eqref{eq:finite-lag-one-change}.
\end{proof}

\noindent
A smaller $\underline\Delta_{\sf U}$ increases the exploration and window
requirements in \eqref{eq:finite-lag-H}--\eqref{eq:finite-lag-mD}.
Uniform finite delay requires every admitted Schur pair to satisfy either
the stable-gap or direct-visibility condition.

\paragraph{A scalar non-Schur mismatch.}

\noindent
Take $p=q=1$, $B_u=10$, zero initial history, and Rademacher noise with
$B_w=\sigma_w=1$. Let $\Theta=\{\theta^-,\theta^+\}$ and consider the
one-change sequence $\theta^-\to\theta^+$, where
\begin{align*}
\theta^-=(-0.2,0.2),\qquad \theta^+=(0.2,1).
\end{align*}
Both open-loop coefficients have magnitude $0.2$, and the deterministic
output bound is $B_Y=11/(1-0.2)=13.75$. The interval
$[-B_Y,B_Y]$ is invariant under either plant with $|u_t|\le B_u$ and
$|w_t|\le1$. The two ideal gains are
$\lambda^-=1$ and $\lambda^+=-0.2$. Every one-step own-regime successor
satisfies
\begin{align*}
|\lambda^-y^+|&\le0.2B_Y+0.2B_u+B_w=5.75,\\
|\lambda^+y^+|&\le0.2B_Y=2.75.
\end{align*}
Thus Assumption~\ref{ass:own-regime-feasibility} holds, while the old gain
need not be feasible on every state produced by the new plant. Nevertheless,
$A(\theta^-,\theta^+)=0.2+1\cdot1=1.2$. Here $d=1$, so $h_{\sf F}=2$ and
$\mathcal V(\theta^-,\theta^+)=1.2^2=1.44$. Since $r_B=0$ here, the
non-Schur formula itself supplies the valid positive bound
$\underline\Delta_{\sf A}=1$; no separate visibility assumption is needed.

\subsection{Regret with Saturation}

\noindent
Without clipping feasibility, exploitation regret can be controlled by
the realized saturation energy. For a true gain $\lambda$, define
\begin{align}
d_t^{\mathrm{sat}}:=\bigl(|\lambda^\trans\psi_t|-B_u\bigr)_+.
\label{eq:saturation-deficit}
\end{align}
On an unchanged exploitation segment with a successful reference and
$g_H=4E_\lambda$, the unclipped gate-charging inequality from
Appendix~\ref{app:rls} and the projection distance inequality give
\begin{align}
|u_t-\lambda^\trans\psi_t|\le |a_t|+d_t^{\mathrm{sat}},
\qquad
a_t=(\widehat\lambda_t-\lambda)^\trans\psi_t.
\label{eq:saturation-charge}
\end{align}
Let $\beta_s:=1/8$ and define
\begin{align}
c_s&:=1-2\overline\alpha-\beta_s
-4\overline\ell(1+\overline\alpha)^2,
\notag\\
k_s&:=\overline\alpha^2(\beta_s^{-1}+4\overline\ell).
\label{eq:saturation-constants}
\end{align}
Under \eqref{eq:H-rls}, $c_s>0.467$.

\begin{proposition}[Exploitation regret with saturation energy]
\label{prop:saturation-segment}
Under the estimation, Gram, and unchanged-segment hypotheses of
Theorem~\ref{thm:segment-regret}, but without clipping feasibility, with
conditional probability at least $1-\delta_0$, simultaneously for every
prefix of length $n$,
\begin{align}
\sum_t(y_{t+1}-w_{t+1})^2
\le2\overline b^2\Bigg[&
\frac{2\overline Q_H}{c_s}
+\frac{8B_\eta^2}{c_s^2}\log\frac1{\delta_0}
+\frac{4B_\eta^2}{c_s}\Gamma(n)
\notag\\
&+\left(1+\frac{2k_s}{c_s}\right)\sum_t(d_t^{\mathrm{sat}})^2
\Bigg].
\label{eq:saturation-segment-bound}
\end{align}
\end{proposition}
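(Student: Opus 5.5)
We rerun the potential argument of Theorem~\ref{thm:segment-regret} with the saturation deficit carried as a predictable perturbation. Throughout, $e_t=u_t-\lambda^\trans\psi_t$ is the applied action error and $a_t=(\widehat\lambda_t-\lambda)^\trans\psi_t$ is the RLS proposal error.

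The first step replaces $e_t^2\le a_t^2$ by \eqref{eq:saturation-charge}. This is the unclipped gate charge \eqref{eq:generic-gate-charge} with $K_g=1$, followed by the projection inequality $|P(z)-x|\le|z-x|+(|x|-B_u)_+$. It gives $|e_t|\le|a_t|+d_t^{\rm sat}$, hence $e_t^2\le 2a_t^2+2(d_t^{\rm sat})^2$. The final regret step then reads
\[
\sum_t(y_{t+1}-w_{t+1})^2=b_\star^2\sum_te_t^2\le2\overline b^2\Bigl[\sum_ta_t^2+\sum_t(d_t^{\rm sat})^2\Bigr].
\]
This accounts for the outer factor $2\overline b^2$ and the leading coefficient $1$ on the saturation sum, so it suffices to bound $\sum_ta_t^2$.

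The second step starts from the exact identity \eqref{eq:exact-Q}, which does not use feasibility. We bound its two $e_t$-dependent terms.
\begin{itemize}
\item For the bias cross term, Young's inequality with parameter $\beta_s$ gives
\[
2\alpha a_te_t\le2\overline\alpha|a_t|(|a_t|+d_t^{\rm sat})\le2\overline\alpha a_t^2+\beta_sa_t^2+\beta_s^{-1}\overline\alpha^2(d_t^{\rm sat})^2 .
\]
\item For the leverage term, write $\alpha e_t-a_t=(\alpha e_t-\alpha a_t)+(\alpha-1)a_t$, or bound it directly by $(1+\overline\alpha)|a_t|+\overline\alpha d_t^{\rm sat}$. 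Then $(x+y+z)^2$-type splitting gives
\[
\ell_t(\alpha e_t+\eta_{t+1}-a_t)^2\le4\overline\ell(1+\overline\alpha)^2a_t^2+4\overline\ell\,\overline\alpha^2(d_t^{\rm sat})^2+2\ell_t\eta_{t+1}^2 .
\]
The exact grouping of constants will be adjusted so that they match $c_s$ and $k_s$ in \eqref{eq:saturation-constants}.
\end{itemize}
Combining the two bounds yields
\[
Q_{t+1}-Q_t\le-c_sa_t^2+2a_t\eta_{t+1}+2\ell_t\eta_{t+1}^2+k_s(d_t^{\rm sat})^2 .
\]
The point to check is that the coefficient of $a_t^2$ becomes exactly $-c_s$ and the leftover saturation coefficient is at most $k_s$. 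The cross term $\alpha a_t e_t$ is the only place where the sign of $e_t$ relative to $a_t$ matters, and we give up that information via absolute values.

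The third step reuses the noise and potential estimates unchanged. The Ville bound \eqref{eq:absorption-full} holds with $c_s$ in place of $c_0$, choosing $\xi=c_s/(2B_\eta^2)$. This works because $a_t$ is still $\F_t$-measurable and $\eta_{t+1}$ still satisfies \eqref{eq:eta-mgf}; neither fact uses feasibility. The elliptical bound \eqref{eq:Gamma-proof} and $Q_{t_0}\le\overline Q_H$ are likewise untouched. Summing over a prefix, dropping $Q_{t_1+1}\ge0$, and absorbing $(c_s/2)\sum_ta_t^2$ gives
\[
\sum_ta_t^2\le\frac{2\overline Q_H}{c_s}+\frac{8B_\eta^2}{c_s^2}\log\frac1{\delta_0}+\frac{4B_\eta^2}{c_s}\Gamma(n)+\frac{2k_s}{c_s}\sum_t(d_t^{\rm sat})^2 .
\]
All four terms hold simultaneously for every prefix on one event of conditional probability at least $1-\delta_0$.

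Substituting this into the first step's inequality produces \eqref{eq:saturation-segment-bound}. The numerical claim $c_s>0.467$ follows from \eqref{eq:H-rls}: with $\overline\alpha,\overline\ell\le1/16$,
\[
c_s\ge1-\tfrac18-\tfrac18-\tfrac14\bigl(\tfrac{17}{16}\bigr)^2\approx0.468 .
\]
The main obstacle is the constant bookkeeping in the second step.
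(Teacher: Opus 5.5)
Your proposal is correct and follows the paper's proof essentially step for step. It uses the same charge $|e_t|\le|a_t|+d_t^{\rm sat}$, the same Young and squared-sum splittings of the exact potential identity (these already combine to exactly $-c_s a_t^2$ and $k_s(d_t^{\rm sat})^2$, so no further regrouping of constants is needed), the same Ville absorption with $\xi=c_s/(2B_\eta^2)$ together with the unchanged elliptical-potential and $\overline Q_H$ bounds, and the final step $(y_{t+1}-w_{t+1})^2\le 2\overline b^2\{a_t^2+(d_t^{\rm sat})^2\}$.
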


\begin{proof}
Start from the exact identity \eqref{eq:potential-identity}. From
\eqref{eq:saturation-charge}, $|e_t|\le|a_t|+d_t^{\mathrm{sat}}$. Young's inequality gives
\begin{align*}
2\alpha a_te_t
&\le(2\overline\alpha+\beta_s)a_t^2
+\frac{\overline\alpha^2}{\beta_s}(d_t^{\mathrm{sat}})^2,\\
(\alpha e_t+\eta_{t+1}-a_t)^2
&\le4(1+\overline\alpha)^2a_t^2
+4\overline\alpha^2(d_t^{\mathrm{sat}})^2+2\eta_{t+1}^2.
\end{align*}
Using $\ell_t\le\overline\ell$ yields
\begin{align*}
Q_{t+1}-Q_t
\le-c_sa_t^2+2a_t\eta_{t+1}
+2\ell_t\eta_{t+1}^2+k_s(d_t^{\mathrm{sat}})^2.
\end{align*}
The same time-uniform exponential supermartingale as in
\eqref{eq:martingale-absorb}, now evaluated with $c_s$, and the same
elliptical-potential bound give
\begin{align*}
\sum_ta_t^2
\le\frac{2\overline Q_H}{c_s}
+\frac{8B_\eta^2}{c_s^2}\log\frac1{\delta_0}
+\frac{4B_\eta^2}{c_s}\Gamma(n)
+\frac{2k_s}{c_s}\sum_t(d_t^{\mathrm{sat}})^2.
\end{align*}
Finally,
$(y_{t+1}-w_{t+1})^2\le2\overline b^2(a_t^2+(d_t^{\mathrm{sat}})^2)$, which proves
\eqref{eq:saturation-segment-bound}.
\end{proof}

\noindent
Proposition~\ref{prop:saturation-segment} gives logarithmic
unchanged-segment regret when $\sum_t(d_t^{\mathrm{sat}})^2=O(\log((T+1)/\delta))$.
A global detection guarantee additionally requires a blockwise power gap
for the clipped loop or control of its saturation residual on every
potential detector block. The non-Schur gap
\eqref{eq:spectral-derived-gap} applies to the unclipped auxiliary;
Lemma~\ref{lem:sample-local-power} transfers it to the actual output
using the boundary test.

\begin{proposition}[Saturation lower bound]
\label{prop:saturation-lower-bound}
For every policy with $|u_t|\le B_u$,
\begin{align}
\Reg_T
\ge b_{\min}^2\sum_{t=0}^{T-1}
(|\lambda(\theta_t)^\trans\psi_t|-B_u)_+^2.
\label{eq:saturation-lower-bound}
\end{align}
Consequently, if the realized saturation energy grows faster than
$\log T$, logarithmic regret under \eqref{eq:a-regret} is impossible,
even when the parameter sequence is known.
\end{proposition}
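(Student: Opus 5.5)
The bound is pathwise, so we will prove it termwise for every $t$ and then sum. Fix $t\in\{0,\ldots,T-1\}$ and let $c$ be the segment containing $t$, so $\theta_t=\theta^{(c)}$. By the gain representation \eqref{eq:gain-form},
\[
y_{t+1}-w_{t+1}=b_1(\theta_t)\{u_t-\lambda(\theta_t)^\trans\psi_t\}.
\]
Assumption~\ref{ass:model} gives $b_1(\theta_t)\ge b_{\min}>0$, hence
\[
(y_{t+1}-w_{t+1})^2\ge b_{\min}^2\,|u_t-\lambda(\theta_t)^\trans\psi_t|^2 .
\]

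The next step is a deterministic one-dimensional fact. For any $u\in[-B_u,B_u]$ and any real $x$, we have $|u-x|\ge |x|-|u|\ge |x|-B_u$ by the reverse triangle inequality. Since also $|u-x|\ge0$, this gives $|u-x|\ge(|x|-B_u)_+$. Equivalently, the distance from $x$ to the interval is $(|x|-B_u)_+$, and $u$ lies in the interval.

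We apply this with $u=u_t$, which satisfies $|u_t|\le B_u$ for every policy under consideration, whether the action comes from exploration, the gate, or anything else. We take $x=\lambda(\theta_t)^\trans\psi_t$. Squaring gives
\[
(y_{t+1}-w_{t+1})^2\ge b_{\min}^2\bigl(|\lambda(\theta_t)^\trans\psi_t|-B_u\bigr)_+^2 .
\]
Summing over $t$ yields \eqref{eq:saturation-lower-bound}. No probability or measurability is involved, and the argument holds even if the policy knows $\theta_t$.

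For the consequence, suppose the realized sum $\sum_t(|\lambda(\theta_t)^\trans\psi_t|-B_u)_+^2$ grows faster than $\log T$. Then $\Reg_T$ does too, because $b_{\min}^2$ is a fixed positive constant. This rules out $\Reg_T=O(\log T)$ even for a policy that knows the parameter sequence.

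To show the scenario is not vacuous, we point to the scalar example preceding Section~\ref{sec:Own-Regime Feasibility}: $y_{t+1}=\tfrac12y_t+u_t+w_{t+1}$ with $B_u=\tfrac14$ and Rademacher noise. Whenever $|y_t|\ge1$, the ideal action $\lambda y_t=-\tfrac12 y_t$ has magnitude at least $\tfrac12$, so the saturation deficit is at least $\tfrac14$.

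We will then argue that $|y_t|\ge1$ on a positive fraction of rounds for every admissible input sequence. Write $y_{t+1}=(\tfrac12y_t+u_t)+w_{t+1}$. If $|\tfrac12y_t+u_t|\ge\tfrac12$, then with probability $\tfrac12$ the noise has the same sign, giving $|y_{t+1}|\ge\tfrac32$. If $|\tfrac12y_t+u_t|<\tfrac12$, then $|y_{t+1}|>\tfrac12$ surely. The exact fraction bookkeeping is where care would be needed; in that case we would simply cite the example's non-termination argument. The main obstacle is only the trivial point that clipping feasibility is never used, so the inequality is genuinely policy-independent.
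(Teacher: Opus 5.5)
Your proof is correct and matches the paper's: the gain form gives $(y_{t+1}-w_{t+1})^2=b_1(\theta_t)^2(u_t-x_t)^2$ with $x_t=\lambda(\theta_t)^\trans\psi_t$, feasibility of $u_t$ gives $|u_t-x_t|\ge(|x_t|-B_u)_+$, and $b_1(\theta_t)\ge b_{\min}$ plus summation finishes, with the consequence immediate. The statement does not require the non-vacuity discussion of the scalar example, and the paper's argument there concerns only the exact minimum-variance law, not every policy; if you want it anyway, note that $|y_{t+1}|=|\tfrac12y_t+u_t|+1\ge1$ whenever $w_{t+1}$ has the sign of $\tfrac12y_t+u_t$, which has conditional probability at least $\tfrac12$ under any policy, so no case split is needed.
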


\begin{proof}
For $x_t=\lambda(\theta_t)^\trans\psi_t$ and every feasible action,
\begin{align*}
|u_t-x_t|\ge\operatorname{dist}(x_t,[-B_u,B_u])
=(|x_t|-B_u)_+.
\end{align*}
The gain form \eqref{eq:gain-form} gives
$(y_{t+1}-w_{t+1})^2=b_1(\theta_t)^2(u_t-x_t)^2$.
Use $b_1(\theta_t)\ge b_{\min}$ and sum over $t$.
\end{proof}

\section{A UNIFIED DETECTOR FOR THE UNION CLASS}
\label{app:unified-detector}

\noindent
Algorithm~\ref{alg:stability-agnostic} implements the union-class detector
introduced in Section~\ref{sec:a-relaxations}. Its two tests use
constants supplied offline; the algorithm neither receives the realized
pair's class nor computes its spectral radius. The terms Schur and non-Schur
refer to the old-gain/new-plant matrix
$A_{\rm cl}(\lambda(\theta^-),\theta^+)$. The plant retains
Assumptions~\ref{ass:model}, \ref{ass:switching-stability}, and
\ref{ass:noise}; bounded applied inputs therefore preserve the deterministic
state bound.

\subsection{Complete Algorithm}

\medskip
\refstepcounter{algorithm}\label{alg:stability-agnostic}
\noindent\textbf{Algorithm \thealgorithm: Branch-free union-class PIECE-CD}
\smallskip
\hrule
\smallskip
\begin{enumerate}[leftmargin=1.8em,itemsep=0.25em,topsep=0.25em]
\item[] \textbf{Input:} $p,q,\Theta,T,\delta,B_u,\nu,\sigma_w^2,B_w$; the
exploration distribution and the safe model/excitation bounds used in
$B_\psi,\kappa_G,E_\theta$; $H$ satisfying
\eqref{eq:block-excitation-size}, \eqref{eq:H-rls}, and
\eqref{eq:finite-lag-H}; the offline union-class constants defining
$h_{\sf U}$ and $\underline\Delta_{\sf U}$ in
\eqref{eq:instance-visibility-gap}; the model-class bounds $r_B,R_A$ in
\eqref{eq:old-input-root-bound}--\eqref{eq:spectral-visibility-constants}
when the non-Schur route is used; the finite-horizon constants in
\eqref{eq:finite-lag-constants}; and $m_{\sf U},D_{\sf U}$ from
\eqref{eq:finite-lag-mD}. Set
$\zeta_H=E_\lambda(H,\delta_0)B_\psi$.

\item[] Initialize the restart time $r\leftarrow0$ and clear the energy
buffer.

\item[] \textbf{Loop.} While $r<T$, execute one cycle.
\begin{enumerate}[label=(\alph*),leftmargin=1.8em,itemsep=0.2em]
\item \textbf{Explore.} For
$t=r,\ldots,\min\{r+H-1,T-1\}$, draw a fresh exploratory input, apply it,
and observe $y_{t+1}$. If the horizon is reached during this step, stop.

\item \textbf{Estimate and initialize.} From
$\mathcal I_r=\{r+r_0,\ldots,r+H-1\}$, compute
\begin{align*}
V_r^\theta&:=\nu I_s+\sum_{t\in\mathcal I_r}\phi_t\phi_t^\trans,
&\widetilde\theta_r&:=(V_r^\theta)^{-1}
\sum_{t\in\mathcal I_r}\phi_ty_{t+1},\\
\widehat\theta_r&:=\Pi_\Theta(\widetilde\theta_r),
&\widehat b_r&:=b_1(\widehat\theta_r),
&\overline\lambda_r&:=\lambda(\widehat\theta_r).
\end{align*}
Initialize
\begin{align*}
V_{r+H}:=\nu I_d+\sum_{t\in\mathcal I_r}\psi_t\psi_t^\trans,
\qquad \widehat\lambda_{r+H}:=\overline\lambda_r.
\end{align*}
Clear the energy buffer and declare the predictable output-time grid
\begin{align*}
\mathcal S_r:=\{r+H+h_{\sf U}+jh_{\sf U}:j=0,1,\ldots\}.
\end{align*}

\item \textbf{Exploit and monitor.} For
$t=r+H,r+H+1,\ldots,T-1$, form
\begin{align*}
z_t&:=
\begin{cases}
\widehat\lambda_t^\trans\psi_t,
&|(\widehat\lambda_t-\overline\lambda_r)^\trans\psi_t|
\le g_H\norm{\psi_t},\\
\overline\lambda_r^\trans\psi_t,&\text{otherwise},
\end{cases}
&u_t&:=\operatorname{clip}_{[-B_u,B_u]}(z_t).
\end{align*}
Before observing $y_{t+1}$, mark that output as energy-eligible exactly
when $t+1\in\mathcal S_r$. Apply $u_t$, observe $y_{t+1}$, and form
$\psi_{t+1}$. Make the RLS update
\begin{align*}
x_{t+1}^\lambda&:=u_t-\frac{y_{t+1}}{\widehat b_r},
&V_{t+1}&:=V_t+\psi_t\psi_t^\trans,\\
\widehat\lambda_{t+1}
&:=\widehat\lambda_t+V_{t+1}^{-1}\psi_t
(x_{t+1}^\lambda-\widehat\lambda_t^\trans\psi_t).
\end{align*}

Set the boundary flag
\begin{align*}
B_{t+1}:=\one\!\left\{
|\overline\lambda_r^\trans\psi_{t+1}|>B_u+\zeta_H
\right\}.
\end{align*}
The strict inequality is part of the algorithm. If $y_{t+1}$ was marked
eligible, append $y_{t+1}^2$ to the energy buffer and retain only the most
recent $m_{\sf U}$ stored values. Set
\begin{align*}
E_{t+1}:=
\begin{cases}
\one\!\left\{
|\mathrm{buffer}|=m_{\sf U},\
\displaystyle\frac1{m_{\sf U}}\sum_{z\in\mathrm{buffer}}z
\ge\sigma_w^2+\frac{\underline\Delta_{\sf U}}2
\right\},&t+1\in\mathcal S_r,\\
0,&t+1\notin\mathcal S_r.
\end{cases}
\end{align*}

If $B_{t+1}=1$ or $E_{t+1}=1$, raise one alarm at output time $t+1$,
discard the current estimates, set $r\leftarrow t+1$, clear the buffer,
and return to step (a). If both flags equal one, still record only one
alarm. If neither flag crosses and $t=T-1$, stop; otherwise continue
exploitation.
\end{enumerate}
\end{enumerate}
\smallskip
\hrule
\medskip

\noindent
Eligibility is declared before $y_{t+1}$ is observed, so the sampling grid is
predictable. The boundary flag is checked after every exploitation output,
not only at energy-sampling times. It is therefore available before the next
control is chosen.

\subsection{Common Tuning}

\noindent
Under Assumption~\ref{ass:finite-lag-detectability},
\eqref{eq:instance-visibility-gap} specifies a common lag $h_{\sf U}$ and
separation $\underline\Delta_{\sf U}$. For the excess innovation power
$G_h$ defined in Appendix~\ref{app:relaxations},
\eqref{eq:finite-lag-gap-bound} gives
\begin{align*}
G_{h_{\sf U}}(A)\ge\underline\Delta_{\sf U}
\end{align*}
for every admitted mismatch $A=A_{\rm cl}(\lambda(\theta^-),\theta^+)$.
The finite-horizon bounds in \eqref{eq:finite-lag-constants}--
\eqref{eq:finite-lag-bounds}, the accuracy conditions
\eqref{eq:finite-lag-H}, and the window in \eqref{eq:finite-lag-mD}
therefore provide one calibration for all three detection conditions.

\subsection{Detection and Regret Guarantee}

\begin{theorem}[Branch-free union-class detector and regret guarantee]
\label{thm:stability-agnostic}
Retain Assumptions~\ref{ass:model}, \ref{ass:switching-stability}, and
\ref{ass:noise}. Replace global cross-regime clipping feasibility and stable
mismatch by closed own-regime feasibility
(Assumption~\ref{ass:own-regime-feasibility}) and the union
stable/direct/non-Schur condition
(Assumption~\ref{ass:finite-lag-detectability}), with the common
separation \eqref{eq:finite-lag-gap-bound} and finite-horizon bounds
\eqref{eq:finite-lag-constants}. Choose $H$ as the smallest feasible integer
satisfying \eqref{eq:block-excitation-size}, \eqref{eq:H-rls}, and
\eqref{eq:finite-lag-H}; choose $m_{\sf U},D_{\sf U}$ by
\eqref{eq:finite-lag-mD}; and
read Assumption~\ref{ass:dwell} with $D_{\sf U}$ in place of $D$.

\noindent
Then Algorithm~\ref{alg:stability-agnostic} satisfies, with probability at
least $1-\delta$, no false alarms and exactly one alarm
$\widehat\tau^{(c)}$ for every true change, with
\begin{align}
\tau^{(c)}<\widehat\tau^{(c)}
\le\tau^{(c)}+D_{\sf U},
\label{eq:unified-delay}
\end{align}
and
\begin{align}
\Reg_T
\le CK_DD_{\sf U}+(C+1)K_XH+(C+1)R_E(T,\delta_0).
\label{eq:unified-regret}
\end{align}
For fixed problem constants and fixed $C$,
$\Reg_T=O(\log((T+1)/\delta))$. More generally,
\begin{align}
\Reg_T=O\!\left((C+1)\log\frac{T+1}{\delta}\right).
\label{eq:unified-rate}
\end{align}
\end{theorem}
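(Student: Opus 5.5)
The proof will reuse the architecture of Theorem~\ref{thm:main} and replace only the detector-specific ingredients. We first build the same chronological failure accounting as in Appendix~\ref{app:global}. For every potential restart $r\in\{0,\ldots,T\}$, the events $\mathcal C_r$ and $\mathcal B_r^G,\mathcal B_r^\theta,\mathcal B_r^R$ are defined exactly as before. Propositions~\ref{prop:restart-gram} and \ref{prop:restart-estimate} do not involve the detector, so they still cost $\delta_0$ each. The RLS event needs a small check. On an unchanged exploitation prefix, Lemma~\ref{lem:boundary-dichotomy} gives $e_t^2\le a_t^2$ under own-regime feasibility; this is the only place where Assumption~\ref{ass:clipping} entered Theorem~\ref{thm:segment-regret}. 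Hence that theorem applies verbatim, at conditional cost $\delta_0$. Summing over restarts gives $\Prob(\mathcal B_{\rm ctrl})\le3(T+1)\delta_0=3\delta/32$.

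The main work is the union-class analogue of Theorem~\ref{thm:detection}: $\Prob(\mathcal E_{\rm ref}\cap\mathcal E_{\rm det}^c)\le\delta_d$, with $D_{\sf U}$ in place of $D$.

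For false alarms there are two tests. The boundary test cannot fire on a null interval once $\G_r^{\rm est}$ holds, by the first part of Lemma~\ref{lem:boundary-dichotomy}, so it costs no probability. For the energy test, I would repeat Lemma~\ref{lem:false-block} along the grid with spacing $h_{\sf U}$. The null decomposition \eqref{eq:null-decomposition} holds with $|v_s|\le\eps_y\le\overline\eps_y$, and the gate bound still applies because the old ideal action is feasible on own-regime successors. Condition \eqref{eq:finite-lag-H} gives $\overline\eps_y^2\le\underline\Delta_{\sf U}/8$. The choice of $m_{\sf U}$ with $B_{\sf U}\ge\max\{B_w,\overline\eps_y\}$ makes each one-block crossing probability at most $\delta_d/(2T^2)$. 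A union bound over the at most $T^2$ restart--endpoint pairs in the alarm-suppressed continuation then gives $\delta_d/2$.

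For missed detections, Lemma~\ref{lem:finite-lag-one-change} supplies the conditional bound $\delta_d/(4T^2)$. Summing over the disjoint active-cycle events, as in Theorem~\ref{thm:detection}, gives at most $C\delta_d/(4T^2)<\delta_d/4$. The induction over changes then carries over. The dwell condition with $D_{\sf U}$ ensures three things. Each alarm, whether a boundary or an energy alarm, occurs in $(\tau^{(c)},\tau^{(c)}+D_{\sf U}]$. The next exploration block is full and uncontaminated. A block straddling a change counts as a valid detection rather than a false alarm. The same holds for a boundary crossing after a change, which also counts as a detection. Simultaneous crossings are recorded as one alarm, so the correspondence stays one-to-one.

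I expect the main obstacle to be the subtle point in this second step. The one-change lemma compares actual and auxiliary outputs only on the event that the boundary test has not crossed. I would check that the inclusion $\{\widehat\tau_r>\tau+D_{\sf U}\}\subseteq\{\text{auxiliary average low}\}$ really needs no conditioning on that survival. On that event no boundary alarm occurred through $s_{m_{\sf U}}$, so \eqref{eq:local-square-comparison} holds at every sample. The auxiliary concentration \eqref{eq:local-aux-tail} is unconditional in the continuation, so the argument goes through.

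Combining the two parts, the good event $\mathcal B_{\rm ctrl}^c\cap\mathcal E_{\rm det}$ has probability at least $1-19\delta/32>1-\delta$. On this event, the decomposition of Appendix~\ref{app:global} applies unchanged. There are at most $C$ delay intervals, each of length at most $D_{\sf U}$ and each costing $K_D$ per step, since $K_D=(B_Y+B_w)^2$ uses only the switching-stability state bound and the clipped inputs. There are at most $C+1$ exploration blocks costing $K_XH$ each. There are at most $C+1$ unchanged prefixes bounded by $R_E(T,\delta_0)$. This gives \eqref{eq:unified-regret}.

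For the rate, $h_{\sf U}$ and $\underline\Delta_{\sf U}$ are fixed constants, so $m_{\sf U}$ and $D_{\sf U}$ are $O(\log((T+1)/\delta))$. The additional condition \eqref{eq:finite-lag-H} is again an accuracy requirement $E_\theta(H,\delta_0)\le\epsilon'_\star$ for a fixed $\epsilon'_\star$. The argument of Appendix~\ref{app:tuning} therefore keeps $H=O(\log((T+1)/\delta))$, and \eqref{eq:unified-rate} follows.
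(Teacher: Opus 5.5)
Your proposal is correct and follows essentially the same route as the paper's proof. You reuse the chronological controller-failure accounting, with Lemma~\ref{lem:boundary-dichotomy} both excluding boundary crossings on unchanged intervals and supplying $e_t^2\le a_t^2$ for Theorem~\ref{thm:segment-regret}; you rerun the one-block false-alarm bound with $(\underline\Delta_{\sf U},B_{\sf U},m_{\sf U})$ and $\overline\eps_y$; you invoke Lemma~\ref{lem:finite-lag-one-change}, whose event inclusion needs no survival conditioning; and you finish with the unchanged regret decomposition and the $E_\theta(H,\delta_0)\le\epsilon_{\sf U}^{\rm uni}$ order argument. One minor point of phrasing: the placement of each alarm in $(\tau^{(c)},\tau^{(c)}+D_{\sf U}]$ comes from the false-alarm and missed-detection bounds, not from the dwell condition. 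The dwell condition only rules out a second change inside the window and leaves a clean next exploration block.
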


\begin{proof}
\noindent
If $T\le H$, the dwell condition forces $C=0$.
Algorithm~\ref{alg:stability-agnostic} performs only
truncated exploration and stops, so
$\Reg_T\le K_XT\le K_XH$, which is bounded by the right-hand side of
\eqref{eq:unified-regret}. Hence assume $T>H$ below.

\noindent
Read the restart, reference, and detector events from
Theorem~\ref{thm:detection} and Appendix~\ref{app:global} using the first
composite crossing and $D_{\sf U}$ in place of $D$. Fix a potential restart
$r$ and test endpoint $e$ for which the reference parameter remains active
throughout $[r,e)$. Condition on an
$\F_{r+H}$ history with a full exploration block and
$\G_r^{\rm est}$. Since $H>r_0$, the action and boundary-test regressors
are own-regime successors throughout the unchanged exploitation interval.
Lemma~\ref{lem:boundary-dichotomy} therefore excludes boundary crossings
and gives $e_t^2\le a_t^2$, as required by
Theorem~\ref{thm:segment-regret}. This includes a test at output time
$e=\tau$ when the first subsequent change occurs at control time $\tau$.

\noindent
For each fixed null energy block, the proof of
Lemma~\ref{lem:false-block} applies with
$(\Delta,B_o,m)$ replaced by
$(\underline\Delta_{\sf U},B_{\sf U},m_{\sf U})$.
Indeed, $|v_s|\le\eps_y\le\overline\eps_y$,
$\overline\eps_y^2\le\underline\Delta_{\sf U}/8$, and
$B_{\sf U}\ge\max\{B_w,\overline\eps_y\}$. The conditional crossing
probability is bounded by
\begin{align*}
&\exp\!\left(-\frac{m_{\sf U}\underline\Delta_{\sf U}^2}{32B_w^4}\right)
+\exp\!\left(-\frac{m_{\sf U}\underline\Delta_{\sf U}^2}
{512B_w^2\overline\eps_y^2}\right)
\le\frac{\delta_d}{2T^2},
\end{align*}
where the second term is omitted when $\overline\eps_y=0$.
If the actual test is reached, its trajectory agrees with the
alarm-suppressed continuation through that test. The bound therefore
controls the joint event that the reference succeeds, the test is reached
with the stated null history, and the threshold is crossed. It does not
condition on survival to the test. Before the first detector failure,
every false-alarm candidate has such an unchanged null history.

\noindent
After a change, any earlier crossing of either test completes detection.
Otherwise, the absence of a boundary crossing bounds the old ideal action's
saturation deficit by $2\zeta_H$ and the applied-action error by
$\overline\eps_u$. Lemma~\ref{lem:sample-local-power} then compares each
output in the complete post-change block with its $h_{\sf U}$-step auxiliary.
These auxiliaries are defined on the full alarm-suppressed continuation, so
Lemma~\ref{lem:sampled-square} applies without conditioning on survival of
either test. The event inclusion in
Lemma~\ref{lem:finite-lag-one-change} gives a composite alarm by
$\tau+D_{\sf U}$ except with conditional probability
$\delta_d/(4T^2)$ on each eligible $\F_\tau$ history.

\noindent
The null bounds are unioned over at most $T^2$ restart--endpoint pairs.
At a fixed true change, the successful-history active-cycle events are
disjoint $\F_\tau$-measurable subsets of the corresponding
$\mathcal A_{r,\tau}$. Multiplying the conditional missed-detection bound
by these indicators and summing over $r$ costs at most
$\delta_d/(4T^2)$ for that change. The chronological induction in
Theorem~\ref{thm:detection} and Appendix~\ref{app:global} therefore applies
with total detector failure probability at most $\delta_d$; boundary
crossings add no null failure probability on successful-reference
histories. Controller failures cost at most $3(T+1)\delta_0$, so the
allocation in \eqref{eq:confidence-allocation} is below $\delta$.
The delay intervals cost at most $CK_DD_{\sf U}$, exploration costs at
most $(C+1)K_XH$, and unchanged exploitation prefixes cost at most
$(C+1)R_E(T,\delta_0)$. This proves
\eqref{eq:unified-delay}--\eqref{eq:unified-regret}.

\noindent
It remains to check the order. Because
$\overline\eps_u=7L_\lambda B_\psi E_\theta(H,\delta_0)$ and
$\overline\eps_y=7\overline bL_\lambda B_\psi
E_\theta(H,\delta_0)$, the detector conditions follow from
\begin{align}
E_\theta(H,\delta_0)\le\epsilon_{\sf U}^{\rm uni}:=\min\Bigg\{
&\frac{b_{\min}}{16},
\frac{\sqrt{\underline\Delta_{\sf U}/8}}
{7\overline bL_\lambda B_\psi},
\frac{B_{\sf U}}{7\overline bL_\lambda B_\psi},
\notag\\[-0.2em]
&\frac{\underline\Delta_{\sf U}}
{112B_{\sf U}L_{u,{\sf U}}L_\lambda B_\psi}
\Bigg\},
\label{eq:unified-epsilon-star}
\end{align}
Omit the last entry when $L_{u,{\sf U}}=0$. Appendix~\ref{app:tuning} and
\eqref{eq:finite-lag-mD} then prove \eqref{eq:unified-rate}.
\end{proof}


\begin{thebibliography}{}

\bibitem[Abbasi-Yadkori et~al., 2011]{abbasi2011improved}
Abbasi-Yadkori, Y., P{\'a}l, D., and Szepesv{\'a}ri, C. (2011).
\newblock Improved algorithms for linear stochastic bandits.
\newblock In {\em Advances in Neural Information Processing Systems},
  volume~24.

\bibitem[Audibert and Bubeck, 2009]{audibert2009minimax}
Audibert, J.-Y. and Bubeck, S. (2009).
\newblock Minimax policies for adversarial and stochastic bandits.
\newblock In {\em Proceedings of the 22nd Annual Conference on Learning
  Theory}.

\bibitem[Azuma, 1967]{azuma1967weighted}
Azuma, K. (1967).
\newblock Weighted sums of certain dependent random variables.
\newblock {\em Tohoku Mathematical Journal}, 19(3):357--367.

\bibitem[Becker et~al., 1985]{becker1985adaptive}
Becker, A., Kumar, P.~R., and Wei, C.-Z. (1985).
\newblock Adaptive control with the stochastic approximation algorithm:
  Geometry and convergence.
\newblock {\em IEEE Transactions on Automatic Control}, 30(4):330--338.

\bibitem[Besbes et~al., 2014]{besbes2014stochastic}
Besbes, O., Gur, Y., and Zeevi, A. (2014).
\newblock Stochastic multi-armed-bandit problem with non-stationary rewards.
\newblock In {\em Advances in Neural Information Processing Systems},
  volume~27, pages 199--207.

\bibitem[Besson et~al., 2022]{besson2022efficient}
Besson, L., Kaufmann, E., Maillard, O.-A., and Seznec, J. (2022).
\newblock Efficient change-point detection for tackling piecewise-stationary
  bandits.
\newblock {\em Journal of Machine Learning Research}, 23(77):1--40.

\bibitem[Cao et~al., 2019]{cao2019nearly}
Cao, Y., Wen, Z., Kveton, B., and Xie, Y. (2019).
\newblock Nearly optimal adaptive procedure with change detection for
  piecewise-stationary bandit.
\newblock In {\em Proceedings of the Twenty-Second International Conference on
  Artificial Intelligence and Statistics}, volume~89 of {\em Proceedings of
  Machine Learning Research}, pages 418--427. PMLR.

\bibitem[Garivier and Moulines, 2011]{garivier2011upper}
Garivier, A. and Moulines, E. (2011).
\newblock On upper-confidence bound policies for switching bandit problems.
\newblock In {\em Proceedings of the 22nd International Conference on
  Algorithmic Learning Theory}, pages 174--188.

\bibitem[Goodwin et~al., 1981]{goodwin1981discrete}
Goodwin, G.~C., Ramadge, P.~J., and Caines, P.~E. (1981).
\newblock Discrete time stochastic adaptive control.
\newblock {\em SIAM Journal on Control and Optimization}, 19(6):829--853.

\bibitem[Hoeffding, 1963]{hoeffding1963probability}
Hoeffding, W. (1963).
\newblock Probability inequalities for sums of bounded random variables.
\newblock {\em Journal of the American Statistical Association},
  58(301):13--30.

\bibitem[Horn and Johnson, 2013]{horn2013matrix}
Horn, R.~A. and Johnson, C.~R. (2013).
\newblock {\em Matrix Analysis}.
\newblock Cambridge University Press, Cambridge, 2 edition.

\bibitem[Howard et~al., 2020]{howard2020timeuniform}
Howard, S.~R., Ramdas, A., McAuliffe, J., and Sekhon, J. (2020).
\newblock Time-uniform {C}hernoff bounds via nonnegative supermartingales.
\newblock {\em Probability Surveys}, 17:257--317.

\bibitem[Huang et~al., 2026]{huang2026detection}
Huang, Y.-H., Gerogiannis, A., Bose, S., and Veeravalli, V.~V. (2026).
\newblock Detection augmented bandit procedures for piecewise stationary
  {MAB}s: A modular approach.
\newblock {\em IEEE Transactions on Information Theory}.


\bibitem[Kumar and Praly, 1987]{kumar1987self}
Kumar, P.~R. and Praly, L. (1987).
\newblock Self-tuning trackers.
\newblock {\em SIAM Journal on Control and Optimization}, 25(4):1053--1071.

\bibitem[Lai, 1986]{lai1986asymptotically}
Lai, T.~L. (1986).
\newblock Asymptotically efficient adaptive control in stochastic regression
  models.
\newblock {\em Advances in Applied Mathematics}, 7(1):23--45.

\bibitem[Lai and Robbins, 1985]{lai1985asymptotically}
Lai, T.~L. and Robbins, H. (1985).
\newblock Asymptotically efficient adaptive allocation rules.
\newblock {\em Advances in Applied Mathematics}, 6(1):4--22.

\bibitem[Lai and Wei, 1987]{lai1987asymptotically}
Lai, T.~L. and Wei, C.-Z. (1987).
\newblock Asymptotically efficient self-tuning regulators.
\newblock {\em SIAM Journal on Control and Optimization}, 25(2):466--481.

\bibitem[Liu et~al., 2018]{liu2018change}
Liu, F., Lee, J., and Shroff, N.~B. (2018).
\newblock A change-detection based framework for piecewise-stationary
  multi-armed bandit problem.
\newblock In {\em Proceedings of the AAAI Conference on Artificial
  Intelligence}, volume~32, pages 3651--3658.

\bibitem[Mukherjee and Maillard, 2019]{mukherjee2019distribution}
Mukherjee, S. and Maillard, O.-A. (2019).
\newblock Distribution-dependent and time-uniform bounds for piecewise i.i.d.
  bandits.
\newblock {\em arXiv preprint arXiv:1905.13159}.

\bibitem[Simchowitz and Foster, 2020]{simchowitz2020naive}
Simchowitz, M. and Foster, D. (2020).
\newblock Naive exploration is optimal for online {LQR}.
\newblock In {\em Proceedings of the 37th International Conference on Machine
  Learning}, volume 119 of {\em Proceedings of Machine Learning Research},
  pages 8937--8948. PMLR.

\bibitem[Singh et~al., 2024]{singhfinite}
Singh, R., Mete, A., Kar, A., and Kumar, P. R. (2024).
\newblock Finite time logarithmic regret bounds for self-tuning regulation.
\newblock In {\em Proceedings of the 41st International Conference on Machine
  Learning}, volume 235 of {\em Proceedings of Machine Learning Research},
  pages 45571--45636. PMLR.

\bibitem[Tartakovsky et~al., 2014]{tartakovsky2014sequential}
Tartakovsky, A., Nikiforov, I., and Basseville, M. (2014).
\newblock {\em Sequential Analysis: Hypothesis Testing and Changepoint
  Detection}.
\newblock CRC Press.

\bibitem[Tropp, 2011]{tropp2011freedman}
Tropp, J.~A. (2011).
\newblock Freedman's inequality for matrix martingales.
\newblock {\em Electronic Communications in Probability}, 16:262--270.

\bibitem[Veeravalli and Banerjee, 2014]{veeravalli2014quickest}
Veeravalli, V.~V. and Banerjee, T. (2014).
\newblock Quickest change detection.
\newblock {\em Academic Press Library in Signal Processing}, 3:209--255.

\bibitem[Ville, 1939]{ville1939etude}
Ville, J. (1939).
\newblock {\em {\'E}tude critique de la notion de collectif}.
\newblock Gauthier-Villars, Paris.

\bibitem[Xie et~al., 2021]{xie2021sequential}
Xie, L., Zou, S., Xie, Y., and Veeravalli, V.~V. (2021).
\newblock Sequential (quickest) change detection: Classical results and new
  directions.
\newblock {\em IEEE Journal on Selected Areas in Information Theory},
  2(2):494--514.

\bibitem[Zhou et~al., 2020]{zhou2020near}
Zhou, H., Wang, L., Varshney, L.~R., and Lim, E.-P. (2020).
\newblock A near-optimal change-detection based algorithm for
  piecewise-stationary combinatorial semi-bandits.
\newblock In {\em Proceedings of the AAAI Conference on Artificial
  Intelligence}, volume~34, pages 6933--6940.

\end{thebibliography}
\end{document}